\RequirePackage{fix-cm}
\documentclass[twocolumn,natbib]{svjour3}          
\smartqed  
\usepackage{graphicx}
\usepackage{amsmath}
\usepackage{amsfonts}
\usepackage{booktabs}       
\usepackage{xcolor}
\usepackage{multirow} 
\usepackage{float} 

\def\ie{{\em i.e.}}
\def\eg{{\em e.g.}}

\newcommand{\secref}[1]{Section \ref{#1}}

\usepackage{balance}

\usepackage[colorlinks, citecolor=blue]{hyperref}

\usepackage{stfloats}

\usepackage{amsmath}
\usepackage{amsfonts}
\usepackage{enumerate}
\usepackage{pythonhighlight}

\def\ie{{\em i.e.}}
\def\eg{{\em e.g.}}

\usepackage{ragged2e}
\usepackage{booktabs}
\usepackage{color}
\usepackage{multirow}
\usepackage{bm}
\usepackage{bbm}

\usepackage{xcolor}
\usepackage{balance}

\usepackage{url}
\usepackage{amssymb}

\usepackage[noend]{algpseudocode}
\usepackage{algorithmicx,algorithm}

\usepackage{multirow}
\usepackage{threeparttable}
\usepackage{amsmath}

\usepackage{caption}

\usepackage{makecell}

\graphicspath{{./fig/}}

\journalname{}
\begin{document}

\title{Single Image to Textured 3D Object Generation in Frequency Domain: From Theory to Pipeline}


\author{Qisen Wang \and Yifan Zhao$^{*}$ \and Jia Li
}

\institute{
  $^{*}$ Correspondence should be addressed to Y. Zhao\\
        Qisen Wang \at
  State Key Laboratory of Virtual Reality Technology and Systems, SCSE \& QRI, Beihang University, Beijing 100191, China. \\
  \email{wangqisen@buaa.edu.cn}
  \and
  Yifan Zhao \at
  State Key Laboratory of Virtual Reality Technology and Systems, SCSE \& QRI, Beihang University, Beijing 100191, China.\\
  \email{zhaoyf@buaa.edu.cn}
  \and
  Jia Li \at
  State Key Laboratory of Virtual Reality Technology and Systems, SCSE \& QRI, Beihang University, Beijing 100191, China.\\
  \email{jiali@buaa.edu.cn}
}

\date{Received: date / Accepted: date}

\maketitle
\begin{figure*}[t]
    \centering
    \includegraphics[width=\textwidth]{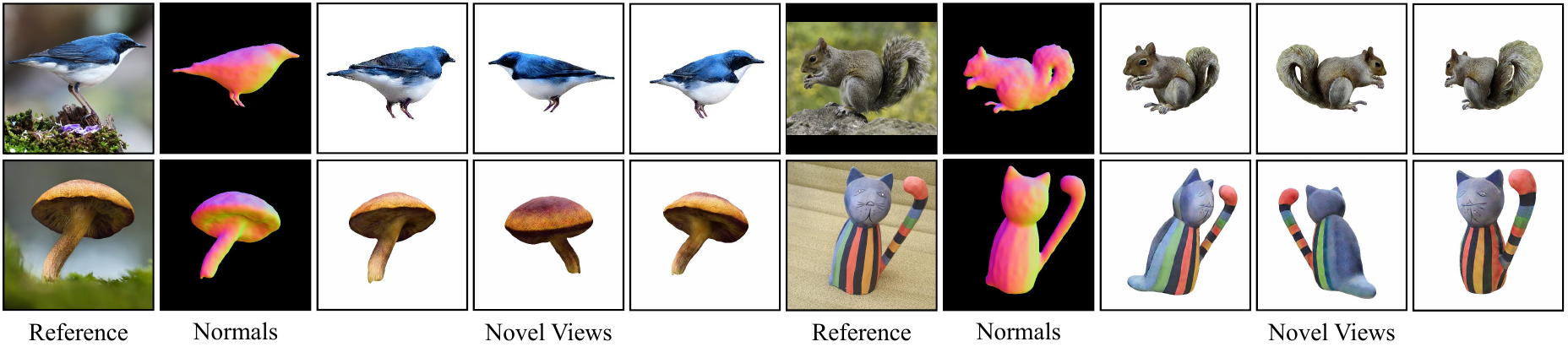}
    \caption{\textbf{Morpheus3D results of 3D object generation from any single unposed image in the wild.} See more results in \url{https://icvteam.github.io/Morpheus3D.html}. Code: \url{https://github.com/iCVTEAM/Morpheus3D}}
    \label{fig:abstract}
\end{figure*}

\begin{abstract}
Single-view 3D reconstruction, also known as image-to-3D, is a persistently challenging task due to the extreme lack of information. Recently, diffusion models pre-trained on large-scale datasets served as 2D priors are used to solve the ill-posed task but suffer from color deviation and view inconsistency, which can be curbed by using diffusion models fine-tuned with 3D annotated data served as 3D priors. However, 3D priors lack high-frequency details, which cannot be solved by direct complementation with 2D priors in spatial domain for introducing erroneous low-frequency 2D prior guidance. In this paper, we revisit the characteristics of different diffusion priors from the frequency perspective. Based on our observations, we theoretically present a unified framework of hybrid optimization using multiple diffusion priors in frequency domain. Under this framework, we further propose Morpheus3D, a pipeline of 3D object generation from any single unposed image in the wild. Morpheus3D enhances 3D prior with high-pass image-prompt 2D prior guidance to reconstruct high-quality 3D objects while effectively suppressing view inconsistency, low-frequency color deviation, and high-frequency lacking problems. Both quantitative and qualitative experiments on the public and our collected datasets with complex textures show that our method exhibits significant improvements in generation quality.
\keywords{Image-to-3D \and Diffusion Model \and Frequency Domain}
\end{abstract}

\section{Introduction}\label{sec:intro}

Human brains show their distinctive advantages in imagining the appearance of a complete object only using a single image, which is hard for computers due to the lack of 3D information. Benefiting from the recent emergence of 2D generative models \citep{stylegan, sd} and the rapid development of extracting implicit 3D information from 2D generative models \citep{sds,sjc,vsd}, single-view 3D reconstruction \citep{chen2019learning, xu2019disn} has triggered exploration by many researchers \citep{ganto3d, realfusion}. Diffusion models demonstrate high-quality 2D image generation capabilities \citep{diffusion2015, ddpm, sde, sd, analytic_dpm, diffusion_beat_gan}. On this basis, many high-resolution pre-trained diffusion generative models \citep{sd, flux} trained on large-scale datasets served as 2D priors \citep{magic123} have emerged, which are generalizable and provide new probabilities for the image-to-3D task \citep{neurallift, realfusion, makeit3d}.

However, most works based on 2D priors \citep{neurallift, realfusion, makeit3d, magic123} suffer from color deviation and view inconsistency such as multiple faces (namely Janus Problem) due to the color and perspective biases of 2D priors \citep{pfd, ipadapter}. 
To this end, 3D priors \citep{zero123, 3dim, syncdreamer} are fine-tuned on large pre-trained vision models \citep{sd} with 3D annotated data \citep{objaverse}, and can generate discrete novel views of the same object. But, experimental results indicate that although fine-tuned 3D priors can provide realistic and convincing guidance on the 3D structural outline, there are indeed issues with texture details compared to 2D priors. 
Different types of diffusion priors have their own pros and cons in guiding the optimization of 3D representations. Therefore, how to \textit{\textbf{decouple guiding characteristics}} and \textbf{\textit{organically combine the advantages}} of multiple diffusion priors for optimization has become a recognized challenge.

Recently, researchers \citep{magic123} have attempted to linearly weigh 2D and 3D priors in spatial domain to extract their different advantages, \ie high-quality texture details of 2D priors and convincing object consistency of 3D priors. 
Although combining priors in spatial domain brings more realistic 3D structures and more detailed textures compared to previous works that only utilize 2D/3D priors, there remains a considerable amount of color deviation and the Janus Problem persists.
This is because a straight-forward linear weighting of diffusion priors in spatial domain is used to guide 3D representation optimization, without disentangling the different characteristics of diffusion priors from a more fundamental perspective. Instead, an engineered trade-off is chosen, whilst introducing the erroneous low-frequency guidance of 2D priors. The joint guidance in spatial domain overlooks the \textit{\textbf{coupling properties of each prior}}, leading to the \textit{\textbf{blending of guidance characteristics}} for different diffusion priors.

To this end, a promising approach is to explore leveraging the orthogonal properties from frequency perspective to disentangle the guiding characteristics of diffusion priors and heuristically optimize 3D representations in a targeted manner with orthogonalized frequency component guidance. So that we can lift the high-frequency components of 2D priors to complement the guidance of 3D priors, thus avoiding erroneous prior guidance components in frequency domain.
However, while prevailing score distillation techniques \citep{sds, sjc, vsd} effectively tackle the bridging problem between 3D representation optimization and diffusion priors from the perspective of image distribution, their theoretical foundations primarily focus on the spatial domain. 
This limitation hinders the resolution of the aforementioned challenges. So here are some key open questions for the mentioned dilemmas:
\begin{itemize}
    \item Can the "bridge" between score distillation gradients and 3D representations be extended from spatial to frequency domain?
    \item Can the orthogonalized frequency characteristics from multiple diffusion priors be integrated in a decoupled manner for the optimization of 3D representations?
\end{itemize}

To solve these problems, we revisit the Janus Problem of text-prompt 2D priors, the low-frequency color deviation of image-prompt 2D priors, and the high-frequency lack of image-prompt 3D priors.
Based on our analysis of diffusion priors, we theoretically present a unified score distillation framework of hybrid optimization using multiple diffusion priors in frequency domain to extract unique advantages of both priors and heuristically control prior guidance from the frequency perspective. 
Under our framework and explorations, we propose Morpheus3D, a two-stage pipeline to reconstruct 3D objects from any single unposed image in the wild while alleviating the above guidance problems of multiple diffusion priors in frequency domain.
In the first stage, we optimize shape and coarse textures using 3D prior only. In the second stage, we boost high-frequency details using high-pass image-prompt 2D prior with fine-tuning to enhance the guidance of 3D prior. 
To the best of our knowledge, we are the first to utilize image-prompt 2D priors controlled by edges from coarse novel views without textual inversion \citep{textual_inversion} or captioning \citep{blip2} of using text-prompt. It can effectively suppress high information entropy of text-prompt and provide sufficient high-frequency guidance. 
Morpheus3D achieves high-quality textured 3D object generation by effectively suppressing the view inconsistency, lacking high-frequency, and color deviation problems.
The results of Morpheus3D are shown in Fig. \ref{fig:abstract}.

Overall, our contributions can be summarized as:

\begin{itemize}
    \item \textit{Analysis of priors from frequency perspective}. We reexamine the issue of view inconsistency in text-prompt 2D priors, the low-frequency color deviation in image-prompt 2D priors, as well as the high-frequency lacking of 3D priors.
    \item \textit{Unified frequency optimization framework}. We introduce a unified theoretical framework for hybrid optimization that leverages multiple diffusion priors in frequency domain, bridging score distillation with 3D representation optimization from frequency perspective.
    \item \textit{First attempt of image-prompt 2D prior.} We first delve into extracting 3D implicit information from image-based 2D priors to reduce the high information entropy found in text-based 2D priors and boost effective high-frequency guidance.
    \item \textit{Efficient optimization-based pipeline}. We propose Morpheus3D, an advanced two-stage pipeline for generating high-quality textured 3D objects from a single unposed image captured in diverse real-world settings. Our pipeline effectively addresses challenges such as view inconsistency, lack of high-frequency details, and color deviations mentioned above.
\end{itemize}

The remainder of this paper is organized as follows:~\secref{sec:relatedworks} reviews related works of this paper. ~\secref{sec:preliminaries} briefly summarizes the prerequisite technologies needed for the method part. ~\secref{sec:method} is threefold: ~\secref{sec:analysis_priors} first analyzes the different diffusion priors from frequency perspective. ~\secref{sec:framework_hybrid_optimization} theoretically present the unified framework of 3D representations optimization. ~\secref{sec:pipeline} explores extracting 3D implicit information from image-prompt priors and proposes the Morpheus3D pipeline of high-quality 3D object generation from any single unposed image in the wild. ~\secref{sec:experiments} provides qualitative and quantitative experiments with detailed analysis. ~\secref{sec:conclusion} finally concludes this paper.

\begin{figure}[t]
\begin{center}
\includegraphics[width=1\columnwidth]{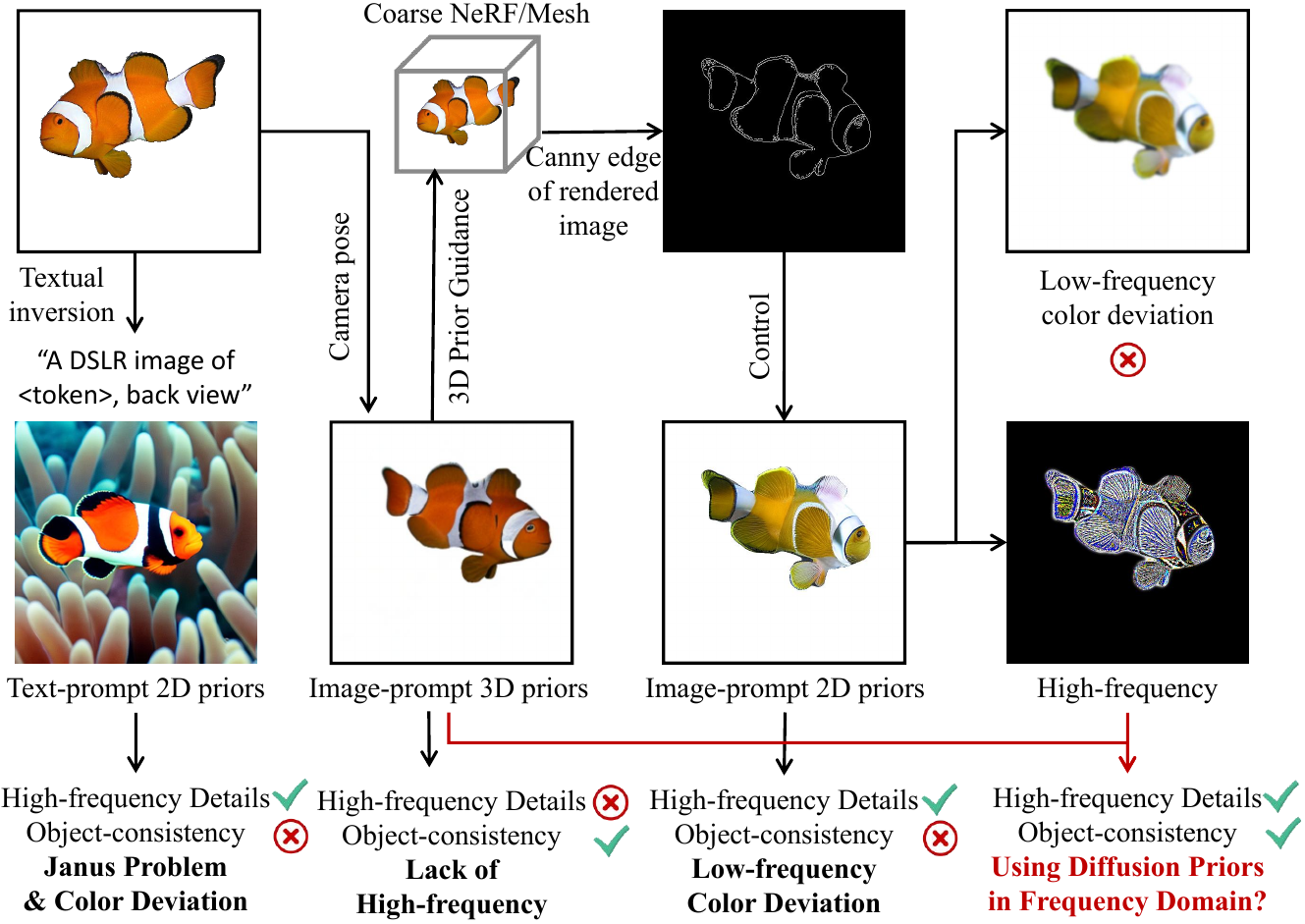}
\caption{\textbf{Properties of different diffusion priors.} \textbf{Text-prompt 2D priors} in previous works with textual inversion, like Stable Diffusion \citep{sd}, suffer from the Janus Problem due to the object-inconsistency guidance caused by high information entropy. \textbf{Image-prompt 3D priors}, like Zero-1-to-3 \citep{zero123}, take advantages of object-consistency but lack high-frequency details. \textbf{Image-prompt 2D priors}, like Prompt Free Diffusion \citep{pfd}, controlled by Canny edges of coarse rendered images, show the high-frequency guidance ability, but take the object-inconsistency of low-frequency color deviation. \textbf{Our work} aims to utilize the high-pass image-prompt 2D priors guidance to enhance 3D priors.}
\label{fig:short_priors}
\end{center}
\end{figure}

\section{Related Work}
\label{sec:relatedworks}

\textbf{Single-view 3D reconstruction}. 
With the development of deep learning in 3D vision \citep{nerf, dmtet, scenedreamer}, which often requires multi-view data for training \citep{neural_volumes, mipnerf, nerfwild}, researchers focus on reconstructing 3D information from a single view. Learning 3D representations from a single image \citep{sinnerf, pixelnerf, 3dgan, pixel2mesh_pami, hmr_survey} is a challenging problem, which lacks precise geometric structures and textures. In the early days, most works model objects with specific classes \citep{kar2015category, 3dbody}. Therefore, some works \citep{kanazawa2018learning, morphable2013} explore learning-based optimization methods on object-centric datasets.
Although the scale of 3D data has been expanding recently~\citep{objaverse} and feed-forward methods \\ \citep{lrm, lgm, mvdream, ln3diff, craftsman3d} have become increasingly popular due to their efficiency in 3D generation, the scale of 3D data is still not comparable to that of 2D data. 
Therefore, the scarcity of 3D data means that although recent pre-trained feed-forward 3D generation models perform well on some image inputs, they may still be unable to effectively generate high-quality 3D objects for single image inputs that are out-of-domain and in-the-wild.
Therefore, with the emergence of large pre-trained visual models \citep{clip, stylegan, sd}, many works aim to extract prior knowledge to fill in missing information and attempt to reconstruct 3D objects from a single unposed image in the wild \citep{neurallift, realfusion, makeit3d, magic123}.

\textbf{Learning 3D representations from pre-trained models}.
With the rapid development of large visual models \citep{clip, stylegan, sd} pre-trained on large-scale datasets \citep{laion5b}, extracting implicit 3D information from pre-trained models to learn 3D representations that conform to real-world data distributions has attracted research interest \citep{sds, sjc} due to the lack of 3D annotation data. Earlier, some works \citep{dietnerf, clip_mesh} use CLIP \citep{clip} as a prior to guide the learning of 3D representations. With generative models like diffusion models exhibiting satisfactory performance, SDS \citep{sds} and its contemporary work Score Jacobian Chaining \citep{sjc}, which lift diffusion priors to 3D representations, have been widely studied and used for text-to-3D \citep{text2room, magic3d, dreambooth3d, text2mesh} and image-to-3D \citep{neurallift, realfusion, makeit3d, magic123} generation. In order to solve the over-saturation and over-smoothness of SDS, \citeauthor{vsd} recently propose VSD and replace the standard Gaussian noise in SDS with a LoRA \citep{lora} fine-tuned from priors, demonstrating higher fidelity, more detailed textures, and high-frequency information. However, the theoretical exploration and application of VSD in frequency domain for image-to-3D have not yet been explored.

\textbf{Pre-trained diffusion priors for single-view 3D reconstruction}.
The diffusion models \citep{diffusion2015, ddpm, sde, analytic_dpm, ddim, IHDM, BDM}, as novel probabilistic generative models, have shown great performance in the field of 2D image generation \citep{sd, sde, ddpm}. As a result, many large visual pre-trained models based on the principle of diffusion denoising have emerged \citep{sd, dalle2}, using large-scale 2D images \citep{laion5b} as training data to achieve stunning generation effects. In more and more downstream tasks \citep{sds, realfusion}, researchers attempt to use pre-trained diffusion models as priors to achieve previously difficult tasks in a zero-shot manner. The classic diffusion priors used in image-to-3D works are based on text-prompt \citep{realfusion, neurallift, makeit3d}, which shows unrealistic guidance direction due to the ambiguity of textual semantics. Recently, researchers have attempted to address this dilemma by training the image-prompt diffusion models to better match the input image semantics \citep{pfd}. The above diffusion priors can be collectively called 2D priors \citep{magic123}, due to 2D training images. Diffusion priors trained on large-scale 3D data \citep{objaverse}, served as 3D priors, are also developing rapidly \citep{zero123, syncdreamer} and can generate object-consistent multi-view images from a single unposed image in the wild. However, these diffusion models have their own unique problems as the prior guidance for optimizing 3D representations, especially the low-frequency color deviation and high-frequency lack problems in frequency domain.

\section{Preliminaries}
\label{sec:preliminaries}

\subsection{Diffusion Models}
\label{subsec:preliminaries_diffusion}

Diffusion models \citep{diffusion2015, ddpm, diff_survey}, as a type of generative model, involve the forward process of adding noise and the reverse process of denoising and generating data. The forward process adds noise on clean data $\mathbf{x}_0$, and the timestep of $\mathbf{x}_0$ is denoted by $t = 0$. For $0 < t < 1$, the forward process can be represented as $q_t(\mathbf{x}_t) \sim \mathcal{N}(\mathbf{x}_t; \alpha_t \mathbf{x}_0, \sigma^2 \mathbf{I})$ and it takes $q_t(\mathbf{x}_t) = \int q_t(\mathbf{x}_t | \mathbf{x}_0)q_0(\mathbf{x}_0) \\ \mathbf{d}\mathbf{x}_0$. 
When $t = 1$,  $\mathbf{x}_1$ is the standard Gaussian noise $\boldsymbol{\epsilon}\sim\mathcal{N}(\mathbf{0}, \mathbf{I})$. The reverse process aims to learn a distribution $p_0(\mathbf{x}_0)$ close to the real-world distribution $q_0(\mathbf{x}_0)$. It starts from sampling a standard Gaussian noise $\boldsymbol{\epsilon}$ and learns a noise prediction model $\boldsymbol{\epsilon}_{\phi}(\mathbf{x}_t, t)$ to progressively denoise $\mathbf{x}_1$ to the clean data $\mathbf{x}_0$. The reverse process is optimized by minimizing $\mathcal{L}_{\rm{diff}} = \mathbb{E}_{t, \boldsymbol{\epsilon}}[\omega(t)\Vert \boldsymbol{\epsilon}_{\phi}(\mathbf{x}_t, t) - \boldsymbol{\epsilon}\Vert_2^2]$ or $\mathcal{L}_{\rm{simple}} = \mathbb{E}_{t, \boldsymbol{\epsilon}}[\Vert \boldsymbol{\epsilon}_{\phi}(\mathbf{x}_t, t) - \boldsymbol{\epsilon}\Vert_2^2]$, where the latter are more often used and can achieve higher generation quality. The diffusion models also takes relation with score function that $\nabla_{\mathbf{x}_t}\log{q_t(\mathbf{x}_t)} \approx \nabla_{\mathbf{x}_t}\log{p_t(\mathbf{x}_t)} \approx - (\mathbf{x}_t - \alpha_t \hat{\mathbf{x}}_0) / \sigma_t^2 = - \boldsymbol{\epsilon}_{\phi} / \sigma_t$ \citep{sde}. Due to the great generative ability of diffusion models, diffusion models pre-trained on large-scale datasets as diffusion priors have surfaced \citep{sd}, opening up new avenues for the single-view 3D reconstruction task.

\subsection{Variational Score Distillation}
\label{subsec:preliminaries_VSD}
Score distillation aims to extract 3D information from diffusion priors to guide the learning of 3D representations \citep{sds, sjc, vsd}. An increasing number of works (including text-to-3D and image-to-3D) \citep{realfusion, magic123, z_fantasia} utilize score distillation to guide 3D representations learning and achieve continuous progress in generation effects, demonstrating the potential of score distillation. Variational Score Distillation (VSD) \citep{vsd}, as the state-of-the-art method, models a 3D representations distribution $\mu$ and starts from the reverse KL divergence $\mathbb{E}_{\mathbf{c}}[D_{KL}(q_0^{\mu} \Vert p_0^*)]$ of the diffusion prior $p_0^*(\mathbf{x}_0 | \mathbf{c}, \mathbf{y})$ and the rendered images distribution $q_0^{\mu}(\mathbf{x}_0 | \mathbf{c}, \mathbf{y})$. The above optimization objective can be transformed into $\mathbb{E}_{t, \mathbf{c}}[(\sigma_t/\alpha_t)\omega(t)D_{KL} \\ (q_t^{\mu} \Vert p_t^{*})]$, where $\alpha_t, \sigma_t$ are the degradation coefficients and $\omega(t)$ is a time-dependent weighting function of the prior $p_0^*$. It can be solved in the 2-Wasserstein space by simulating the gradient flow $\frac{\mathbf{d}\theta_{\tau}}{\mathbf{d}\tau} = -\mathbb{E}_{t, \mathbf{c}, \boldsymbol{\epsilon}}[\omega(t)(\boldsymbol{\epsilon}_* - \boldsymbol{\epsilon}_{\phi})\frac{\mathbf{x}_0}{\theta_{\tau}}]$, where $\boldsymbol{\epsilon}_*, \boldsymbol{\epsilon}_{\phi}$ are predicted noise of the prior $p_0^*$ and the fine-tuned model $q_0^{\mu}$ from $p_0^*$. VSD maintains $N_{\theta}$ 3D parameters $\{\theta\}_{i=1}^{N_{\theta}}$ as particles to represent the distribution $\mu$. However, $N_{\theta} = 1$ shows satisfactory performance in generation quality \citep{vsd}, so we set $N_{\theta} = 1$ in our work. The fine-tuning process of $\phi$ in VSD is minimizing the standard diffusion objective $\min_{\phi}\{\mathbb{E}_{t,\mathbf{c}, \boldsymbol{\epsilon}}[\Vert \boldsymbol{\epsilon}_{\phi} - \boldsymbol{\epsilon} \Vert_2^2]\}$, which is also utilized in our work.

\begin{figure}[t]
\begin{center}
\includegraphics[width=1\columnwidth]{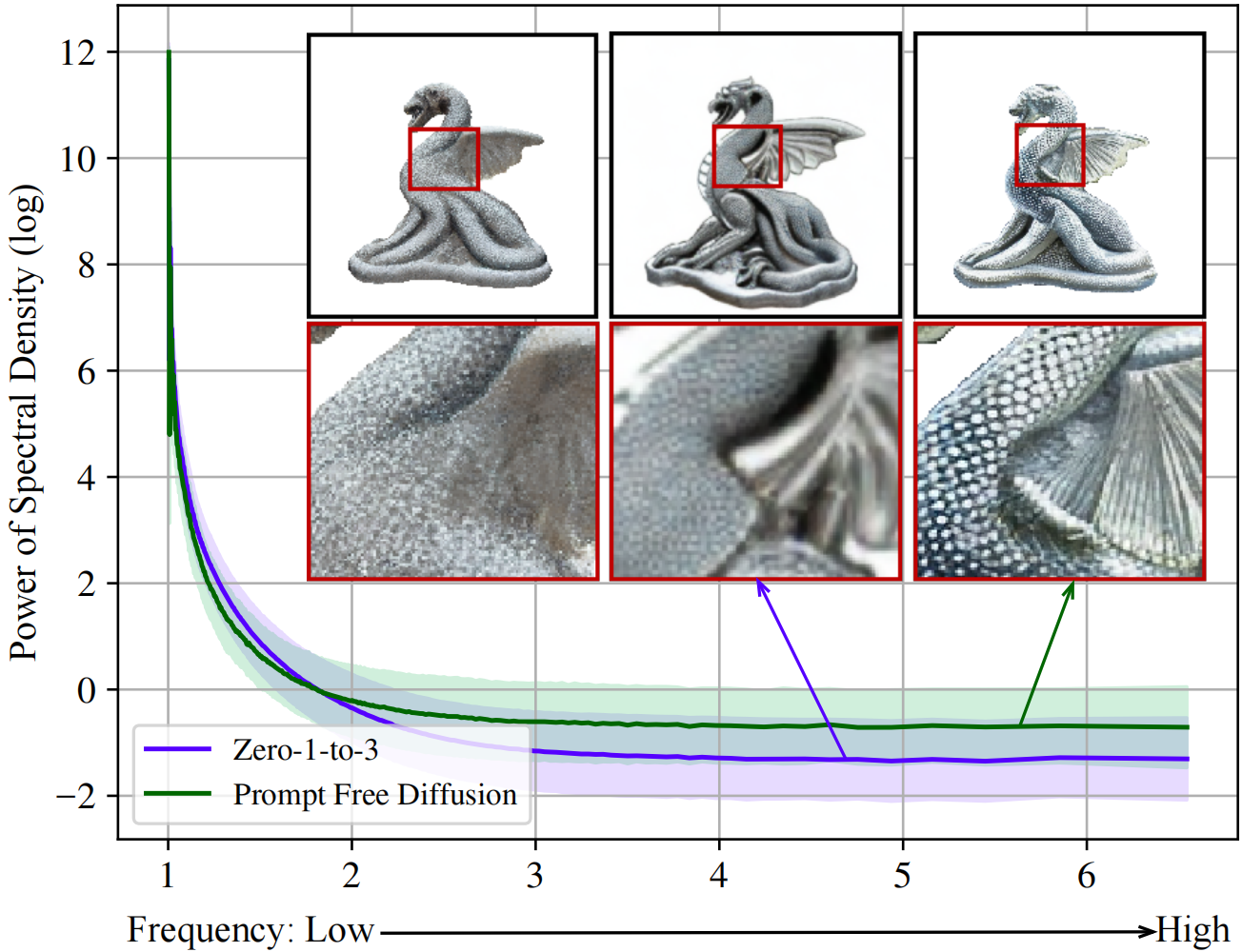}
\caption{\textbf{Power spectral density of Zero-1-to-3 and PFD}, obtained by 100 images sampled uniformly in azimuth of each case on Realfusion15 \citep{realfusion}. \textbf{Left:} the rendered image guided by Zero-1-to-3 \citep{zero123} with SDS \citep{sds}; \textbf{Middle:} the generated image of Zero-1-to-3; \textbf{Right:} the generated image of PFD \citep{pfd} controlled by the Canny edge of the left image. In order to better display the high-frequency part, we renormalize the frequency $f$ to $1-\log(1-f)$.}
\label{fig:energy_freq}
\end{center}
\end{figure}

\section{Method}
\label{sec:method}

In this section, we first revisit the shortcomings of diffusion priors from the frequency perspective. To extract the unique advantages of different priors, we present a unified framework of hybrid optimization using multiple diffusion priors with VSD in frequency domain. Under our framework, we present a pipeline of 3D object generation, which utilizes high-pass 2D prior guidance to enhance the 3D prior.

\subsection{Analysis of Different Diffusion Priors}\label{sec:analysis_priors}

\textbf{Text-prompt 2D priors.} 
As shown in Fig. \ref{fig:short_priors}, since the 3D representation distribution of the reference image is a subset of the corresponding text-prompt, the text-prompt 2D priors (like Stable Diffusion), used in most previous works will boost information entropy and cause misalignment of prior guidance during the score distillation process, showing the Janus Problem in single-view 3D reconstruction, although the text-prompt 2D priors can provide sufficient high-frequency details. To curb the Janus Problem, we give up the text-prompt priors for reconstruction.

\begin{figure}[t]
\begin{center}
\includegraphics[width=0.8\columnwidth]{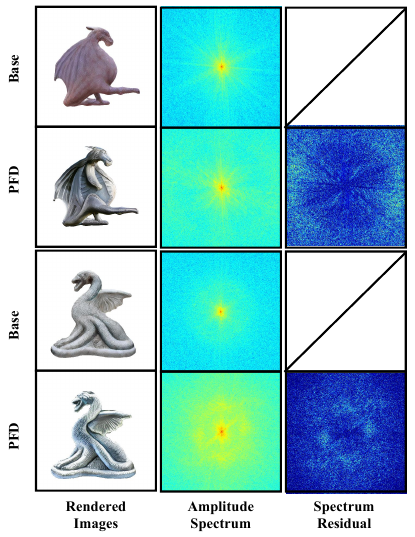}
\caption{\textbf{Base:} the coarse rendered image guided by Zero-1-to-3 with SDS; \textbf{PFD:} the generated image of PFD controlled by the Canny edge of the coarse rendered image. Amplitude spectrum of Base and PFD (min-max normalization for better visualization), and the spectrum residual (times 3 for better visualization) to Base from the rendered image.}
\label{fig:freq_vis}
\end{center}
\end{figure}

\textbf{Image-prompt 3D priors.}
Although image-prompt 3D priors, like Zero-1-to-3 \citep{zero123}, can provide fairly strong 3D-consistence guidance for reconstruction, shown in Fig. \ref{fig:short_priors}, it suffers from lacking high-frequency and over-smoothing, which can be proved by the Power Spectral Density (PSD) of priors shown in Fig. \ref{fig:energy_freq}. In subsequent experiments, it can be found that although VSD attempts to boost high-frequency details compared to SDS, it still cannot work well due to the disadvantages of image-prompt 3D priors.

\textbf{Image-prompt 2D priors.}
Due to the high information entropy of text-prompt 2D priors and the high-frequency lack of image-prompt 3D priors, we introduce image-prompt 2D priors, like Prompt-Free Diffusion (PFD) \citep{pfd}, which greatly suppress the Janus Problem and enhance textures of objects. To extract the implicit 3D information of image-prompt 2D priors, the Canny edges from coarse 3D representations are used to implicitly control the camera pose through ControlNet \citep{controlnet}. Image-prompt 2D priors can provide sufficient high-frequency texture details, shown in Fig. \ref{fig:energy_freq} and Fig. \ref{fig:freq_vis} while blocking the Janus Problem, but it suffers from the low-frequency color deviation problem, as shown in Fig. \ref{fig:short_priors}.

\subsection{Hybrid Optimization using Multiple Diffusion Priors with VSD in Frequency Domain}\label{sec:framework_hybrid_optimization}

Following the observations in Sec. \ref{sec:analysis_priors}, we aim to take unique frequency advantages of diffusion priors. Since the over-smoothing of SDS will affect the high-frequency guidance, we extend VSD to the image-to-3D task and further provide the representation in frequency domain. Finally, we present a unified framework of hybrid optimization using multiple diffusion priors with VSD in frequency domain.

\textbf{Symbol definition.}
Given a reference image $\mathbf{y}$, define the corresponding 3D representation distribution as $\mu(\theta|\mathbf{y})$. For the camera pose $\mathbf{c}$, define the rendered image as $\mathbf{x}_0 = g(\theta, \mathbf{c})$, and the corresponding distribution $q_0^{\mu}(\mathbf{x}_0 | \mathbf{y}, \mathbf{c})$. We denote $\mathbf{x}_0, \mathbf{y}$ in frequency domain as $\mathbf{u}_0=\mathbf{V}^{\rm{T}} \mathbf{x}_0, \mathbf{u}_y=\mathbf{V}^{\rm{T}} \mathbf{y}$, where $\mathbf{V}^{\rm{T}}$ is orthogonal Discrete Cosine Transform (DCT), and the diffusion prior as $p_0^*$.

\textbf{Optimization in spatial domain.}
Since VSD works for text-to-3D, it lacks constraints on the reference perspective. Besides, although VSD uses $N_{\theta}$ 3D parameters $\{\theta\}_{i=1}^{N_{\theta}}$, $N_{\theta}=1$ shows satisfactory performance and most of experiments in VSD are only set to $N_{\theta}=1$ \citep{vsd}. Thus, we also set $N_{\theta}=1$. Following the modeling of NeuralLift \citep{neurallift} that whether camera poses are on the novel views follows a Bernoulli distribution $\zeta_c \sim Bernoulli(\lambda_c)$, where $\zeta_c = 1$ represents novel views, $\zeta_c = 0$ represents the reference view, and the rendered image on reference view follows a Gaussian distribution $q_0^{\mu}(\mathbf{x}_0 | \mathbf{y}, \mathbf{c}, \zeta_c=0) \sim \mathcal{N}(\mathbf{x}_0; \mathbf{y}, \sigma^2 \mathbf{I})$. To obtain high-quality 3D representations, we propose to optimize the distribution $\mu$ by minimizing negative log-likelihood $-\log{q_0^{\mu}(\mathbf{x}_0|\mathbf{y},\mathbf{c},\zeta_c=0)}$ for reference consistency and reverse KL-divergence \\
$D_{KL}(q_0^{\mu}(\mathbf{x}_0|\mathbf{y},\mathbf{c}) \parallel p_0^*(\mathbf{x}_0|\mathbf{y},\mathbf{c}))$ for novel views mode-seeking, which can be represented as
\begin{equation} \label{eq:min_spatial}
    \begin{aligned}
        &\min_{\mu}\{P(\zeta_c=0)\mathbb{E}_{\theta\sim\mu}[-\log{q_0^{\mu}(\mathbf{x}_0|\mathbf{y},\mathbf{c},\zeta_c=0)}] \\
        &+ P(\zeta_c=1)\mathbb{E}_{\mathbf{c}}[D_{KL}(q_0^{\mu}(\mathbf{x}_0|\mathbf{y},\mathbf{c}) \parallel p_0^*(\mathbf{x}_0|\mathbf{y},\mathbf{c}))]\}.
    \end{aligned}
\end{equation}
The gradient flow of the optimization objective in Eq. \ref{eq:min_spatial} is as follows. The detailed proof is in the Appendix.
\begin{proposition}\label{proposition:VSD_in_spatial}
    Given the optimization objective of Eq. \ref{eq:min_spatial} in spatial domain, the corresponding gradient flow is
    \begin{equation} \label{eq:spatial_gradient_flow}
        \begin{aligned}
            \frac{\mathbf{d}\theta_{\tau}}{\mathbf{d}\tau}&= - \{ \frac{1 - \lambda_c}{\sigma^2} \nabla_{\theta} \parallel \mathbf{x}_0 - \mathbf{y} \parallel_2^2 \\
            &+ \lambda_c \mathbb{E}_{t,\mathbf{c},\boldsymbol{\epsilon}}\left[\omega(t) (\boldsymbol{\epsilon}_* - \boldsymbol{\epsilon}_{\phi}) \frac{\partial{\mathbf{x}_0}}{\partial{\theta}} \right] \},
        \end{aligned}
    \end{equation}
    where $\omega(t)$ is time-dependent weighting of $p_0^*$, and $\boldsymbol{\epsilon}_*, \boldsymbol{\epsilon}_{\phi}$ are predicted noise of $p_0^*$ and its fine-tuned model.
\end{proposition}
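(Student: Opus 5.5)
The plan is to treat the two terms of Eq.~\ref{eq:min_spatial} separately and add their gradient flows, weighted by $P(\zeta_c=0)=1-\lambda_c$ and $P(\zeta_c=1)=\lambda_c$. Because $N_{\theta}=1$, the distribution $\mu$ is a single particle (a Dirac mass at $\theta_{\tau}$). The Wasserstein gradient flow on $\mu$ therefore reduces to an ordinary gradient flow $\frac{\mathbf{d}\theta_{\tau}}{\mathbf{d}\tau} = -\nabla_{\theta}\mathcal{L}(\theta_{\tau})$ on the particle, where $\mathcal{L}$ is the objective evaluated at that particle. Linearity of the gradient then lets us compute the two contributions independently.

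\textbf{Reference term.} By the NeuralLift modeling, $q_0^{\mu}(\mathbf{x}_0|\mathbf{y},\mathbf{c},\zeta_c=0)=\mathcal{N}(\mathbf{x}_0;\mathbf{y},\sigma^2\mathbf{I})$. Hence
\begin{equation*}
-\log q_0^{\mu}(\mathbf{x}_0|\mathbf{y},\mathbf{c},\zeta_c=0) = \frac{1}{2\sigma^2}\Vert \mathbf{x}_0-\mathbf{y}\Vert_2^2 + \mathrm{const}.
\end{equation*}
Here $\mathbf{x}_0=g(\theta,\mathbf{c})$ with $\mathbf{c}$ the reference pose, and the constant does not depend on $\theta$. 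Differentiating gives $\frac{1}{2\sigma^2}\nabla_\theta\Vert\mathbf{x}_0-\mathbf{y}\Vert_2^2$. The factor $1/2$ is absorbed into the flow's time rescaling, or equivalently into the Gaussian normalization convention, which gives the stated coefficient $(1-\lambda_c)/\sigma^2$. I will note this convention explicitly rather than hide it.

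\textbf{Novel-view term.} This is the main step, and I would reduce it to the VSD result recalled in Section~\ref{subsec:preliminaries_VSD}. First, rewrite $D_{KL}(q_0^{\mu}\Vert p_0^*)$ as the time-integrated objective $\mathbb{E}_t[(\sigma_t/\alpha_t)\omega(t)D_{KL}(q_t^{\mu}\Vert p_t^*)]$, which shares its global optimum. Next, take the variation with respect to $\theta$ via the reparametrization $\mathbf{x}_t=\alpha_t g(\theta,\mathbf{c})+\sigma_t\boldsymbol{\epsilon}$. This yields
\begin{equation*}
\mathbb{E}_{t,\mathbf{c},\boldsymbol{\epsilon}}\!\left[\omega(t)\,\sigma_t\left(\nabla_{\mathbf{x}_t}\log q_t^{\mu}-\nabla_{\mathbf{x}_t}\log p_t^*\right)\tfrac{\partial\mathbf{x}_0}{\partial\theta}\right],
\end{equation*}
where the derivative of $\log q_t^{\mu}$ through its $\theta$-dependence vanishes in expectation, exactly as in the VSD derivation. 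Substituting the score–noise relation $\nabla\log p_t\approx-\boldsymbol{\epsilon}/\sigma_t$ from Section~\ref{subsec:preliminaries_diffusion} then gives $\omega(t)(\boldsymbol{\epsilon}_*-\boldsymbol{\epsilon}_{\phi})\frac{\partial\mathbf{x}_0}{\partial\theta}$. In this substitution, $\boldsymbol{\epsilon}_*$ comes from the prior $p_0^*(\cdot|\mathbf{y},\mathbf{c})$, and $\boldsymbol{\epsilon}_{\phi}$ comes from the fine-tuned model that approximates the score of $q_t^{\mu}$, trained with the diffusion loss stated earlier. The only difference from text-to-3D VSD is that both distributions are also conditioned on $\mathbf{y}$. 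Since $\mathbf{y}$ is fixed, it passes through unchanged.

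\textbf{Main obstacle and assembly.} The hardest part is justifying that the $\theta$-dependence of $q_t^{\mu}$ itself contributes zero, and that the particle/Wasserstein formulation reduces to the plain parameter gradient. For this I would invoke the VSD theorem directly (its Theorem on the gradient flow, specialized to $N_{\theta}=1$) rather than rederive it. Adding the two weighted contributions and negating gives Eq.~\ref{eq:spatial_gradient_flow}.
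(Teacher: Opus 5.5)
\textbf{Verdict.} Your overall route is the paper's route, and every step except one goes through. You weight the terms by $1-\lambda_c$ and $\lambda_c$, write the reference term as a Gaussian negative log-likelihood, and replace $D_{KL}(q_0^{\mu}\Vert p_0^*)$ by the time-integrated VSD objective, which shares only its minimizer. You then reduce the Wasserstein flow to the single particle and substitute the score--noise relation. Your $\sigma_t/\alpha_t$ bookkeeping and the particle reduction are fine.

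\textbf{The step that fails: the coefficient of the reference term.} The Gaussian model $\mathcal{N}(\mathbf{x}_0;\mathbf{y},\sigma^2\mathbf{I})$ gives $\frac{1-\lambda_c}{2\sigma^2}\nabla_\theta\Vert\mathbf{x}_0-\mathbf{y}\Vert_2^2$. The factor $\frac12$ cannot be absorbed into a time rescaling. Reparametrizing $\tau$ multiplies the entire right-hand side. So rescaling to bring the reference coefficient up to $\frac{1-\lambda_c}{\sigma^2}$ also turns the prior coefficient into $2\lambda_c$, which does not match Eq.~\ref{eq:spatial_gradient_flow}. Nor is a ``normalization convention'' equivalent. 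Obtaining $1/\sigma^2$ from $-\log\mathcal{N}$ needs variance $\sigma^2/2$, which contradicts the stated model.

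\textbf{How the paper handles it.} The paper notes $\frac{1-\lambda_c}{2\sigma^2}\mathbb{E}\Vert\mathbf{x}_0-\mathbf{y}\Vert_2^2\le\frac{1-\lambda_c}{\sigma^2}\mathbb{E}\Vert\mathbf{x}_0-\mathbf{y}\Vert_2^2$. Following NeuralLift, it then deliberately optimizes this upper bound. So Eq.~\ref{eq:spatial_gradient_flow} is, strictly, the flow of that surrogate objective rather than of Eq.~\ref{eq:min_spatial} itself.

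\textbf{How to repair your version.} To keep your time-rescaling idea, you must also absorb the resulting factor $2$ on the prior term into $\omega(t)$. That move is legitimate only because the time-integrated KL reformulation fixes the minimizer but not the overall scale of that term. Either way, state explicitly which modification of Eq.~\ref{eq:min_spatial} you are making. As written, your justification for $(1-\lambda_c)/\sigma^2$ is incorrect.
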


\begin{figure}[!t]
\begin{center}
\includegraphics[width=1\columnwidth]{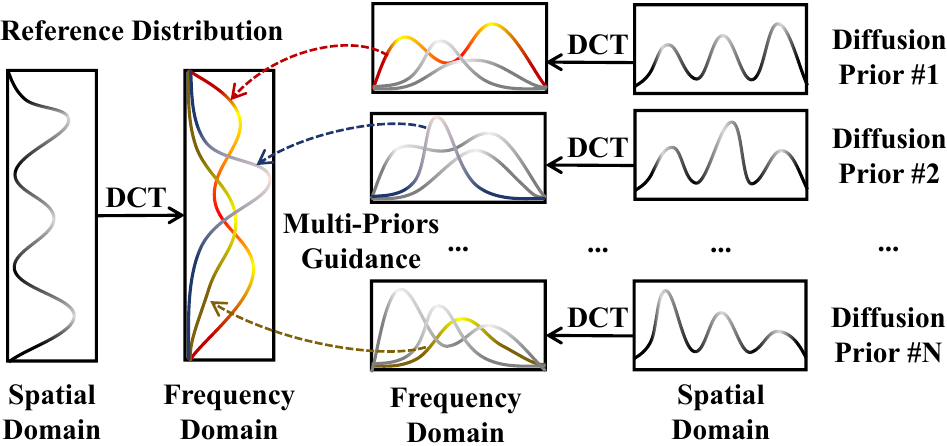}
\caption{\textbf{Hybrid optimization using multi-diffusion priors in frequency domain}. For a distribution of the reference image, we can take advantage of and discard the disadvantage of different diffusion priors in frequency domain under our framework to reconstruct high-quality 3D objects matching the reference distribution.}
\label{fig:multi-diffusion}
\end{center}
\end{figure}

\begin{algorithm}[htbp]
    \centering
    \caption{Hybrid Optimization using Multiple Diffusion Priors in Frequency Domain.}
    \begin{algorithmic}[1]
        \Require The reference image $\mathbf{y}$. $N$ diffusion priors $\{p_0^n\}$. $M$ filtering operators $\boldsymbol{\Lambda}_n^m$ with corresponding weight $k_n^m$ for each prior $p_0^n$. Learning rate $\eta_{\theta}, \{\eta_n\}$ for the optimization of 3D structure and diffusion priors $\{p_0^n\}$ parameters.
        \State Initialize 3D representation $\theta$
        \State Initialize $N$ noise prediction models $\{\boldsymbol{\epsilon}_{\phi_n}\}$ with related to $\{\boldsymbol{\epsilon}_{n}\}$ of diffusion priors $\{p_0^n\}$.
        \While{not converged}
            \State Randomly sample a camera pose $\mathbf{c}$.
            \State Render the image $\mathbf{x}_0 \leftarrow g(\theta, \mathbf{c})$.
            \State Initialize the gradient $\mathcal{G}_{\theta} \leftarrow 0$.
            \State Initialize the gradients $\{\mathcal{G}_{\phi_n} \leftarrow 0\}$.
            \For{$n$ in $\{1,...,N\}$}
                \State $\mathbf{u}_0 \leftarrow \mathbf{V}^{\rm{T}}\mathbf{x}_0$, $\mathbf{u}_y \leftarrow \mathbf{V}^{\rm{T}}\mathbf{y}$.
                \State $\mathcal{R}_n \leftarrow \boldsymbol{\epsilon}_n(\mathbf{x}_t,t,\mathbf{y},\mathbf{c}) - \boldsymbol{\epsilon}_{\phi_n}(\mathbf{x}_t,t,\mathbf{y},\mathbf{c})$.
                \For{$m$ in $\{1,...,M\}$}
                    \State $\mathcal{R}_n^m \leftarrow \mathbf{V} \boldsymbol{\Lambda}_n^m \mathbf{V}^{\rm{T}}\mathcal{R}_n$.
                    \State $\mathcal{G}_{\theta} \leftarrow \mathcal{G}_{\theta} + k_n^m \frac{1 - \lambda_c}{\sigma^2} \nabla_{\theta} \Vert \boldsymbol{\Lambda}_n^m (\mathbf{u}_0 - \mathbf{u}_y) \Vert_2^2$.
                    \State $\mathcal{G}_{\theta} \leftarrow \mathcal{G}_{\theta} + k_n^m \lambda_c \mathbb{E}_{t,\mathbf{c},\boldsymbol{\epsilon}}\left[\omega(t) \mathcal{R}_n^m \frac{\partial{\mathbf{x}_0}}{\partial{\theta}} \right]$.
                \EndFor
                \State $\mathcal{G}_{\phi_n} \leftarrow \mathcal{G}_{\phi_n} + \nabla_{\phi_n} \mathbb{E}_{t, \boldsymbol{\epsilon}} \Vert \boldsymbol{\epsilon}_{\phi_n}(\mathbf{x}_t,t,\mathbf{y},\mathbf{c}) - \boldsymbol{\epsilon} \Vert_2^2$.
            \EndFor
            \State Update $\theta$ with $\theta \leftarrow \theta - \eta_{\theta} \mathcal{G}_{\theta}$.
            \State Update $\{\phi_n\}$ with $\{\phi_n \leftarrow \phi_n - \eta_n \mathcal{G}_{\phi_n}\}$.
        \EndWhile
        \Return
    \end{algorithmic}
    \label{algo:3_2}
\end{algorithm}

\textbf{Optimization in frequency domain.}
Keeping the observations of Sec. \ref{sec:analysis_priors} in our mind, we aim to optimize the 3D representations from the frequency perspective and enhance or suppress specific frequency component guidance of diffusion priors. Denote the frequency component $\mathbf{u}_0^i = \boldsymbol{\Lambda}_i\mathbf{u}_0, \mathbf{u}_y^i = \boldsymbol{\Lambda}_i\mathbf{u}_y$, where $\boldsymbol{\Lambda}_i$ served as a filter is diagonal and binary. Similar to Eq. \ref{eq:min_spatial} in spatial domain, we can optimize $\mu$ from the frequency perspective with related to the filtering operator $\boldsymbol{\Lambda}_i$ by solving
\begin{equation} \label{eq:min_frequency}
    \begin{aligned}
        &\min_{\mu}\{P(\zeta_c=0)\mathbb{E}_{\theta\sim\mu}[-\log{q_0^{\mu}(\mathbf{u}_0^i|\mathbf{y},\mathbf{c},\zeta_c=0)}] \\
        &+ P(\zeta_c=1)\mathbb{E}_{\mathbf{c}}[D_{KL}(q_0^{\mu}(\mathbf{u}_0^i|\mathbf{y},\mathbf{c}) \parallel p_0^*(\mathbf{u}_0^i|\mathbf{y},\mathbf{c}))]\}.
    \end{aligned}
\end{equation}
The gradient flow of the optimization objective in Eq. \ref{eq:min_frequency} is as follows. The detailed proof is in the Appendix.
\begin{proposition}\label{proposition:VSD_in_frequency}
    Given the optimization objective of Eq. \ref{eq:min_frequency} in frequency domain, the corresponding gradient flow is
    \begin{equation} \label{eq:frequency_gradient_flow}
        \begin{aligned}
            \frac{\mathbf{d}\theta_{\tau}}{\mathbf{d}\tau}&= - \{ \frac{1 - \lambda_c}{\sigma^2} \nabla_{\theta} \parallel \boldsymbol{\Lambda}_i(\mathbf{u}_0 - \mathbf{u}_y) \parallel_2^2 \\
            &+ \lambda_c \mathbb{E}_{t,\mathbf{c},\boldsymbol{\epsilon}}\left[\omega(t) \mathbf{V} \boldsymbol{\Lambda}_i \mathbf{V}^{\rm{T}} (\boldsymbol{\epsilon}_* - \boldsymbol{\epsilon}_{\phi}) \frac{\partial{\mathbf{x}_0}}{\partial{\theta}} \right] \}.
        \end{aligned}
    \end{equation}
    Furthermore, when $\boldsymbol{\Lambda}_i = \mathbf{I}$, Eq. \ref{eq:frequency_gradient_flow} is equivalent to Eq. \ref{eq:spatial_gradient_flow}, which indicates optimization in spatial domain can be transmitted to frequency domain.
\end{proposition}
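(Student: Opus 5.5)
The plan is to reduce Proposition \ref{proposition:VSD_in_frequency} to Proposition \ref{proposition:VSD_in_spatial} by a linear change of variables. Define the filtered image in spatial domain $\tilde{\mathbf{x}}_0 = \mathbf{V}\boldsymbol{\Lambda}_i\mathbf{V}^{\rm T}\mathbf{x}_0$, and set $\mathbf{P}_i = \mathbf{V}\boldsymbol{\Lambda}_i\mathbf{V}^{\rm T}$. Since $\mathbf{V}$ is orthogonal and $\boldsymbol{\Lambda}_i$ is diagonal and binary, $\mathbf{P}_i$ is a symmetric idempotent matrix, i.e.\ an orthogonal projector. Also $\mathbf{u}_0^i = \boldsymbol{\Lambda}_i\mathbf{V}^{\rm T}\mathbf{x}_0$ is an isometric coordinate representation of $\tilde{\mathbf{x}}_0$ on the range of $\mathbf{P}_i$. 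Hence Gaussian likelihoods and KL divergences computed for $\mathbf{u}_0^i$ coincide with those for the projected variable, and the two terms of Eq.~\ref{eq:min_frequency} can be handled separately, exactly as in the spatial proof.

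\textbf{Reference term.} The Gaussian model $\mathcal{N}(\mathbf{x}_0;\mathbf{y},\sigma^2\mathbf{I})$ pushes forward under the orthogonal DCT and the coordinate restriction to $\mathcal{N}(\mathbf{u}_0^i;\boldsymbol{\Lambda}_i\mathbf{u}_y,\sigma^2\mathbf{I})$ on the retained coordinates. Its negative log-likelihood is therefore $\frac{1}{2\sigma^2}\Vert\boldsymbol{\Lambda}_i(\mathbf{u}_0-\mathbf{u}_y)\Vert_2^2$ plus a constant. With $P(\zeta_c=0)=1-\lambda_c$, and absorbing the factor $\tfrac12$ the same way the spatial proof does, this gives the first term of Eq.~\ref{eq:frequency_gradient_flow}.

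\textbf{KL term.} I would rerun the VSD derivation behind Proposition \ref{proposition:VSD_in_spatial} with the variable $\mathbf{u}_0^i$ in place of $\mathbf{x}_0$. The forward diffusion $\mathbf{u}_t^i = \alpha_t\mathbf{u}_0^i + \sigma_t\boldsymbol{\Lambda}_i\mathbf{V}^{\rm T}\boldsymbol{\epsilon}$ is again a Gaussian diffusion, because $\boldsymbol{\Lambda}_i\mathbf{V}^{\rm T}\boldsymbol{\epsilon}$ is standard Gaussian on the retained coordinates. So the time-integrated KL reformulation and the Wasserstein gradient flow carry over unchanged. The resulting particle update is $\omega(t)\,(\mathbf{s}_\phi-\mathbf{s}_*)^{\rm T}\,\partial\mathbf{u}_0^i/\partial\theta$, where the $\mathbf{s}$ are scores in $\mathbf{u}^i$-space.

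The key step is identifying these scores with filtered noise predictions. The density of a linear image of $\mathbf{x}_t$ has a score equal to the conditional expectation of the transformed full score. I would adopt the paper's approximation that this equals $-\boldsymbol{\Lambda}_i\mathbf{V}^{\rm T}\boldsymbol{\epsilon}/\sigma_t$, which is exact when the filtered and complementary components are treated as independent. The chain rule then gives $\partial\mathbf{u}_0^i/\partial\theta = \boldsymbol{\Lambda}_i\mathbf{V}^{\rm T}\,\partial\mathbf{x}_0/\partial\theta$, so the update becomes
\[
\omega(t)\,(\boldsymbol{\epsilon}_*-\boldsymbol{\epsilon}_\phi)^{\rm T}\mathbf{V}\boldsymbol{\Lambda}_i^{\rm T}\boldsymbol{\Lambda}_i\mathbf{V}^{\rm T}\frac{\partial\mathbf{x}_0}{\partial\theta}.
\]
Since $\boldsymbol{\Lambda}_i^{\rm T}\boldsymbol{\Lambda}_i=\boldsymbol{\Lambda}_i$, this is $\omega(t)\,\mathbf{V}\boldsymbol{\Lambda}_i\mathbf{V}^{\rm T}(\boldsymbol{\epsilon}_*-\boldsymbol{\epsilon}_\phi)\,\partial\mathbf{x}_0/\partial\theta$, with the $\sigma_t$ factors absorbed into $\omega(t)$ as in VSD. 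Multiplying by $P(\zeta_c=1)=\lambda_c$ gives the second term.

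\textbf{Main obstacle.} I expect the hard step to be this justification that the score of the marginal of the filtered variable is the filtered score. Strictly, it is a conditional expectation, not the projection itself. I would handle it either by the independence approximation above, or by noting that the VSD gradient only needs the projected score up to the same approximations already made for $\boldsymbol{\epsilon}_\phi$, which is itself fine-tuned on rendered images.

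\textbf{Special case.} For $\boldsymbol{\Lambda}_i=\mathbf{I}$ we have $\mathbf{V}\mathbf{V}^{\rm T}=\mathbf{I}$ and $\Vert\mathbf{V}^{\rm T}(\mathbf{x}_0-\mathbf{y})\Vert_2=\Vert\mathbf{x}_0-\mathbf{y}\Vert_2$ by orthogonality. Eq.~\ref{eq:frequency_gradient_flow} then reduces literally to Eq.~\ref{eq:spatial_gradient_flow}.
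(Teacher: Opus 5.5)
Your proposal is correct and follows essentially the same route as the paper. Both arguments use the Gaussian negative log-likelihood of $\mathbf{u}_0^i$ for the reference term (with the same factor-$2$ upper-bound step), rerun VSD on the filtered variable, identify the filtered score with $\mathbf{V}\boldsymbol{\Lambda}_i\mathbf{V}^{\rm T}$ applied to the noise residual, and use orthogonality/Parseval for $\boldsymbol{\Lambda}_i=\mathbf{I}$.

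The main difference is cosmetic. The paper isolates the key step as a separate score-conversion proposition, proved by differentiating the conditional Gaussian $\mathcal{N}(\alpha_t\mathbf{x}_0,\sigma_t^2\mathbf{I})$ with respect to $\mathbf{x}_t$, so the projector acts on the score rather than on $\partial\mathbf{u}_0^i/\partial\theta$. It also handles the degenerate covariance through the generalized inverse, where you restrict to the retained coordinates. This is the same identification you make, and you are right that it is exact only under an independence assumption like the one you state.
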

This indicates that optimizing the weighted optimization objective of different frequency components is equivalent to applying our frequency tendencies on the basis of optimization in spatial domain.

\textbf{A unified framework of hybrid optimization using multiple diffusion priors with VSD in frequency domain.}
Based on Proposition \ref{proposition:VSD_in_frequency}, we aim to exert tendencies to multiple diffusion priors in frequency domain. Therefore, to 
make the organic complementary of diffusion priors analyzed in Sec. \ref{sec:analysis_priors}, we present a unified framework of hybrid optimization using multiple diffusion priors with VSD in frequency domain, as shown in Fig. \ref{fig:multi-diffusion}.
Specifically, given $N$ diffusion priors $\{p_0^n\}$, $\boldsymbol{\epsilon}_{n}, \boldsymbol{\epsilon}_{\phi_{n}}$ are predicted noise of $p_0^n$ and its fine-tuned model. Each prior takes $M$ filtering operators $\boldsymbol{\Lambda}_n^m$ with corresponding weight $k_n^m$, where $n,m \in \mathbb{N}, n \in [1, N], m \in [1, M]$ and $\boldsymbol{\Lambda}_n^m$ is diagonal, following Eq. \ref{eq:min_spatial} and Eq. \ref{eq:min_frequency}, we have the weighted optimization objective in frequency domain.
\begin{equation} \label{eq:min_multidiffusion}
    \begin{aligned}
        &\min_{\mu} \{\sum_{n=1}^{N}\sum_{m=1}^{M}k_n^m\{ P(\zeta_c=0)\mathbb{E}_{\theta\sim\mu}[ \\
        &-\log{q_0^{\mu}(\mathbf{u}_0^{(n,m)}|\mathbf{y},\mathbf{c},\zeta_c=0)}] + P(\zeta_c=1) \\
        &\mathbb{E}_{\mathbf{c}}[D_{KL}(q_0^{\mu}(\mathbf{u}_0^{(n,m)}|\mathbf{y},\mathbf{c}) \parallel p_0^*(\mathbf{u}_0^{(n,m)}|\mathbf{y},\mathbf{c}))]\}\},
    \end{aligned}
\end{equation}
where $\mathbf{u}_0^{(n,m)} = \boldsymbol{\Lambda}_n^m\mathbf{u}_0$. The gradient flow of the optimization objective in Eq. \ref{eq:min_multidiffusion} is as follows. The corresponding algorithm of our framework is shown in Algo. \ref{algo:3_2}. The detailed proof is in the Appendix.
\begin{proposition} \label{proposition:multi-diffusion-priors}
    Given the optimization objective of Eq. \ref{eq:min_frequency}, the corresponding gradient flow of hybrid optimization using $N$ diffusion priors in frequency domain is
    \begin{equation}\label{eq:multi-diffusion-prior}
        \begin{aligned}
            \frac{\mathbf{d}\theta_{\tau}}{\mathbf{d}\tau} &= -\sum_{n=1}^{N}\sum_{m=1}^{M}k_n^m \{\underbrace{\frac{1 - \lambda_c}{\sigma^2} \nabla_{\theta} \parallel \boldsymbol{\Lambda}_n^m(\mathbf{u}_0 - \mathbf{u}_y) \parallel_2^2}_{\rm{Reference\;Guidance}} \\
            &+ \underbrace{\lambda_c \mathbb{E}_{t,\mathbf{c},\boldsymbol{\epsilon}}\left[\omega(t) \mathbf{V} \boldsymbol{\Lambda}_n^m \mathbf{V}^{\rm{T}}(\boldsymbol{\epsilon}_n - \boldsymbol{\epsilon}_{\phi_n}) \frac{\partial{\mathbf{x}_0}}{\partial{\theta}} \right]}_{\rm{Prior\;Guidance}} \}.
        \end{aligned}
    \end{equation}
\end{proposition}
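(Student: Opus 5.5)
The plan is to reduce Proposition \ref{proposition:multi-diffusion-priors} to Proposition \ref{proposition:VSD_in_frequency} using linearity. The objective in Eq. \ref{eq:min_multidiffusion} is a finite nonnegative combination $\sum_{n,m} k_n^m \mathcal{L}_n^m(\mu)$. Each summand $\mathcal{L}_n^m$ has exactly the form of Eq. \ref{eq:min_frequency}, with the filter $\boldsymbol{\Lambda}_i$ replaced by $\boldsymbol{\Lambda}_n^m$ and the prior $p_0^*$ replaced by $p_0^n$. Its noise predictors are $\boldsymbol{\epsilon}_n$ and its own fine-tuned model $\boldsymbol{\epsilon}_{\phi_n}$. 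Taking $N_\theta=1$, so that $\mu$ is a single particle $\theta$, the gradient flow of a weighted sum is the weighted sum of the individual gradients. Differentiation in $\theta$ commutes with the finite sums and with the constants $k_n^m$, $P(\zeta_c=0)=1-\lambda_c$ and $P(\zeta_c=1)=\lambda_c$.

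First I will treat each $(n,m)$ term. Applying Proposition \ref{proposition:VSD_in_frequency} to the pair $(\boldsymbol{\Lambda}_n^m, p_0^n)$ gives the reference part. Under $q_0^{\mu}(\cdot\mid \zeta_c=0)\sim\mathcal{N}(\mathbf{y},\sigma^2\mathbf{I})$, and by orthogonality of $\mathbf{V}$, the filtered coefficients $\boldsymbol{\Lambda}_n^m\mathbf{u}_0$ have negative log-likelihood $\frac{1}{2\sigma^2}\Vert\boldsymbol{\Lambda}_n^m(\mathbf{u}_0-\mathbf{u}_y)\Vert_2^2$ up to constants. The factor $\tfrac12$ is absorbed into the time rescaling, exactly as in Eq. \ref{eq:spatial_gradient_flow}.

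It also gives the prior part. The proposition yields the VSD Wasserstein gradient flow for the KL in the filtered coordinates. The score difference there is $\boldsymbol{\Lambda}_n^m\mathbf{V}^{\rm T}(\boldsymbol{\epsilon}_n-\boldsymbol{\epsilon}_{\phi_n})$, since a diagonal binary filter applied after an orthogonal transform acts on the noise linearly. Pulling this back to $\theta$ through $\mathbf{x}_0=\mathbf{V}\mathbf{u}_0$ gives $\mathbf{V}\boldsymbol{\Lambda}_n^m\mathbf{V}^{\rm T}(\boldsymbol{\epsilon}_n-\boldsymbol{\epsilon}_{\phi_n})\frac{\partial\mathbf{x}_0}{\partial\theta}$.

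Next I will sum the terms with weights $k_n^m$ and collect the Reference Guidance and Prior Guidance pieces, which gives Eq. \ref{eq:multi-diffusion-prior}. This requires that each fine-tuned $\boldsymbol{\epsilon}_{\phi_n}$ independently approximates the score of its own $q_t^{\mu}$. That holds because the $\phi_n$ updates are decoupled, each minimizing its own diffusion loss as in Algo. \ref{algo:3_2}.

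The main obstacle is justifying that the KL between filtered marginals yields the projected noise residual. $\boldsymbol{\Lambda}_n^m$ is singular, so the filtered distribution lives on a subspace and its density must be taken there. I would handle this by noting that the forward diffusion commutes with the orthogonal DCT and with the coordinate projection, since isotropic Gaussian noise stays isotropic in $\mathbf{V}$-coordinates. The score of the marginal on the retained coordinates is therefore the corresponding sub-vector of the full score. This is exactly what Proposition \ref{proposition:VSD_in_frequency} relies on, so I can invoke it rather than reprove it.
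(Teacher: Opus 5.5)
Your proposal is correct and is essentially the paper's proof: the paper also treats each $(n,m)$ summand as an instance of Proposition~\ref{proposition:VSD_in_frequency} with filter $\boldsymbol{\Lambda}_n^m$ and prior $p_0^n$, obtains the per-term guidance $\mathcal{G}_n^m$, and sums the terms with weights $k_n^m$ by linearity; it also states explicitly that all priors share $\alpha_t,\sigma_t,\omega(t)$, so that a single $\mathbf{x}_t$ and a single $\omega(t)$ appear, which you only use implicitly. One side remark is inaccurate but not load-bearing: rescaling $\tau$ cannot absorb the factor $\tfrac12$ on the reference term alone, since it rescales the prior term too; the paper instead gets the coefficient $\frac{1-\lambda_c}{\sigma^2}$ by optimizing the upper bound $\frac{1-\lambda_c}{\sigma^2}\Vert\cdot\Vert_2^2$ of the negative log-likelihood, and in your reduction this coefficient is simply inherited from Proposition~\ref{proposition:VSD_in_frequency}.
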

So that we can heuristically exploit the advantages of diffusion priors in frequency domain. Besides, optimization using multiple diffusion priors with SDS in spatial domain, like Magic123 \citep{magic123}, is also a special case of our framework with setting $N=2, M=1, \boldsymbol{\Lambda}_n^m = \mathbf{I}$ and replacing the fine-tuned priors $\boldsymbol{\epsilon}_{\phi_n}$ with standard Gaussian noise $\boldsymbol{\epsilon}\sim\mathcal{N}(\mathbf{0}, \mathbf{I})$.

\begin{figure}
\begin{center}
\includegraphics[width=1.0\columnwidth]{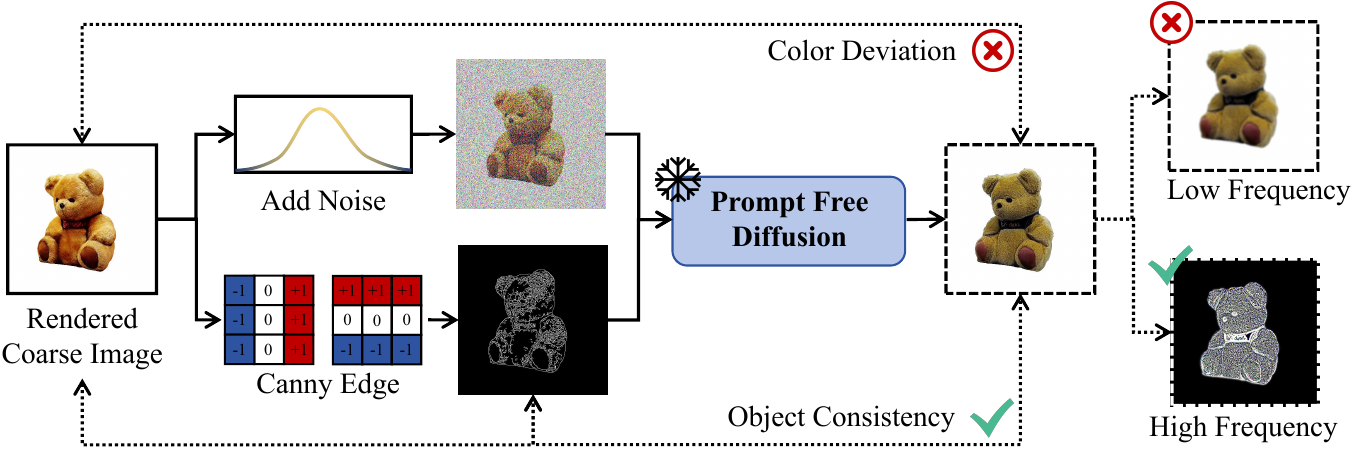}
\caption{\textbf{Image-prompt 2D prior guiance}. The Prompt Free Diffusion (PFD), which served as an image-prompt 2D prior, takes the Canny edge of the rendered image as the condition for guidance, which has object consistency but also low-frequency color deviation.}
\label{fig:canny}
\end{center}
\end{figure}

\begin{figure*}
\begin{center}
\includegraphics[width=1\textwidth]{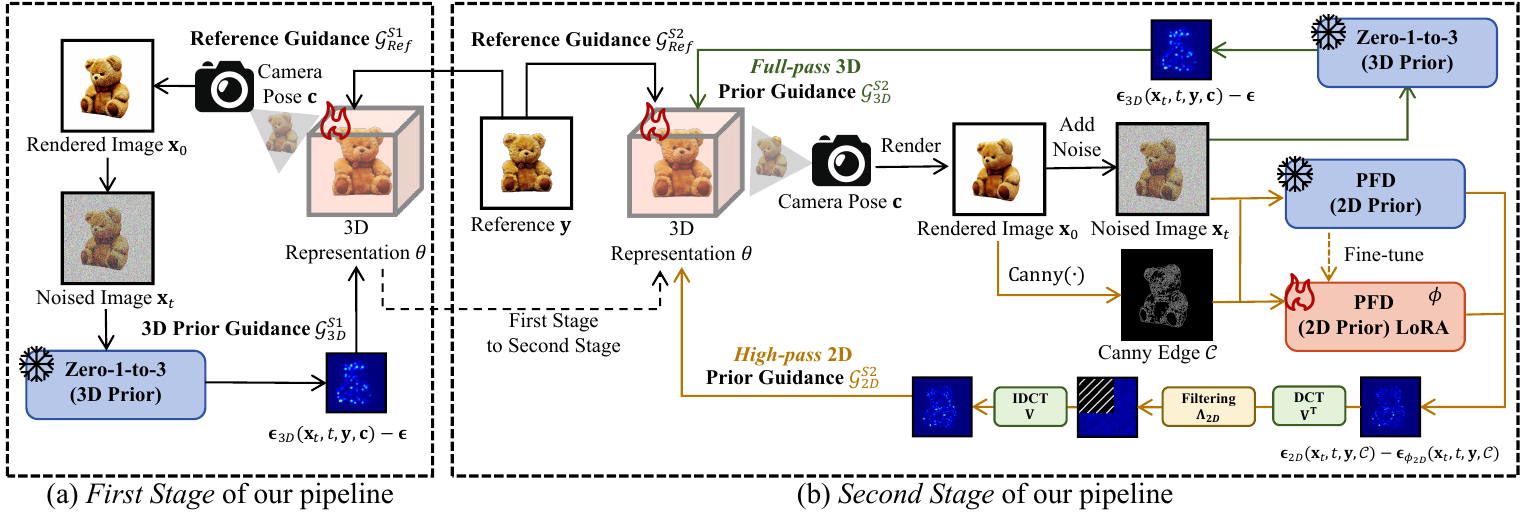}
\caption{\textbf{Overview of proposed pipeline.} \textbf{In the first stage}, we optimize a NeRF and subsequently initialize a DMTet from it. Either of them is optimized using 3D prior without fine-tuning. \textbf{In the second stage}, we continue to optimize the inherited DMTet using 3D prior without fine-tuning to maintain low-frequency characteristics and high-pass 2D prior with fine-tuning to boost high-frequency details.}
\label{fig:pipeline}
\end{center}
\end{figure*}

\subsection{Pipeline}
\label{sec:pipeline}

In Sec. \ref{sec:analysis_priors}, we have explored three types of diffusion priors. Text-prompt 2D priors suffer from object inconsistency and color deviation. Image-prompt 2D priors and image-prompt 3D priors have object consistency, but the former lack high-frequency details and the latter have color deviation problems. 
For the resolution of the dilemmas we have analyzed, we have proposed a theoretical framework of hybrid optimization using multiple diffusion priors in frequency domain in Sec. \ref{sec:framework_hybrid_optimization}, establishing a bridge between the optimization of 3D representations and prior guidance in frequency domain.
In this section, with our observations in Sec. \ref{sec:analysis_priors}, we give up the text-prompt 2D priors and first explore image-prompt 2D priors guidance, which have correct 3D shape constraints and provide adequate high-frequency texture guidance, different from text-prompt 2D priors in previous works. Based on this, we then present the two-stage pipeline Morpheus3D under our theoretical framework in Sec. \ref{sec:framework_hybrid_optimization}. 
The corresponding algorithm of our pipeline is shown in Algo. \ref{algo:3_3}.

\subsubsection{Extracting 3D implicit information from image-prompt 2D priors}\label{sec:canny}

Based on the analysis of different diffusion priors in Sec. \ref{sec:analysis_priors}, we have known that using text-prompt 2D prior guidance to optimize 3D representations in previous work will fall into the dilemma of high information entropy of text, and it is hard to introduce high-frequency texture guidance while ensuring the 3D structures. Therefore, we first propose the method of using Canny Edge with ControlNet to supervise image-prompt 2D priors to guide 3D representation optimization for extracting 3D implicit information. As shown in Fig. \ref{fig:canny}, we use PFD as our image-prompt 2D prior and calculate the Canny Edge from the coarse rendered image. The Canny Edge from the coarse rendered image, which served as the implicit representation of the camera pose, is the condition for PFD to generate corresponding guidance. We can see that the guidance of PFD can ensure the \textit{Object Consistency} with the rendered image, but there is a low-frequency \textit{Color Deviation} problem. Therefore, we aim to obtain the high-frequency part of PFD guidance and filter out its low-frequency part.

\subsubsection{Morpheus3D: One image to high-quality textured 3D object generation}\label{sec:pipeline_2}

\textbf{First stage of Morpheus3D: Reconstruction of shape and coarse textures.}
In the first stage, we only use Zero-1-to-3 as 3D prior for the reconstruction of shape and coarse textures, shown in Fig. \ref{fig:pipeline}. Following Proposition \ref{proposition:multi-diffusion-priors}, we set $N=1, M=1, \boldsymbol{\Lambda}_n^m = \mathbf{I}$ and replace the fine-tuned prior $\boldsymbol{\epsilon}_{\phi_{3D}}$ with $\boldsymbol{\epsilon}\sim\mathcal{N}(\mathbf{0}, \mathbf{I})$, namely we use the full-pass 3D prior guidance with SDS in spatial domain. Denote the weight $k_n^m$ of 3D prior as $k_{3D}$. The reference guidance is
\begin{equation}
    \begin{aligned}
        \mathcal{G}_{Ref}^{S1} &= \frac{1 - \lambda_c}{\sigma^2} \nabla_{\theta} \parallel k_{3D}\mathbf{I}(\mathbf{u}_0 - \mathbf{u}_y) \parallel_2^2 \\
        &= \frac{1 - \lambda_c}{\sigma^2} \nabla_{\theta} \parallel k_{3D}\mathbf{I}(\mathbf{x}_0 - \mathbf{y}) \parallel_2^2.
    \end{aligned}
\end{equation}
Denote the 3D prior as $p_0^{3D}$ and the predicted noise of prior as $\boldsymbol{\epsilon}_{3D}(\mathbf{x}_t,t,\mathbf{y},\mathbf{c})$, where $\mathbf{c} = (\mathbf{R}, \mathbf{T})$ and $\mathbf{R} \in \mathbb{R}^{3\times3}, \mathbf{T} \in \mathbb{R}^3$ are the relative camera rotation and translation of the novel view, respectively. We have the 3D prior guidance as
\begin{equation}\label{eq:stage1_3D_prior_guidance}
    \mathcal{G}_{3D}^{S1} = k_{3D} \lambda_c \mathbb{E}_{t,c,\epsilon}\biggl[\omega(t) (\boldsymbol{\epsilon}_{3D}(\mathbf{x}_t,t,\mathbf{y},\mathbf{c}) - \boldsymbol{\epsilon}) \frac{\partial{\mathbf{x}_0}}{\partial{\theta}} \biggr].
\end{equation}
The optimizing gradient flow of the first stage is
\begin{equation}\label{eq:stage1_ODE}
    \frac{\mathbf{d}\theta_{\tau}}{\mathbf{d}\tau} = - (\mathcal{G}_{3D}^{S1} + \mathcal{G}_{Ref}^{S1} ).
\end{equation}
Although NeRF works well to model the overall shape, it takes a large computing consumption. DMTet, as a hybrid SDF-Mesh representation, is memory-efficient and capable of optimizing high-resolution geometries and textures. Thus, we optimize a NeRF in low resolution and subsequently initialize a high-resolution DMTet inherited from NeRF. Either NeRF or DMTet are optimized with Eq. \ref{eq:stage1_ODE}.

\textbf{Second stage of Morpheus3D: Boosting high-frequency details.}
In the second stage, we optimize the inherited DMTet with PFD as added 2D prior for boosting high-frequency details. As shown in Fig. \ref{fig:short_priors} and Fig. \ref{fig:pipeline}, PFD can extract 3D information by implicitly representing the camera pose $\mathbf{c}$ as Canny edge $\mathcal{C} := \rm{Canny}(\mathbf{x}_0)$. Denote the added 2D prior as $p_0^{2D}$ and the predicted noise of prior as $\boldsymbol{\epsilon}_{2D}(\mathbf{x}_t,t,\mathbf{y},\mathcal{C})$. Due to the high-frequency lack of 3D priors and the low-frequency color deviation of image-prompt 2D priors, we set $N=2, M=1$, where the filtering operator of 3D prior is $\boldsymbol{\Lambda}_{3D}=\mathbf{I}$ and that of 2D prior is $\boldsymbol{\Lambda}_{2D}=\mathbf{I}_{D} - [\mathbf{I}_{B},\mathbf{0};\mathbf{0},\mathbf{0}]_{D \times D}$ ($D$ is the dimension of $\mathbf{u}_0$ and $B$ is frequency bound, $B < D$, namely a high-pass filter). Denote the weight $k_n^m$ of 2D prior as $k_{2D}$. The reference guidance is
\begin{equation}
    \begin{aligned}
        \mathcal{G}_{Ref}^{S2} = &\frac{1 - \lambda_c}{\sigma^2} \nabla_{\theta} \parallel k_{2D}\boldsymbol{\Lambda}_{2D} (\mathbf{u}_0 - \mathbf{u}_y) \parallel_2^2\\
        & + \frac{1 - \lambda_c}{\sigma^2} \nabla_{\theta} \parallel k_{3D}\mathbf{I} (\mathbf{x}_0 - \mathbf{y}) \parallel_2^2.
    \end{aligned}
\end{equation}

\begin{algorithm}[htbp]
\centering
\caption{Morpheus3D: Our Proposed Pipeline.}
\begin{algorithmic}[1]
\Require The reference image $\mathbf{y}$. 3D diffusion prior $p_0^{3D}$. 2D diffusion prior $p_0^{2D}$. The weight $k_{3D}$ for 3D prior $p_0^{3D}$. The filtering operator $\boldsymbol{\Lambda}_{2D}$ with corresponding weight $k_{2D}$ for 2D prior $p_0^{2D}$. Learning rate $\eta_{\theta}, \eta_{2D}$ for the optimization of 3D structure and 2D prior parameters.

\Comment{{\color{blue}\textit{The First Stage}}}
\State Initialize 3D representation $\theta$
\While{not convergenced}
    \State Randomly sample a camera pose $\mathbf{c}$.
    \State Render the image $\mathbf{x}_0 \leftarrow g(\theta, \mathbf{c})$.
    \State Initialize the gradient $\mathcal{G}_{\theta} \leftarrow 0$.
    \State $\mathcal{R}_{3D} \leftarrow \boldsymbol{\epsilon}_{3D}(\mathbf{x}_t,t,\mathbf{y},\mathbf{c}) - \boldsymbol{\epsilon}$.
    \State $\mathcal{G}_{\theta} \leftarrow \mathcal{G}_{\theta} + \frac{1 - \lambda_c}{\sigma^2} \nabla_{\theta} \Vert k_{3D}\mathbf{I} (\mathbf{x}_0 - \mathbf{y}) \Vert_2^2$.
    \State $\mathcal{G}_{\theta} \leftarrow \mathcal{G}_{\theta} + k_{3D} \lambda_c \mathbb{E}_{t,\mathbf{c},\boldsymbol{\epsilon}}\left[\omega(t) \mathcal{R}_{3D} \frac{\partial{\mathbf{x}_0}}{\partial{\theta}} \right]$.
    \State Update $\theta$ with $\theta \leftarrow \theta - \eta_{\theta} \mathcal{G}_{\theta}$.
\EndWhile

\Comment{{\color{blue}\textit{The Second Stage}}}
\State Initialize the noise prediction models $\boldsymbol{\epsilon}_{\phi_{2D}}$ with related to $\boldsymbol{\epsilon}_{2D}$ of 2D prior $p_0^{2D}$.
\While{not converged}
    \State Randomly sample a camera pose $\mathbf{c}$.
    \State Render the image $\mathbf{x}_0 \leftarrow g(\theta, \mathbf{c})$.
    \State Initialize the gradient $\mathcal{G}_{\theta} \leftarrow 0$.
    \State Initialize the gradients $\mathcal{G}_{\phi_{2D}} \leftarrow 0$.
    \State $\mathcal{C} \leftarrow \rm{Canny}(\mathbf{x}_0)$
    \State $\mathbf{u}_0 \leftarrow \mathbf{V}^{\rm{T}}\mathbf{x}_0$, $\mathbf{u}_y \leftarrow \mathbf{V}^{\rm{T}}\mathbf{y}$.
    \State $\mathcal{R}_{3D} \leftarrow \boldsymbol{\epsilon}_{3D}(\mathbf{x}_t,t,\mathbf{y},\mathbf{c}) - \boldsymbol{\epsilon}$.
    \State $\mathcal{R}_{2D} \leftarrow \boldsymbol{\epsilon}_{2D}(\mathbf{x}_t,t,\mathbf{y},\mathcal{C}) - \boldsymbol{\epsilon}_{\phi_{2D}}(\mathbf{x}_t,t,\mathbf{y},\mathcal{C})$.
    \State $\mathcal{G}_{\theta} \leftarrow \mathcal{G}_{\theta} + \frac{1 - \lambda_c}{\sigma^2} \nabla_{\theta} \Vert k_{2D}\boldsymbol{\Lambda}_{2D} (\mathbf{u}_0 - \mathbf{u}_y) \Vert_2^2$.
    \State $\mathcal{G}_{\theta} \leftarrow \mathcal{G}_{\theta} + \frac{1 - \lambda_c}{\sigma^2} \nabla_{\theta} \Vert k_{3D}\mathbf{I} (\mathbf{x}_0 - \mathbf{y}) \Vert_2^2$.
    \State $\mathcal{G}_{\theta} \leftarrow \mathcal{G}_{\theta} + k_{2D} \lambda_c \mathbb{E}_{t,\mathbf{c},\boldsymbol{\epsilon}}\left[\omega(t) \mathbf{V} \boldsymbol{\Lambda}_{2D} \mathbf{V}^{\rm{T}}\mathcal{R}_{2D} \frac{\partial{\mathbf{x}_0}}{\partial{\theta}} \right]$.
    \State $\mathcal{G}_{\theta} \leftarrow \mathcal{G}_{\theta} + k_{3D} \lambda_c \mathbb{E}_{t,\mathbf{c},\boldsymbol{\epsilon}}\left[\omega(t) \mathcal{R}_{3D} \frac{\partial{\mathbf{x}_0}}{\partial{\theta}} \right]$.
    \State $\mathcal{G}_{\phi_{2D}} \leftarrow \mathcal{G}_{\phi_{2D}} + \nabla_{\phi_{2D}} \mathbb{E}_{t, \boldsymbol{\epsilon}} \Vert \boldsymbol{\epsilon}_{\phi_{2D}}(\mathbf{x}_t,t,\mathbf{y},\mathcal{C}) - \boldsymbol{\epsilon} \Vert_2^2$.
    \State Update $\theta$ with $\theta \leftarrow \theta - \eta_{\theta} \mathcal{G}_{\theta}$.
    \State Update $\phi_{2D}$ with $\phi_{2D} \leftarrow \phi_{2D} - \eta_{2D} \mathcal{G}_{\phi_{2D}}$.
\EndWhile
\Return
\end{algorithmic}
\label{algo:3_3}
\end{algorithm}

Meanwhile, the 3D prior guidance in the second stage is equivalent to the first stage, namely $\mathcal{G}_{3D}^{S2}=\mathcal{G}_{3D}^{S1}$ in Eq. \ref{eq:stage1_3D_prior_guidance}. Due to over-smoothing and over-saturation of SDS, which can suppress the guidance of high-pass 2D prior, we utilize the fine-tuned LoRA $\boldsymbol{\epsilon}_{\phi_{2D}}(\mathbf{x}_t,t,\mathbf{y},\mathcal{C})$ for 2D prior. So, the high-pass 2D prior guidance is
\begin{equation}
    \begin{aligned}
        \mathcal{G}_{2D}^{S2} &= k_{2D} \lambda_c \mathbb{E}_{t,c,\epsilon}\biggl[\omega(t) \mathbf{V} \boldsymbol{\Lambda}_{2D} \mathbf{V}^{\rm{T}} \\
        &(\boldsymbol{\epsilon}_{2D}(\mathbf{x}_t,t,\mathbf{y},\mathcal{C}) - \boldsymbol{\epsilon}_{\phi_{2D}}(\mathbf{x}_t,t,\mathbf{y},\mathcal{C})) \frac{\partial{\mathbf{x}_0}}{\partial{\theta}} \biggr].
    \end{aligned}
\end{equation}

Namely, we utilize 3D prior with SDS in spatial domain and high-pass 2D prior with VSD in frequency domain. The optimizing gradient flow of the second stage is
\begin{equation}\label{eq:stage2_ODE}
    \frac{\mathbf{d}\theta_{\tau}}{\mathbf{d}\tau} = - (\mathcal{G}_{3D}^{S2} + \mathcal{G}_{2D}^{S2} + \mathcal{G}_{Ref}^{S2} ).
\end{equation}

During the entire optimization, the 3D prior is not fine-tuned, \ie we use 3D prior with SDS. This is because 3D prior mainly provides shape and coarse texture, while high-frequency details are boosted by high-pass 2D prior guidance. Furthermore, it can be found that fine-tuning the 3D prior leads to artifacts and color deviation, which will be analyzed in subsequent experiments.

\subsection{Discussion}
\label{sec:method_discussion}

From a superficial perspective, our pipeline, ProlificDreamer (VSD) \citep{vsd} and Magic123 \citep{magic123}, all use the NeRF-DMTet backbone. A natural question is: \textit{What is the difference between our method and the previous 3D generation methods}? Here we give an intuitive explanation.

\textbf{Overview of our method.}
With analysis of 3D prior high-frequency lacking and 2D prior low-frequency color deviation in Sec. \ref{sec:analysis_priors}, we explore the correspondence between 3D representation optimization and prior guidance gradient from frequency perspective and present the unified optimization framework expanding multi-priors score distillation \textit{from spatial to frequency domain}, which is detailed in Proposition \ref{proposition:multi-diffusion-priors}. To solve the analyzed dilemmas, we propose a pipeline with a theoretical guarantee that utilizes high-pass 2D prior to enhancing full-pass 3D prior guidance in frequency domain, shown in Fig. \ref{fig:pipeline}.

\begin{table*}[h!]
\centering
\caption{\textbf{Quantitative comparisons with other methods.}}
\label{tab:comp_sota}
\setlength{\tabcolsep}{0.5mm}{
\begin{tabular}{ccccccccccccc}
\toprule
\multicolumn{13}{c}{\textit{\textbf{Feed-forward Methods}}} \\ \midrule
\multirow{2}{*}{Method} & \multirow{2}{*}{Pub.} & \multirow{2}{*}{Views} & \multicolumn{5}{c}{Realfusion15} & \multicolumn{5}{c}{MorpheusObj30} \\ \cline{4-13}
& & & MANIQA & CLIPIQA & CLIP-Sim. & PSNR & LPIPS & MANIQA & CLIPIQA & CLIP-Sim. & PSNR & LPIPS \\ \hline
Shap-E & - & 100 & 0.410 & 0.560 & 0.633 & - & - & 0.433 &  0.539 &  0.621 & - & - \\ 
Zero-1-to-3 & {\color{blue}ICCV23} & 100 & 0.166 & 0.565 & 0.825 & 18.53 & 0.117 & 0.154 & 0.531 & 0.805 & 25.12 & 0.041 \\ 
OpenLRM & {\color{blue}ICLR24} & 100 & 0.162 & 0.466 & 0.761 & 18.41 & 0.115 & 0.170 & 0.453 & 0.769 & 24.34 & 0.034 \\ 
ImageDream & - & 4 & 0.160 & 0.542 & 0.769 & - & - & 0.137 & 0.513 & 0.747 & - & - \\ 
LGM & {\color{blue}ECCV24} & 100 & 0.159 & 0.516 & 0.753 & - & - & 0.134 & 0.487 & 0.757 & - & - \\ 
Wonder3D & {\color{blue}CVPR24} & 6 & 0.189 & 0.584 & 0.804 & - & - & 0.181 & 0.576 & 0.806 & - & - \\ 
SyncDreamer & {\color{blue}ICLR24} & 100 & 0.057 & 0.386 & 0.643 & 15.96 & 0.297 & 0.051 & 0.375 & 0.631 & 20.11 & 0.201 \\ 
SV3D & {\color{blue}ECCV24} & 21 & 0.204 & 0.628 & 0.813 & - & - & 0.210 & 0.592 & 0.789 & - & - \\ 
LN3Diff & {\color{blue}ECCV24} & 100 & 0.119 & 0.424 & 0.735 & - & - & 0.104 & 0.409 & 0.719 & - & - \\ 
Gauss.Any. & {\color{blue}ICLR25} & 100 & 0.253 & 0.460 & 0.762 & - & - & 0.237 & 0.439 & 0.745 & - & - \\ 
3DTopia-XL & {\color{blue}CVPR25} & 100 & 0.277 & 0.384 & 0.784 & - & - & 0.256 & 0.350 & 0.761 & - & - \\ 
TRELLIS & {\color{blue}CVPR25} & 100 & 0.329 & 0.591 & 0.825 & - & - & 0.316 & 0.569 & 0.810 & - & - \\ \midrule 
\multicolumn{13}{c}{\textit{\textbf{Optimization-based Methods}}} \\ \hline
\multirow{2}{*}{Method} & \multirow{2}{*}{Pub.} & \multirow{2}{*}{Views} & \multicolumn{5}{c}{Realfusion15} & \multicolumn{5}{c}{MorpheusObj30} \\ \cline{4-13}
& & & MANIQA & CLIPIQA & CLIP-Sim. & PSNR & LPIPS & MANIQA & CLIPIQA & CLIP-Sim. & PSNR & LPIPS \\ \hline
NeuralLift & {\color{blue}CVPR23} & 100 & 0.139 & 0.412 & 0.642 & 10.17 & 0.548 & 0.132 & 0.410 & 0.539 & 8.73 & 0.611 \\ 
RealFusion & {\color{blue}CVPR23} & 100 & 0.053 & 0.457 & 0.705 &  17.47 & 0.210 & 0.050 & 0.417 & 0.623 & 21.84 & 0.139 \\ 
Magic123 & {\color{blue}ICLR24} & 100 & 0.316 & 0.713 & 0.826 & 19.45 & 0.095 & 0.315 & 0.694 & 0.814 & 28.00 & 0.022 \\ 
Ours & - & 100 & \textbf{0.454} & \textbf{0.748} & \textbf{0.838} & \textbf{19.94} & \textbf{0.094} & \textbf{0.487} & \textbf{0.751} & \textbf{0.835} & \textbf{28.14} & \textbf{0.021} \\ \bottomrule 
\end{tabular}
}
\begin{tablenotes}
\item[] The best-performing results for each metric are highlighted in bold
\end{tablenotes}
\end{table*}

\textbf{Relations to existing works.} Theoretically, ProlificDreamer \citep{vsd} and Magic123 \citep{magic123} can be regarded as \textit{special cases of our unified theoretical framework} although Magic123 and our work all use the NeRF-DMTet Backbone of ProlificDreamer (VSD) for low-cost high-resolution training. Specifically, under our theoretical framework in Sec. \ref{sec:framework_hybrid_optimization}, when using $N=1$ diffusion prior, \eg Stable Diffusion \citep{sd}, and setting $M=1$ with $\boldsymbol{\Lambda}_n^m = \mathbf{I}$ full-pass filtering, namely optimizing in spatial domain, it is equivalent to VSD. When using $N=2$ diffusion priors, \eg Stable Diffusion \citep{sd} and Zero-1-to-3 \citep{zero123}, setting $M=1$ with $\boldsymbol{\Lambda}_n^m = \mathbf{I}$ full-pass filtering and replacing the fine-tuned priors $\boldsymbol{\epsilon}_{\phi_n}$ with standard Gaussian noise, it is equivalent to Magic123. Above all, the previous works are all in spatial domain and can be considered as a subclass of our theoretical framework. However, our pipeline Morpheus3D using $N=2$ diffusion prior, \eg Prompt Free Diffusion \citep{pfd} and Zero-1-to-3 \citep{zero123}, setting $M=1$ with $\boldsymbol{\Lambda}_{3D} = \mathbf{I}$ full-pass filtering for 3D prior and $\boldsymbol{\Lambda}_{2D}=\mathbf{I}_{D} - [\mathbf{I}_{B},\mathbf{0};\mathbf{0},\mathbf{0}]_{D \times D}$ high-pass filtering in frequency domain for 2D prior, is also one of the special cases of our theoretical framework, but it is the first to control diffusion priors guidance in frequency domain for image-to-3D tasks, and it has theoretical guarantees regarding the correlation between 3D representation optimization and prior guidance in frequency domain.

\begin{figure*}[h!]
\begin{center}
\includegraphics[width=0.95\textwidth]{allcomp_image.png}
\caption{\textbf{Qualitative comparisons with other methods.}}
\label{fig:comp_sota}
\end{center}
\end{figure*}
\section{Experiments}
\label{sec:experiments}

\subsection{Implementation Details}\label{sec:implementation_details}

\textbf{3D representation optimization}. Following \citep{vsd, magic123}, we use Instant-NGP NeRF \citep{instantngp} in the first stage and adopt an annealed distilling time schedule from $[0.02, 0.98]$ to $[0.02, 0.50]$ during NeRF training. Following \citep{vsd, z_fantasia}, we optimize the texture with fixed geometry in the second stage. We set the frequency bound rate of 2D prior as $B/D=0.6$. Following VSD \citep{vsd}, the LoRA $\boldsymbol{\epsilon}_{\phi_{2D}}$ is fine-tuned on rendered images with the standard diffusion objective, and we can see the detailed algorithm process in the Appendix. Meanwhile, besides using Eq. \ref{eq:stage1_ODE} and Eq. \ref{eq:stage2_ODE} for optimization, we also use mask loss of reference view, and geometric regularization including normal smoothing and Laplacian smoothing, following \citep{makeit3d, magic123}.

\textbf{Hyper-parameters and training details}. We use Zero-1-to-3-xl \citep{zero123} model for the 3D prior and Prompt Free Diffusion \citep{pfd} for the 2D prior. Our implementation is based on the Threestudio repo \citep{threestudio}. The CFG of 3D prior $\boldsymbol{\epsilon}_{3D}$ is $3.0$. The CFG of 2D prior $\boldsymbol{\epsilon}_{2D}$ is $2.0$, and the CFG of its LoRA model $\boldsymbol{\epsilon}_{\phi_{2D}}$ is $1.0$. The weight of 3D prior is $k_{3D} = 1.0$ and the weight of 2D prior is $k_{2D} = 3.0$. We set $\lambda_c = 0.0005, \sigma=1.0$ in both two stages. During the NeRF training of the first stage, the rendering resolution is $128 \times 128$. During the subsequent DMTet training, the rendering resolution is $512 \times 512$. See more details in the Appendix.

\textbf{Dataset}. Following \citep{realfusion, magic123}, we evaluate our method on the dataset \textbf{Realfusion15} \citep{realfusion}, which consists of 15 real images. However, Realfusion15 \citep{realfusion} lacks certain challenges, \ie complex textures of objects, composite objects, and asymmetric objects with front and back sides that easily lead to the Janus Problem, so we collect a new dataset \textbf{MorpheusObj30} containing 30 cases that include above challenges. For each case, we provide the image, mask, depth, and learned embeddings of textual inversion. Our method only utilizes the image and mask, while depth and learned embeddings are used by other comparison methods. See details of the reproduction of other methods in the Appendix.

\begin{figure*}[!t]
\begin{center}
\includegraphics[width=1\textwidth]{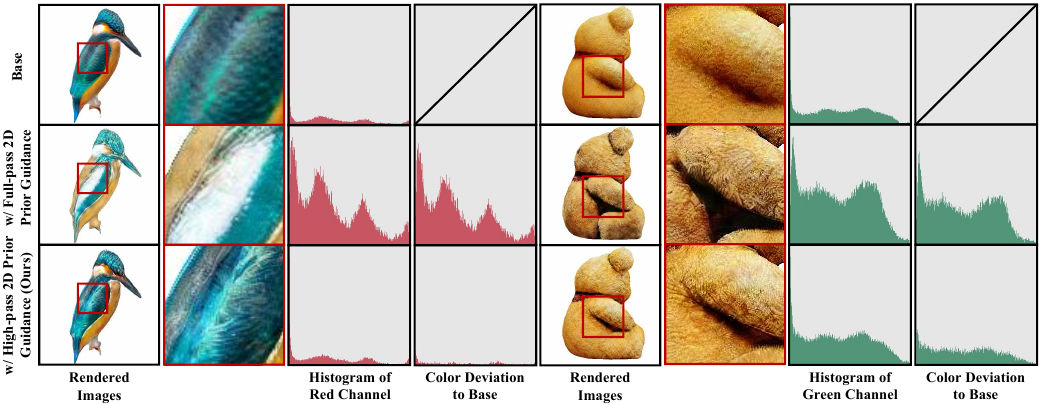}
\caption{\textbf{Study of color deviation using full/high-pass 2D prior guidance}. Using full-pass 2D prior guidance in spatial domain without high-passing shows heavier color deviation, which indicates the significance of optimization in frequency domain.}\label{fig:exp_color_deviation}
\end{center}
\end{figure*}

\subsection{Metrics}
The reconstruction quality takes two parts: reference view and novel views. For the reference view, we adopt PSNR and LPIPS \citep{lpips}, which are calculated from the rendered reference image and the input reference view. PSNR and LPIPS are all calculated in the resolution of 800 following \citep{magic123}. For novel views, we adopt two metrics of non-reference image quality assessment, MANIQA \citep{maniqa} and CLIPIQA \citep{clipiqa}, to evaluate the reconstruction quality of novel views. Besides, following \citep{realfusion, neurallift, makeit3d, magic123}, we adopt CLIP-similarity \citep{clip}, which is calculated from the rendered novel views and the input reference view, to evaluate the semantic consistency between novel views and the reference image.
It is supposed to be noted that these metrics mainly assess the perceptual quality and semantic consistency of novel views, rather than directly quantifying 3D geometric correctness or strict multi-view consistency, although we use semantic-oriented and perception-oriented metrics due to the scarcity of 3D-annotated data and the inherent characteristics of the 3D generation task.

\subsection{Comparison with Other Methods}

\textbf{Quantitative results on Realfusion15.} In Tab. \ref{tab:comp_sota}, we first conduct the experiments on the widely-used dataset Realfusion15 \citep{realfusion} with 
optimization-baed methods (NeuralLift \citep{neurallift}, Realfusion \citep{realfusion} and Magic123 \citep{magic123}) 
and feed-forward methods (Shap-E \citep{shape}, Zero-1-to-3 \citep{zero123}, OpenLRM~\citep{lrm, openlrm}, ImageDream~\citep{imagedream}, LGM~\citep{lgm}, Wonder3D~\citep{wonder3d}, SyncDreamer~\citep{syncdreamer}, SV3D~\citep{sv3d}, LN3Diff~\citep{ln3diff}, GaussianAnything~\citep{gaussiananything}, 3DT\\opia-XL~\citep{3dtopia}, TRELLIS~\citep{trellis}).
Details of reproduction for these methods are shown in the Appendix.
Previous methods, \eg Shap-E \citep{shape} and NeuralLift \citep{neurallift}, are unsatisfactory in terms of new perspective generation quality, new perspective semantic consistency, and reference perspective consistency. Although Realfusion \citep{realfusion} can achieve a high level of reference perspective alignment, \ie $17.47$ of PSNR, and new perspective semantic alignment, \ie $0.705$ of CLIP-Similarity, its generation quality is poor, \ie $0.053$ of MANIQA and $0.457$ of CLIPIQA. Zero-1-to-3 \citep{zero123} and Magic123 \citep{magic123}, as state-of-the-art methods, are inferior to our method in all evaluation indicators. As shown in Tab. \ref{tab:comp_sota}, our method achieves the highest performance and shows a clear margin. We achieve $+0.288$ of MANIQA and $+0.183$ of CLIPIQA compared to Zero-1-to-3 \citep{zero123}, indicating $173.49\%$ and $32.39\%$ of improvements respectively. Meanwhile, we achieve $+0.138$ of MANIQA and $+0.035$ of CLIPIQA compared to Magic123 \citep{magic123}, indicating $42.67\%$ and $4.91\%$ of improvements, respectively.
Notably, although our method has limited improvement compared to Magic123 \citep{magic123} in the reference view evaluation metrics, \ie PSNR and LPIPS, Magic123 \citep{magic123} is trained at 1024 resolution, while Ours is at 512 but still shows superior performance on reference view evaluation.
Furthermore, the generated results of our method are also of higher quality than those of the prevailing feed-forward methods. 
We achieve $+0.195$ of semantic consistency, \ie~CLIP-Similarity, indicating $30.33\%$ of improvements compared to SyncDreamer~\citep{syncdreamer}.
We achieve $+0.157$ of generation quality, \ie~CLIPIQA, indicating $26.57\%$ of improvements compared to TRELLIS~\citep{trellis}.

\begin{figure*}[!t]
\begin{center}
\includegraphics[width=1\textwidth]{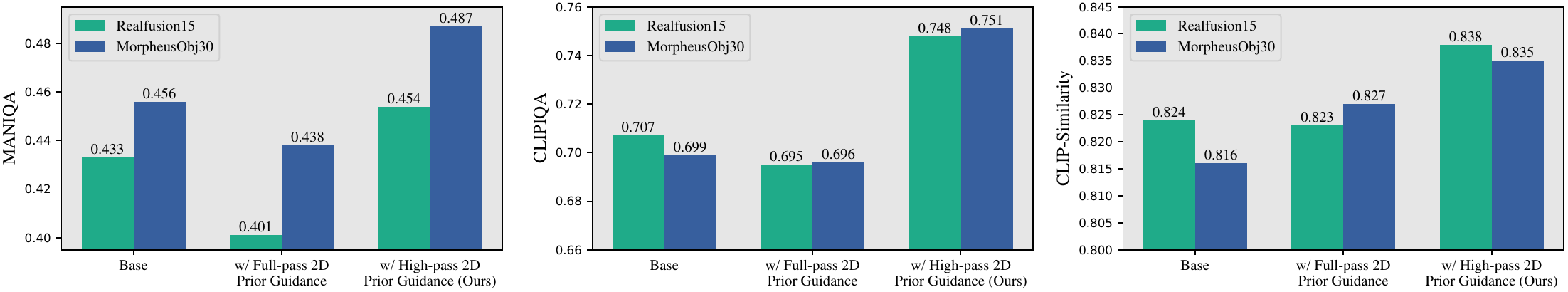}
\caption{\textbf{Ablation of high-passing 2D prior guidance}. Using high-pass 2D prior guidance exhibits both improvements in semantic consistency and reconstruction quality for novel views.} \label{fig:exp_H_tabvis}
\end{center}
\end{figure*}
\textbf{Quantitative results on MorpheusObj30.} Similar to the performances on the Realfusion15 \citep{realfusion} dataset, our method also shows an overall clear improvement on the MorpheusObj30 dataset. Furthermore, since the MorpheusObj30 dataset has more challenging cases as we mentioned above, current methods represented by Zero-1-to-3 \citep{zero123} and Magic123 \citep{magic123} are inferior to their performance on the Realfusion15 \citep{realfusion} dataset in terms of novel views generation quality and semantic consistency, \eg $-0.019$ of CLIPIQA and $-0.012$ of CLIP-Similarity for Magic123 \citep{magic123}. However, our method suffers little change and even exhibits better performance, \eg $+0.033$ of MANI-\\QA. Compared to other state-of-the-art methods on the MorpheusObj30 dataset, our method achieves greater improvements.
We achieve $+0.333$ of MANIQA and $+0.220$ of CLIPIQA compared to Zero-1-to-3 \citep{zero123}, indicating $216.23\%$ and $41.43\%$ of improvements, respectively. Meanwhile, we achieve $+0.172$ of MANIQA and $+0.057$ of CLIPIQA compared to Magic123 \citep{magic123}, indicating $54.60\%$ and $8.21\%$ of improvements, respectively.
For the prevailing feed-forward methods,
we achieve $+0.120$ of generation quality, \ie~CLIPIQA, indicating $31.99\%$ of improvements compared to TRELLIS~\citep{trellis}.
Note that the reported gains reflect perceptual and semantic quality and should not be read as a direct and strict measurement of geometric fidelity. For a comprehensive comparison, please refer to the qualitative experimental results.

\begin{figure*}[!t]
\begin{center}
\includegraphics[width=1\textwidth]{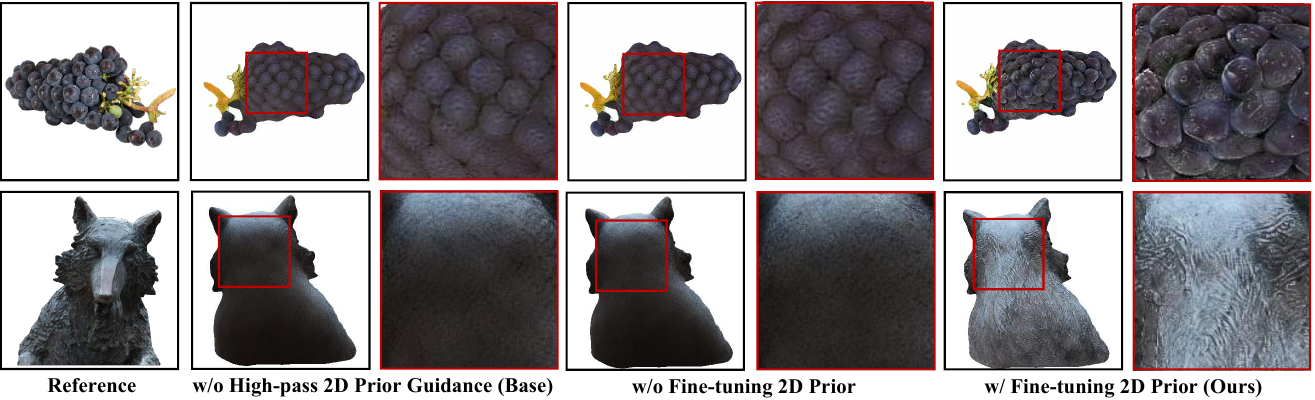}
\caption{\textbf{Boosting high-frequency details using high-pass 2D prior guidance with fine-tuning 2D prior.}}\label{fig:exp_VSD_2D_show}
\end{center}
\end{figure*}
\begin{figure*}[htbp]
\begin{center}
\includegraphics[width=1\textwidth]{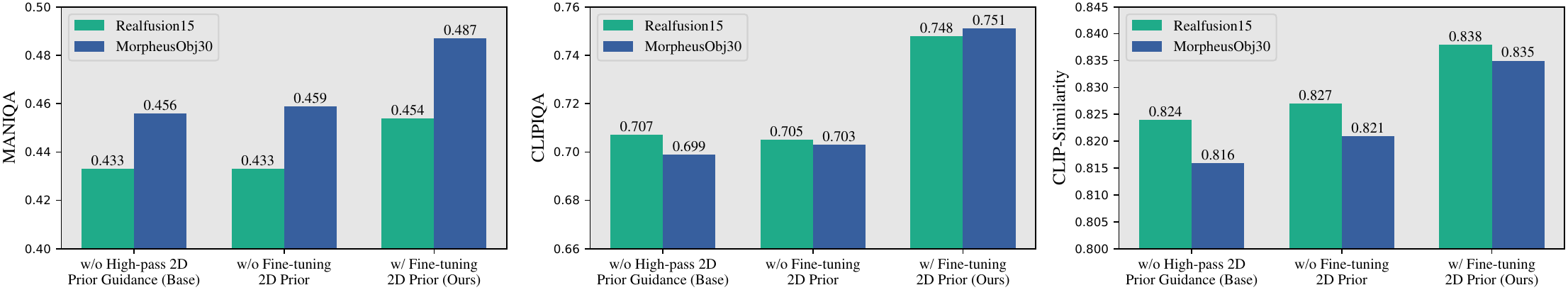}
\caption{\textbf{Ablation of fine-tuning 2D prior}, which shows both improvements in semantic consistency and reconstruction quality.} \label{fig:exp_VSD_2D_tabvis}
\end{center}
\end{figure*}
\textbf{Qualitative results.} 
As shown in Fig. \ref{fig:comp_sota}, pioneer works based on text-prompt 2D priors, \ie RealFusion \citep{realfusion} and NeuralLift \citep{neurallift}, fail to heavy color deviation and Janus Problem. 
Compared with previous work, Zero-1-to-3 \citep{zero123} has better object consistency and basically no color deviation problem, since it is fine-tuned using 3D annotated data, and uses image-prompt with encoding the camera pose into the input conditions. However, Zero-1-to-3 \citep{zero123} suffers from the high-frequency lack problem as we analyzed in Sec. \ref{sec:analysis_priors}. \\
Magic123 \citep{magic123} linearly weights the guidance of 3D priors and 2D priors in spatial domain to find a balance point. We can see that Magic123 \citep{magic123} has indeed weakened the color deviation problem and Janus Problem to a certain extent compared to previous works, but it is still unsatisfactory (The second row of sculptures is more cyan, and the third and fifth rows have multiple faces).
Compared to the state-of-the-art methods \citep{zero123, magic123}, our model takes high-quality reconstruction, since it maintains the object consistency of the 3D prior guidance and enhances the high-frequency texture details of the 2D prior guidance in frequency domain.
For the prevailing state-of-the-art feed-forward methods, we can also notice that our method still produces higher-quality reconstruction compared to these methods. Unlike 2D generation, the scarcity of 3D data can cause data distribution bias in the trained model, and thus cannot effectively generate high-quality 3D objects for some out-of-domain, in-the-wild image inputs. This indicates the practical significance of score distillation techniques using our proposed method that leverages 3D priors for geometric guidance and 2D priors for detailed texture guidance.

\subsection{Ablation Study}

\textbf{Ablation of high-passing 2D prior}. 
We conduct ablations of high-passing 2D prior. Denote only using 3D prior guidance as \textit{Base}, and adding \textit{Full/High-pass 2D Prior Guidance} (set $\boldsymbol{\Lambda}_{2D}$ to $\mathbf{I}$ or $\mathbf{I}_{D} - [\mathbf{I}_{B},\mathbf{0};\mathbf{0},\mathbf{0}]_{D \times D}$) for comparison. As shown in Fig. \ref{fig:exp_H_tabvis}, compared to adding null-pass (\textit{Base}) and full-pass 2D prior guidance, adding high-pass 2D prior guidance takes a clear margin on both novel views generation quality evaluated by MANI\\QA and CLIPIQA, and semantic consistency evaluated by CLIP-Similarity. Besides, compared with null-pass 2D prior guidance (\textit{Base}), adding full-pass 2D prior guidance even weakens the novel views generation quality, while it does not significantly improve semantic consistency. This suggests that directly introducing 2D priors in the spatial domain does not necessarily have a promoting effect on the optimization of 3D representations, although 2D priors do involve high-frequency guidance information. Instead, adjusting prior guidance in the frequency domain can facilitate the extraction of prior knowledge.

To quantitatively demonstrate the suppression of color deviation by high-passing 2D prior guidance while boosting high-frequency details, we show rendered images and the histogram statistics of 100 views rendered uniformly in azimuth of displayed cases in Fig. \ref{fig:exp_color_deviation}. Since the 3D prior takes object consistency and has almost no color deviation as we analyzed in Sec \ref{sec:analysis_priors}, the histogram of \textit{Base} (adding null-pass 2D prior, namely without 2D prior guidance) in the top line can be considered as the real color distribution. Adding full-pass 2D prior guidance shown in the second line takes heavier color deviation compared to adding a high-pass one shown in the bottom line since the erroneous guidance of the low-frequency part of 2D prior has been introduced when optimizing in the spatial domain (full-pass). 
We also visualize the high-frequency part of PSD for \textit{Base} and w/ high-pass 2D prior guidance (\textit{Ours}) shown in Fig. \ref{fig:freq_distribution}. Quantitatively analyzing the spectral energy also proves that our method can promote high-frequency component details while suppressing color deviation.
The ablative experiments on high-pass filtering illustrate the effectiveness of our frequency-domain optimization method in improving generation quality and semantic consistency by promoting high-frequency details and addressing color deviation issues.

\begin{figure}[t!]
\begin{center}
\includegraphics[width=1.0\columnwidth]{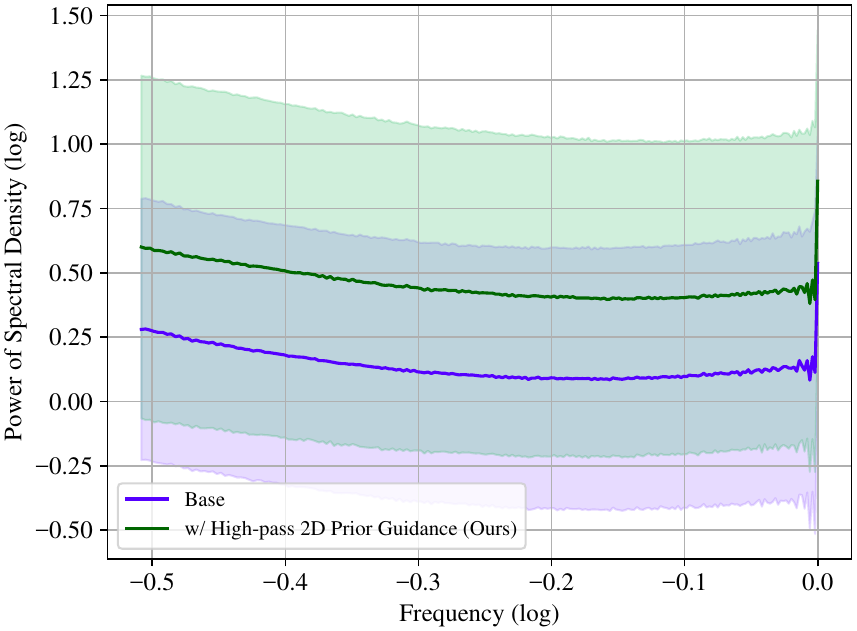}
\caption{\textbf{Power Spectral Density (PSD) of Base and w/ High-pass 2D Prior Guidance (Ours)}, obtained by 100 images sampled uniformly in azimuth of each case on Realfusion15 \citep{realfusion}. We visualize the high-frequency part of PSD. Both PSD and Frequency are scaled by $\log$.}
\label{fig:freq_distribution}
\end{center}
\end{figure}

\textbf{Ablation of fine-tuning 2D prior}. 
To explore the effectiveness of variational score distillation with multiple diffusion priors in frequency domain in Sec. \ref{sec:framework_hybrid_optimization},
we conduct ablative experiments of fine-tuning 2D prior (use LoRA $\boldsymbol{\epsilon}_{\phi_{2D}}(\mathbf{x}_t,t,\mathbf{y},\mathcal{C})$, instead of the Gaussian noise $\boldsymbol{\epsilon}$ in high-pass 2D prior guidance $\mathcal{G}_{2D}^{S2}$). As shown in Fig. \ref{fig:exp_VSD_2D_show}, we can not boost high-frequency details using high-pass 2D prior guidance without fine-tuning, which shows a close performance compared to \textit{Base}, since the over-smoothing of SDS offsets the high-frequency guidance of 2D prior. Quantitative experiments in Fig. \ref{fig:exp_VSD_2D_tabvis} show that the performance is close between \textit{w/o high-pass 2D prior guidance (Base)} and \textit{w/ high-pass 2D prior guidance but w/o fine-tuning}, while fine-tuning significantly improves the generation quality and semantic consistency of novel views. 
As shown in Fig. \ref{fig:train_step_PSD}, the average high-frequency PSD obtained by 8 views rendered uniformly in azimuth of all cases at each 100 training step during the training process of the second stage on Realfusion15 \citep{realfusion} shows that fine-tuning 2D prior can overcome the over-smoothing of SDS and boost high-frequency details.

\begin{figure}[!t]
\begin{center}
\includegraphics[width=1\columnwidth]{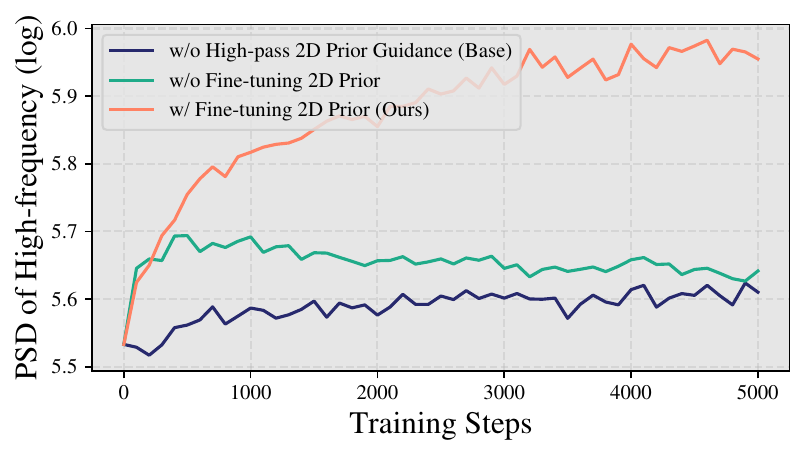}
\caption{\textbf{The average high-frequency PSD during the training process on Realfusion15 \citep{realfusion}.}}\label{fig:train_step_PSD}
\end{center}
\end{figure}

\textbf{Exploration of different prior weights}. 
We conduct detailed ablations of using different 2D prior weights $k_{2D}$ from $0.0$ to $5.0$, where the final choice is $3.0$ in this work. When $k_{2D} = 0.0$, it means deactivating the 2D prior guidance.
The \textbf{quantitative} results of using different prior weights on Realfusion15~\citep{realfusion} and Morpheus3D datasets are shown in Table \ref{tab:abla_prior_weights}.
We can notice that when deactivating the 2D prior guidance, both semantic consistency (CLIP-Simila\\rity\textuparrow) and generation quality (MANIQA\textuparrow, CLIPIQA\textuparrow) show a significant performance gap compared to Ours, demonstrating the effectiveness of high-pass 2D prior guidance. Furthermore, by observing the quantitative generation results corresponding to $k_{2D}$ weights set from $1.0$ to $5.0$, we can see that our proposed frequency-domain hybrid optimization framework is quite robust to adjustments in prior weights.
The \textbf{qualitative} results of using different 2D prior weights are shown in Figure \ref{fig:abla_prior_weights}.
We can see that when deactivating the 2D prior guidance, the visual quality and texture details are significantly degraded compared to activating it. Furthermore, we can notice that setting the weight to $1.0$ already significantly improves texture details, while even adjusting the weight to $5.0$ only results in limited over-texturing. These results demonstrate that our method is robust to changes in prior weights.

\begin{figure*}[!t]
\begin{center}
\includegraphics[width=1\textwidth]{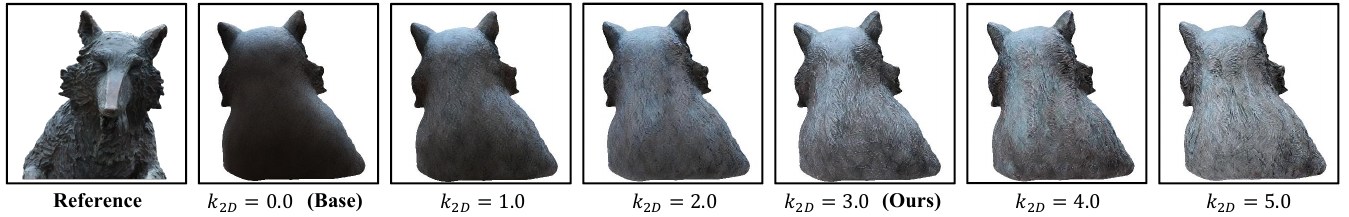}
\caption{\textbf{Analysis of different prior weights.}}\label{fig:abla_prior_weights}
\end{center}
\end{figure*}
\begin{table*}[t]
\centering
\tabcolsep=4pt
\caption{\textbf{Analysis of different prior weights.}}\label{tab:abla_prior_weights}
\setlength{\tabcolsep}{1mm}{
\begin{tabular}{ccccccc}
\toprule
& \multicolumn{3}{c}{\textit{\textbf{Realfusion15}}} & \multicolumn{3}{c}{\textit{\textbf{MorpheusObj30}}} \\ \midrule
Prior Weights & CLIP-Similarity \textuparrow & MANIQA \textuparrow & CLIPIQA \textuparrow & CLIP-Similarity \textuparrow & MANIQA \textuparrow & CLIPIQA \textuparrow \\ \midrule
$k_{2D}=0.0$ (w/o 2D Prior) & 0.824 & 0.433 & 0.707 & 0.816 & 0.456 & 0.699 \\ \midrule 
$k_{2D}=1.0$ & 0.835 & 0.453 & 0.737 & 0.833 & 0.483 & 0.733 \\ \midrule 
$k_{2D}=2.0$ & 0.836 & 0.454 & 0.748 & 0.837 & 0.486 & 0.744 \\ \midrule 
$k_{2D}=3.0$ (Ours) & 0.838 & 0.454 & 0.748 & 0.835 & 0.487 & 0.751 \\ \midrule 
$k_{2D}=4.0$ & 0.833 & 0.453 & 0.758 & 0.831 & 0.484 & 0.755 \\ \midrule 
$k_{2D}=5.0$ & 0.832 & 0.454 & 0.763 & 0.833 & 0.483 & 0.764 \\ \bottomrule 
\end{tabular}
}
\end{table*}

\textbf{Exploration of different frequency bounds}. 
The frequency bound rate and the prior weights are \textbf{robust} to the final generated results and will not cause a significant change in quality due to simple adjustments.
We conduct detailed ablations of using different frequency bounds $r=B/D$ from $0.0$ (Full Pass) to $1.0$ (Null Pass) are shown in Tab.~\ref{tab:abla_freq_bound}, where the final choice is $0.6$ in this work. When $r = 1.0$ (Null Pass), it means deactivating the 2D prior guidance. When $r$ increases, it means we filter more low-frequency components of the 2D prior diffusion guidance for optimization.
The \textbf{quantitative} results of using different frequency bounds on Realfusion15~\citep{realfusion} and Morpheus3D datasets are shown in Table \ref{tab:abla_freq_bound}.
Whenever $r = 0$ (\ie full-pass filtering) or $r = 1$ (\ie null-pass filtering), the generation quality (MANIQ\\A\textuparrow, CLIPIQA\textuparrow) and semantic consistency (CLIP-Simila\\rity\textuparrow) of novel views are far worse than Ours ($r = 0.6$) due to color deviation caused by erroneous low-frequency guidance or high-frequency lacking caused by deficiency of high-frequency 2D priors guidance. 
But, we can also notice that when we adjust the frequency bound $r$ in the wide middle range, the generated results do not fluctuate significantly over evaluation metrics, which shows that our proposed method is robust to the frequency bound.
The \textbf{qualitative} results of different frequency bounds are shown in Fig. \ref{fig:abla_freq_bound}.
The generated results from Full Pass exhibit noticeable artifacts in textures and color deviation caused by erroneous low-frequency guidance. The generated results from the Null Pass lose a significant amount of texture details. However, when we set the frequency bound $r$ to the wide middle range, the generated quality is similar, with the main difference being the sharpness of the texture details, which indicates the robustness of our method to the frequency bound.

\subsection{Study of fine-tuning 3D prior} \label{subsec:study_finetuning_3D}

Although through the above experiments, we have dem-\\onstrated the correctness and effectiveness of the proposed theoretical framework and pipeline, there is still an important question: \textit{why can't we directly use VSD on 3D priors}, \ie fine-tuning 3D priors, to promote the reconstruction of high-frequency details, and \textit{what is the necessity to introduce 2D priors with high-pass filtering in frequency domain?}
These questions serve as additional supplements to the motivation for further illustrating the importance and effectiveness of our proposed frequency hybrid optimization method.
To answer these questions, we conduct study of fine-tuning 3D prior (use LoRA $\boldsymbol{\epsilon}_{\phi_{3D}}(\mathbf{x}_t,t,\mathbf{y},\mathbf{c})$, instead of the Gaussian noise $\boldsymbol{\epsilon}$ in 3D prior guidance $\mathcal{G}_{3D}^{S2}$, namely using 3D prior with VSD). As shown in Fig. \ref{fig:exp_VSD_3D_scatter}, experiments on MorpheusObj30 show that fine-tuning 3D prior takes an overall trend of decreasing reconstruction quality, but increasing high-frequency PSD. However, this phe-\\nomenon is actually the result of high-frequency artifacts, rather than texture details as shown in Fig. \ref{fig:exp_VSD_3D_show}. This is because continuously fine-tuning 3D priors, which are fine-tuned from 2D priors, in new downstream tasks will cause a significant decrease in performance. Therefore, our method utilizes 3D prior without fine-tuning (namely, with SDS) to optimize shape and coarse texture, and introduces high-pass 2D prior to overcome the high-frequency lacking problem of 3D prior. This also confirms the significance of hybrid optimization using multiple diffusion priors in frequency domain.

\begin{figure*}[!t]
\begin{center}
\includegraphics[width=1\textwidth]{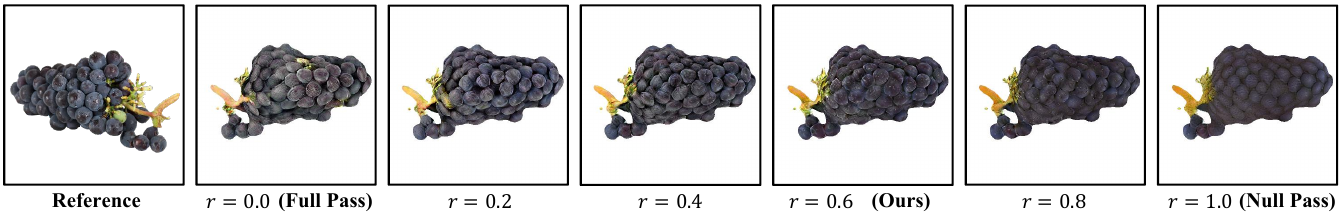}
\caption{\textbf{Analysis of different frequency bounds.}}\label{fig:abla_freq_bound}
\end{center}
\end{figure*}
\begin{table*}[t]
\centering
\tabcolsep=4pt
\caption{\textbf{Analysis of different frequency bounds.}}\label{tab:abla_freq_bound}
\setlength{\tabcolsep}{1mm}{
\begin{tabular}{ccccccc}
\toprule
& \multicolumn{3}{c}{\textit{\textbf{Realfusion15}}} & \multicolumn{3}{c}{\textit{\textbf{MorpheusObj30}}} \\ \midrule
Frequency Bound Rate & CLIP-Similarity \textuparrow & MANIQA \textuparrow & CLIPIQA \textuparrow & CLIP-Similarity \textuparrow & MANIQA \textuparrow & CLIPIQA \textuparrow \\ \midrule
$r=0.0$ (Full Pass) & 0.823 & 0.401 & 0.695 & 0.827 & 0.438 & 0.696 \\ \midrule 
$r=0.2$ & 0.824 & 0.461 & 0.748 & 0.828 & 0.487 & 0.745 \\ \midrule 
$r=0.4$ & 0.829 & 0.458 & 0.757 & 0.835 & 0.486 & 0.755 \\ \midrule 
$r=0.6$ (Ours) & 0.838 & 0.454 & 0.748 & 0.835 & 0.487 & 0.751 \\ \midrule 
$r=0.8$ & 0.835 & 0.457 & 0.742 & 0.836 & 0.485 & 0.738 \\ \midrule 
$r=1.0$ (Null Pass) & 0.824 & 0.433 & 0.707 & 0.816 & 0.456 & 0.699 \\ \bottomrule 
\end{tabular}
}
\end{table*}

\subsection{Computational Costs}
\label{sec:computational_costs}

The experiments are conducted on RTX 3090 and 48GB RTX 4090. 
We evaluate the computational costs on 48GB RTX 4090 for fair comparisons.
In Tab. \ref{tab:comp_time}, we report the computational cost of both feed-forward methods and optimization-based methods, including generation time and peak VRAM, and we also report the semantic consistency and generation quality for reference, \ie~CLIP-Similarity~\citep{clip} and CLIPIQA~\citep{clipiqa}. 
The computational costs of all methods in Tab. \ref{tab:comp_time} are evaluated on the "banana" case using 48GB RTX 4090, and the results of CLIPIQA are from Realfusion15~\citep{realfusion} and consistent with Tab. \ref{tab:comp_sota}.
More details of the evaluation for computational costs are shown in the Appendix.
The training time of NeRF training in the first stage is around 26 minutes, and the training time of DMTet training in the first stage is around 14 minutes (40 minutes in total for the first stage). The training time of the second stage is around 72 minutes. The total training time of our pipeline is around 1 hour and 52 minutes.
The peak VRAM utilized is about 18GB.

For the \textbf{optimization-based methods}, since our method does not have the textual inversion process and the filtering operation takes up little time, our method achieves \textbf{\textit{the highest performance}} shown in Tab. \ref{tab:comp_sota}, while consuming \textbf{\textit{the least training time}} compared to other \textit{training} methods, \ie NeuralLift \citep{neurallift}, RealFusion \citep{realfusion}, and Magic123 \citep{magic123}. Comparisons are sho-\\wn in Tab. \ref{tab:comp_time}. NeuralLift \citep{neurallift} and RealFusion \citep{realfusion} are single-stage methods, while Magic123 \citep{magic123} and ours take a coarse-to-fine two-stage pipeline. The compared optimi-\\zation-based methods require textual inversion due to text-prompt 2D prior guidance, but our method gives up text-prompt, saving a lot of training time.

For the \textbf{feed-forward methods},
we can notice that feed-forward methods often take minutes to generate results, while optimization-based methods often take hours to generate results. Despite this, our method is still the most efficient among the optimization-based methods.
But, as we discussed above, 
feed-forward approaches have achieved increasing popularity owing to their efficiency in 3D generation. 
However, the limited availability of 3D data for the 3D generation task makes it challenging to rapidly develop feed-forward methods that exhibit strong generalization and high output quality. So, we also present the quantitative results of semantic consistency (CLIP-Similarity) and generation quality (CLIPIQA) for reference. We can notice that although feed-forward methods generate images faster, their generalization performance for some in-the-wild single image inputs is not satisfactory, and their generation results are still inferior to our method.
Regarding VRAM costs, we can see that our method does not consume excessive amounts of VRAM compared to other feed-forward methods. This is due to the lightweight fine-tuning of LoRA~\citep{lora}, and the fact that our proposed frequency domain perspective method does not have a decisive impact on the overall complexity.

\begin{figure}[!t]
\begin{center}
\includegraphics[width=1\columnwidth]{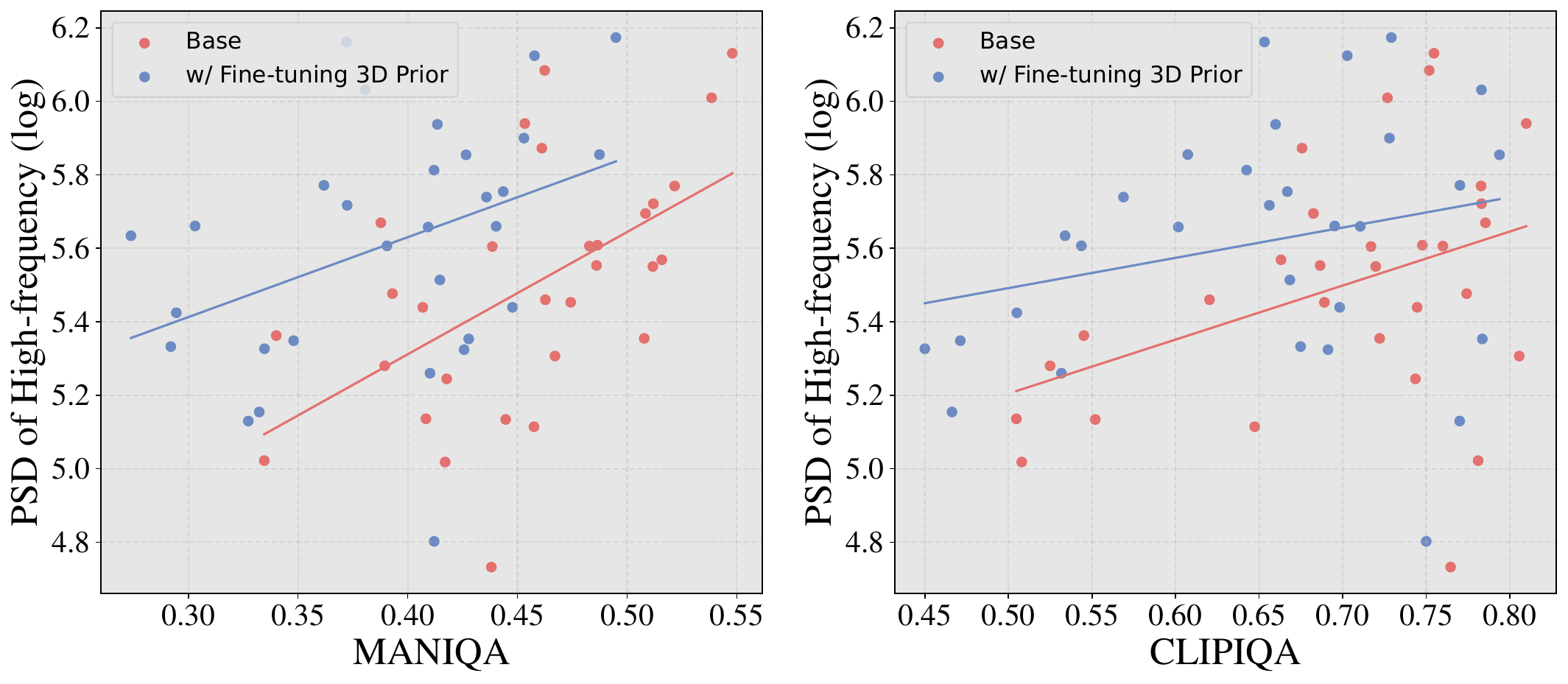}
\caption{\textbf{Study of fine-tuning 3D prior on MorpheusObj30}. Fine-tuning 3D prior shows an overall trend of increasing high-frequency PSD, but decreasing reconstruction quality.}\label{fig:exp_VSD_3D_scatter}
\end{center}
\end{figure}
\begin{figure}[!t]
\begin{center}
\includegraphics[width=1\columnwidth]{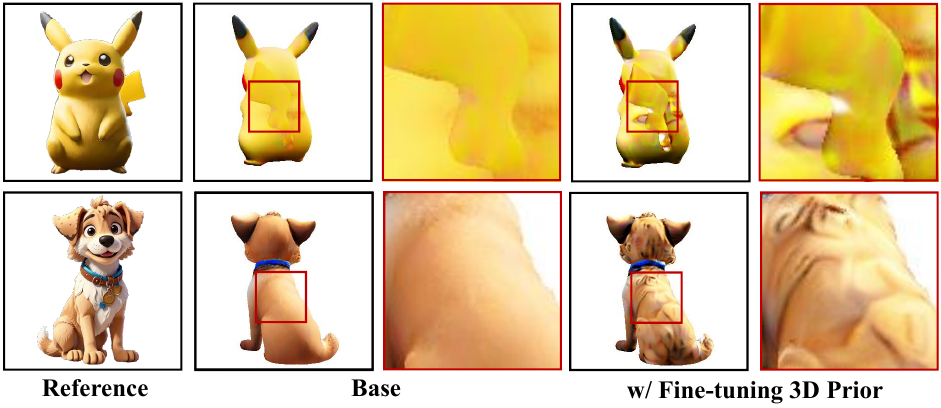}
\caption[width=0.95\textwidth]{\textbf{Artifacts caused by fine-tuning 3D prior.} Improvements in high-frequency PSD are actually artifacts.}\label{fig:exp_VSD_3D_show}
\end{center}
\end{figure}

\begin{figure}[t]
    \centering
    \includegraphics[width=\columnwidth]{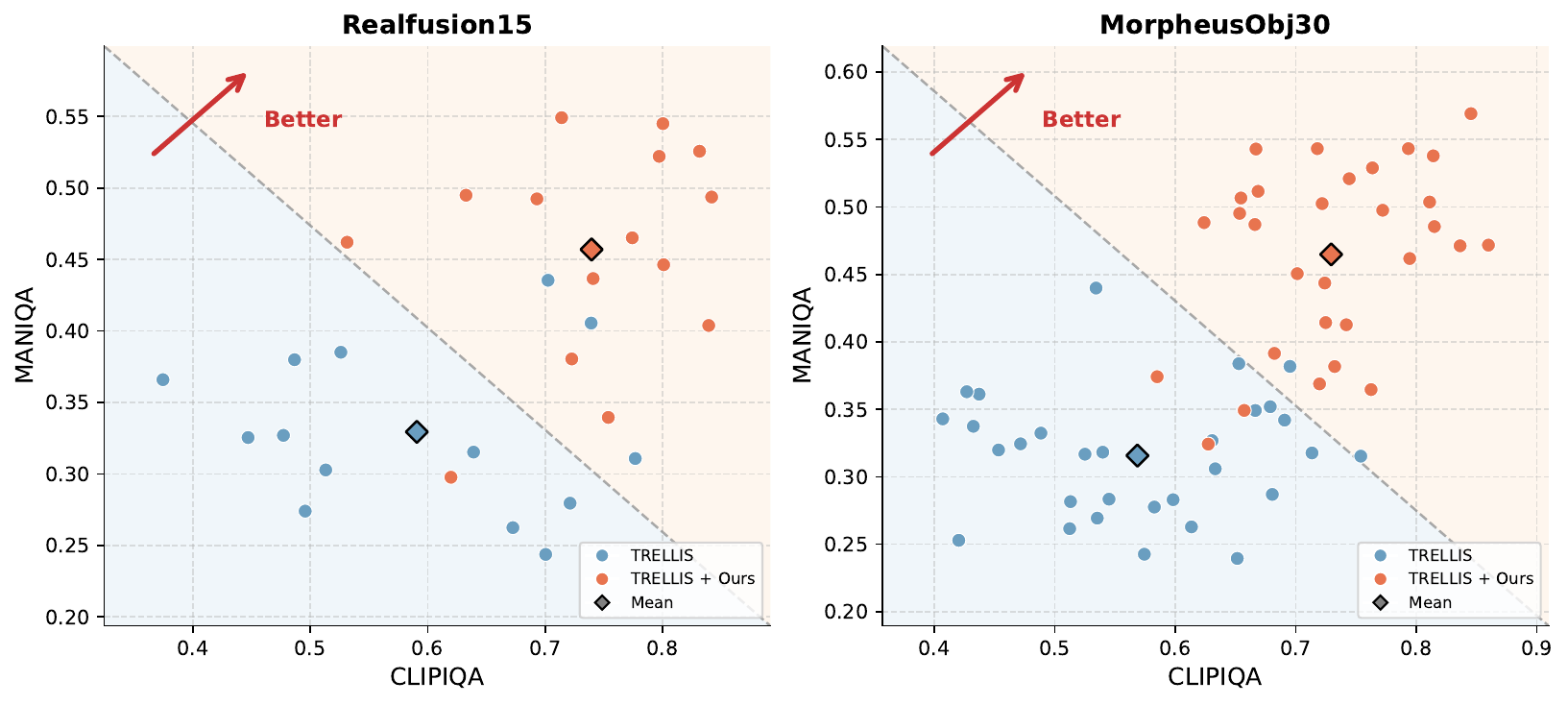}
    \includegraphics[width=\columnwidth]{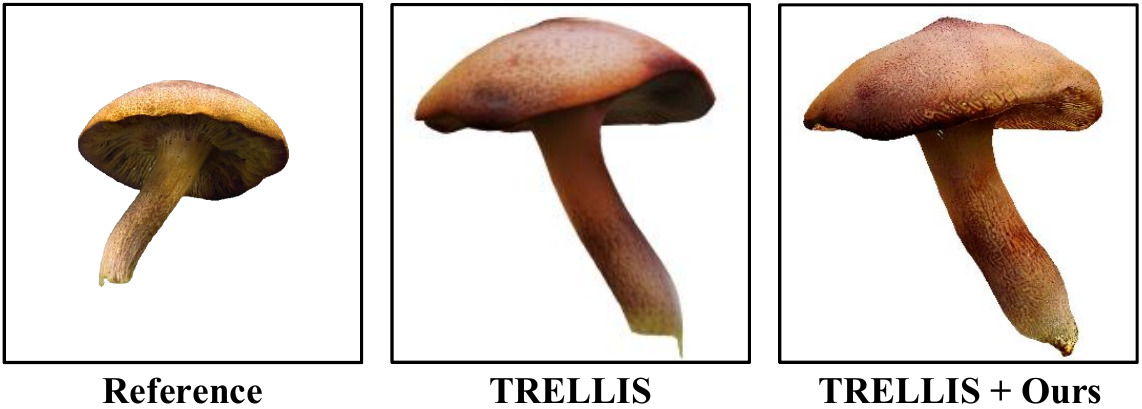}
    \caption{\textbf{Study of refining from generated assets of the feed-forward method.} Top: Quantitative comparison shows that our method provides overall improvements. Bottom: Qualitative comparison shows that our method further improves the texture details of the TRELLIS baseline.}
    \label{fig:assets_further_refine}
\end{figure}
\begin{table}[t]
\centering
\caption{\textbf{Comparisons on computational costs.}}
\label{tab:comp_time}
\setlength{\tabcolsep}{0.05mm}{
\begin{tabular}{cccc|cc}
\toprule
Method & Pub. & CLIP-Sim. & CLIPIQA & \makecell[c]{Time\\(Minutes)} & \makecell[c]{VRAM\\(GB)} \\ \hline
\multicolumn{6}{c}{\textit{\textbf{Feed-forward Methods}}} \\ \hline
Shap-E & - & 0.633 & 0.560 & 36.340 & 9.506 \\ 
Zero-1-to-3 & {\color{blue}ICCV23} & 0.825 & 0.565 & 3.304 & 15.844 \\ 
OpenLRM & {\color{blue}ICLR24} & 0.761 & 0.466 & 0.223 & 15.432 \\ 
ImageDream & - & 0.769 & 0.542 & 0.115 & 13.553 \\ 
LGM & {\color{blue}ECCV24} & 0.753 & 0.516 & 0.934 & 14.367 \\ 
Wonder3D & {\color{blue}CVPR24} & 0.804 & 0.584 & 0.670 & 9.127 \\ 
SyncDreamer & {\color{blue}ICLR24} & 0.643 & 0.386 & 14.387 & 47.942 \\ 
SV3D & {\color{blue}ECCV24} & 0.813 & 0.628 & 0.914 & 47.637 \\ 
LN3Diff & {\color{blue}ECCV24} & 0.735 & 0.424 & 3.135 & 19.592 \\ 
Gauss.Any. & {\color{blue}ICLR25} & 0.762 & 0.460 & 0.724 & 13.535 \\ 
3DTopia-XL & {\color{blue}CVPR25} & 0.784 & 0.384 & 1.532 & 26.889 \\ 
TRELLIS & {\color{blue}CVPR25} & 0.825 & 0.591 & 0.880 & 13.182 \\ \hline 
\multicolumn{6}{c}{\textit{\textbf{Optimized-based Methods}}} \\ \hline
NeuralLift & {\color{blue}CVPR23} & 0.642 & 0.412 & 160.923 & 33.778 \\ 
RealFusion & {\color{blue}CVPR23} & 0.705 & 0.457 & 130.254 & 21.180 \\ 
Magic123 & {\color{blue}ICLR24} & 0.826 & 0.713 & 341.083 & 18.766 \\ 
Ours & - & 0.838 & 0.748 & 111.981 & 17.752 \\ \bottomrule 
\end{tabular}
}
\end{table}

\subsection{Exploring on Refining Feed-Forward 3D Assets}
While the computational analysis in Sec.~\ref{sec:computational_costs} shows that our pipeline is competitive among optimization-based methods, a natural question is whether it can further improve the quality of assets produced by feed-forward 3D generation methods, which operate within a short computational time. To investigate this, we use TRELLIS~\citep{trellis} results as the pre-generated assets and then apply our method to refine them.
Specifically, the pre-generated textured mesh replaces the Instant-NGP NeRF optimization stage. We initialize the DMTet representation directly from the pre-generated textured mesh and warm-start the texture by fitting it to multi-view renders of the baked texture of the pre-generated textured mesh. The front-view reference and 3D prior condition are replaced by a render of the pre-generated textured mesh at the reference camera. 
The subsequent optimization of DMTet follows the same recipe as our standard pipeline (Sec.~\ref{sec:pipeline}).

The experimental results are shown in Fig.~\ref{fig:assets_further_refine}. For scale-consistent and intuitive visualization, the rendering results have been cropped and zoomed in. We can notice that applying our method with the initialization of TRELLIS assets improves the generation quality. The refined results exhibit visibly enhanced texture details compared to the TRELLIS baseline. These results show that our frequency-domain hybrid optimization framework can serve as a refinement module for assets produced by feed-forward methods.

\section{Conclusion and Limitation}\label{sec:conclusion}
In this paper, we revisit three types of diffusion priors for single-view 3D object reconstruction, including the high information entropy of text-prompt 2D priors, the low-frequency color deviation of image-prompt 2D priors, and the high-frequency lack of image-prompt 3D priors. 
Inspired by these observations, we theoretically present a unified framework of hybrid optimization using multiple diffusion priors in frequency domain to extract unique advantages of priors and heuristically control prior guidance from the frequency perspective. 
Meanwhile, we first explore extracting 3D implicit information from image-prompt 2D priors, which helps to reduce the high formation entropy of text-prompt while providing sufficient high-frequency guidance. Under our theoretical framework and exploration, we propose a two-stage pipeline of 3D object generation from any single unposed image in the wild, which effectively suppresses the view inconsistency, the low-frequency deviation, and the high-frequency lack problem. 

The limitation of our work is the handcrafted and fixed prior weights and frequency bound rate of filtering, which requires heuristic design. 
But these parameters are quite robust for the generation quality.
Another promising future direction is to extend our frequency-domain hybrid optimization framework to video diffusion models for dynamic asset generation. In this setting, the rendered signal naturally acquires an additional temporal dimension, which introduces a new frequency axis within our framework to be analyzed and potentially supplemented alongside the visual frequency studied in this paper.
We leave them for our future exploration.

\begin{acknowledgements}
This work is partially supported by grants from the National Natural Science Foundation of China (No.62132002), Guizhou Provincial Major Scientific and Technological Program (Qiankehe Zhongda [2025] No. 032), Beijing Nova Program (No.20250484786), and the Fundamental Research Funds for the Central Universities.

\end{acknowledgements}

\section*{Appendix}

\setcounter{equation}{12}

\appendix

\section{More Results}
\label{sec:more_results}

{\color{blue} We strongly recommend watching \textbf{more visual results} in \url{https://icvteam.github.io/Morpheus3D.html} or in the video of \textbf{Supplementary Material}.}

\section{Proof}
\label{sec:proof}

\subsection{Lemmas}
\label{subsec:lemmas}
\begin{lemma}\label{lemma:parseval}
    Parseval's Theorem: the total energy of a signal in space and frequency domain are indentical. Namely, given a signal $\mathbf{x}(t)$ and its spectral $\mathbf{u}(f)$, we have:
    \begin{equation}
        \begin{aligned}
            &\rm{Continuous:} \int_{-\infty}^{+\infty} |\mathbf{x}(t)|^2 \mathrm{d}t = \int_{-\infty}^{+\infty} |\mathbf{u}(f)|^2 \mathrm{d}f ,\\
            &\rm{Discrete:} \Vert \mathbf{x} \Vert_2^2 = \Vert \mathbf{u} \Vert_2^2.
        \end{aligned}
    \end{equation}
\end{lemma}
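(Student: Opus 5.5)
The discrete statement is the one actually used later, with $\mathbf{u}=\mathbf{V}^{\rm T}\mathbf{x}$ for the orthogonal DCT matrix $\mathbf{V}$, so I prove that part purely algebraically. The only property needed is orthogonality, $\mathbf{V}\mathbf{V}^{\rm T}=\mathbf{V}^{\rm T}\mathbf{V}=\mathbf{I}$, which holds for the orthonormal DCT-II by construction: its basis vectors are pairwise orthogonal cosines, normalized by the factors $\sqrt{1/D}$ and $\sqrt{2/D}$. Given this, the computation is
\begin{equation*}
\Vert \mathbf{u}\Vert_2^2=\mathbf{u}^{\rm T}\mathbf{u}=\mathbf{x}^{\rm T}\mathbf{V}\mathbf{V}^{\rm T}\mathbf{x}=\mathbf{x}^{\rm T}\mathbf{x}=\Vert\mathbf{x}\Vert_2^2 .
\end{equation*}
For a complex unitary transform such as the normalized DFT, the same argument works with conjugate transposes in place of transposes.

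For the continuous statement I would take the Fourier transform in the unitary convention $\mathbf{u}(f)=\int \mathbf{x}(t)e^{-2\pi i f t}\,\mathrm{d}t$. First I prove the identity for Schwartz functions. Write $|\mathbf{u}(f)|^2=\mathbf{u}(f)\overline{\mathbf{u}(f)}$ and expand it as a double integral. Integrate over $f$ using Fubini, with a Gaussian damping factor $e^{-\varepsilon\pi f^2}$ that is removed in the limit $\varepsilon\to 0$; the inner integral in $f$ then acts as an approximate delta at $t=s$. This recovers $\int|\mathbf{x}(t)|^2\,\mathrm{d}t$. Finally, I extend to all of $L^2$ by density, since the map is an isometry on a dense subspace (Plancherel).

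The main obstacle is only in the continuous case: justifying the interchange of integrals and the delta limit. The damping factor together with dominated convergence handles this. The discrete case, which is all that Propositions \ref{proposition:VSD_in_frequency} and \ref{proposition:multi-diffusion-priors} require, is immediate from orthogonality. The later proofs then use the consequence $\Vert \boldsymbol{\Lambda}\mathbf{V}^{\rm T}\mathbf{z}\Vert_2=\Vert\mathbf{V}\boldsymbol{\Lambda}\mathbf{V}^{\rm T}\mathbf{z}\Vert_2$.
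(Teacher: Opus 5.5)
Your proof is correct, but it does more than the paper, which gives no proof of this lemma. The paper quotes it in the appendix as the classical Parseval/Plancherel theorem. It then uses it only in the full-pass setting, to replace $\Vert \mathbf{u}_0-\mathbf{u}_y\Vert_2^2$ by $\Vert \mathbf{x}_0-\mathbf{y}\Vert_2^2$. This happens explicitly in the proof of Proposition 2 for $\boldsymbol{\Lambda}_i=\mathbf{I}$, and implicitly in the reference-guidance terms $\mathcal{G}_{Ref}^{S1}$ and $\mathcal{G}_{Ref}^{S2}$.

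What the citation leaves implicit and your argument makes explicit is that the identity depends on normalization. The computation $\mathbf{u}^{\rm T}\mathbf{u}=\mathbf{x}^{\rm T}\mathbf{V}\mathbf{V}^{\rm T}\mathbf{x}=\mathbf{x}^{\rm T}\mathbf{x}$ holds precisely because $\mathbf{V}^{\rm T}$ is orthogonal; for the unnormalized DFT it would be off by a factor $D$. Likewise, your unitary convention $e^{-2\pi i f t}$ is what removes the $2\pi$ factor from the continuous version.

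Applied to $\mathbf{z}=\mathbf{x}_0-\mathbf{y}$ via linearity of $\mathbf{V}^{\rm T}$, your one-line discrete computation is all the paper ever needs. The damping-plus-density Plancherel argument is standard and sound, but the paper never uses the continuous statement.

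Two small precisions. First, for images $\mathbf{V}^{\rm T}$ is the separable 2D DCT: a Kronecker product of 1D orthonormal DCT matrices, applied per channel. Its normalizing factors are therefore per axis, not $\sqrt{1/D}$ and $\sqrt{2/D}$ with $D=\dim\mathbf{u}_0$. Orthogonality is still inherited from the factors, and the paper builds it into the definition of $\mathbf{V}^{\rm T}$ anyway, so you may take it as given. Second, the identity $\Vert \boldsymbol{\Lambda}\mathbf{V}^{\rm T}\mathbf{z}\Vert_2=\Vert\mathbf{V}\boldsymbol{\Lambda}\mathbf{V}^{\rm T}\mathbf{z}\Vert_2$ in your closing sentence is true (it is orthogonality of $\mathbf{V}$ applied to $\boldsymbol{\Lambda}\mathbf{V}^{\rm T}\mathbf{z}$). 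However, the paper never invokes the lemma in that form, only in the $\boldsymbol{\Lambda}=\mathbf{I}$ case above.
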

\begin{lemma}\label{lemma:VSD_02t}
    Global optimum of VSD (Theorem 1 in VSD \\ \citep{vsd}): For each $t > 0$, we have
    \begin{equation}
        \begin{aligned}
            &D_{KL}(q_t^{\mu}(\mathbf{x}_t|\mathbf{y}) \parallel p_t(\mathbf{x}_t | \mathbf{y})) = 0 \\
            \leftrightarrow &q_0^{\mu}(\mathbf{x}_0 | y) = p_0(\mathbf{x}_0 | \mathbf{y}).
        \end{aligned}
    \end{equation}
\end{lemma}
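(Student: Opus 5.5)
The statement immediately above is Lemma \ref{lemma:VSD_02t}, the global-optimum characterization of VSD. It says that the diffused KL vanishes at a positive noise level exactly when the clean distributions agree.

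My approach is to reduce it to injectivity of the forward diffusion operator. Fix $t>0$ and write the forward kernel as $q_t(\mathbf{x}_t|\mathbf{x}_0)=\mathcal{N}(\mathbf{x}_t;\alpha_t\mathbf{x}_0,\sigma_t^2\mathbf{I})$. Both marginals arise from the same linear map applied to the clean distributions:
\begin{equation*}
q_t^{\mu}(\mathbf{x}_t|\mathbf{y})=\int \mathcal{N}(\mathbf{x}_t;\alpha_t\mathbf{x}_0,\sigma_t^2\mathbf{I})\,q_0^{\mu}(\mathbf{x}_0|\mathbf{y})\,\mathrm{d}\mathbf{x}_0,
\end{equation*}
and analogously $p_t(\mathbf{x}_t|\mathbf{y})$ is obtained from $p_0$. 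The reverse direction ($\leftarrow$) is then immediate: if $q_0^{\mu}=p_0$, the two pushforwards coincide, so the KL divergence is $0$.

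For the forward direction ($\rightarrow$), I proceed in three steps.
\begin{enumerate}
\item First, $D_{KL}=0$ forces $q_t^{\mu}=p_t$ almost everywhere, by Gibbs' inequality. Since both are Gaussian convolutions, they are smooth, so they agree everywhere.
\item Second, the marginal at time $t$ is the law of $\alpha_t\mathbf{x}_0+\sigma_t\boldsymbol{\epsilon}$, with $\boldsymbol{\epsilon}$ independent of $\mathbf{x}_0$. Its characteristic function therefore factorizes:
\begin{equation*}
\varphi_{q_t^{\mu}}(\boldsymbol{\omega})=\varphi_{q_0^{\mu}}(\alpha_t\boldsymbol{\omega})\exp\!\left(-\tfrac{\sigma_t^2}{2}\Vert\boldsymbol{\omega}\Vert_2^2\right),
\end{equation*}
and the same identity holds for $p_0$.
\item Third, the Gaussian factor never vanishes and $\alpha_t>0$. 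Equality of the time-$t$ characteristic functions therefore gives $\varphi_{q_0^{\mu}}=\varphi_{p_0}$ on all of $\mathbb{R}^D$. Lévy's uniqueness theorem then yields $q_0^{\mu}=p_0$ as distributions.
\end{enumerate}

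This is the frequency-domain fact underlying the whole argument, in the same spirit as Lemma \ref{lemma:parseval}: Gaussian smoothing is an invertible multiplier in Fourier space.

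The main obstacle is technical rather than conceptual. We must make sure $\alpha_t>0$ for $t<1$; at $t=1$ we have $\alpha_1=0$, and the equivalence genuinely fails there. The fix is to restrict to $t\in(0,1)$, or to assume the schedule keeps $\alpha_t>0$, as VSD does. We also need that the conditioning on $\mathbf{y}$, and on $\mathbf{c}$ where relevant, is applied identically to both sides, so that the argument can be run pointwise in the condition. Finally, measurability issues are handled by working with probability measures rather than densities throughout. With these points in place, the lemma follows as in Theorem 1 of \citep{vsd}.
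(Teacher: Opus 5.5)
Your argument is correct and follows the route the paper relies on. The paper cites Theorem 1 of VSD rather than proving the lemma, but in its proof of Corollary~\ref{corollary:VSD_02t_freq} it reproduces the same reparameterization $\mathbf{x}_t=\alpha_t\mathbf{x}_0+\sigma_t\boldsymbol{\epsilon}$ and the factorization $\varphi_{q_t}(s)=\exp(-\sigma_t^2 s^2/2)\,\varphi_{q_0}(\alpha_t s)$, and concludes by cancelling the nonvanishing Gaussian factor as you do. Your added caveat that $\alpha_t>0$ is needed, so the equivalence holds for $t\in(0,1)$ and fails at $t=1$, is a valid sharpening that the paper leaves implicit.
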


\subsection{Proof of Proposition 1}
\label{subsec:proof_proposition_1}

\textbf{\textit{Proof} of Proposition 1 in the main manuscript.}

Given the optimization object
\begin{equation} \label{apxeq:min_spatial}
    \begin{aligned}
        \min_{\mu}\{ & P(\zeta_c=0)\mathbb{E}_{\theta\sim\mu}[-\log{q_0^{\mu}(\mathbf{x}_0|\mathbf{y},\mathbf{c},\zeta_c=0)}] +\\
        & P(\zeta_c=1)\mathbb{E}_{\mathbf{c}}[D_{KL}(q_0^{\mu}(\mathbf{x}_0|\mathbf{y},\mathbf{c}) \parallel p_0^*(\mathbf{x}_0|\mathbf{y},\mathbf{c}))]\},
    \end{aligned}
\end{equation}
we have the equivalent form of Eq. \ref{apxeq:min_spatial} as
\begin{equation} \label{apxeq:min_spatial_0}
    \begin{aligned}
        \min_{\mu}\{&(1-\lambda_c)\mathbb{E}_{\theta\sim\mu}[-\log{q_0^{\mu}(\mathbf{x}_0|\mathbf{y},\mathbf{c},\zeta_c=0)}] + \\
        &\lambda_c\mathbb{E}_{\mathbf{c}}[D_{KL}(q_0^{\mu}(\mathbf{x}_0|\mathbf{y},\mathbf{c}) \parallel p_0^*(\mathbf{x}_0|\mathbf{y},\mathbf{c}))]\}.
    \end{aligned}
\end{equation}
For the first term of Eq. \ref{apxeq:min_spatial_0}, we have
\begin{equation} \label{apxeq:min_spatial_1}
    \begin{aligned}
        &(1-\lambda_c)\mathbb{E}_{\theta\sim\mu}[-\log{q_0^{\mu}(\mathbf{x}_0|\mathbf{y},\mathbf{c},\zeta_c=0)}] \\
        = &\frac{1 - \lambda_c}{2\sigma^2}\mathbb{E}_{\theta\sim\mu}\left[(\mathbf{x}_0 - \mathbf{y})^{\rm{T}}(\mathbf{x}_0 - \mathbf{y}) - \log{C}\right] \\
        = &\frac{1 - \lambda_c}{2\sigma^2}\mathbb{E}_{\theta\sim\mu}\left[\parallel\mathbf{x}_0 - \mathbf{y} \parallel_2^2 - \log{C}\right],
    \end{aligned}
\end{equation}
where $C$ is a constant related to $\sigma$. So Eq. \ref{apxeq:min_spatial_1} can be transformed into $\frac{1 - \lambda_c}{2\sigma^2}\mathbb{E}_{\theta\sim\mu}\left[\parallel\mathbf{x}_0 - \mathbf{y} \parallel_2^2\right]$ and takes
\begin{equation} \label{apxeq:min_spatial_2}
    \begin{aligned}
        &\frac{1 - \lambda_c}{2\sigma^2}\mathbb{E}_{\theta\sim\mu}\left[\parallel\mathbf{x}_0 - \mathbf{y} \parallel_2^2\right] \\
        \leq &\frac{1 - \lambda_c}{\sigma^2}\mathbb{E}_{\theta\sim\mu}\left[\parallel\mathbf{x}_0 - \mathbf{y} \parallel_2^2 \right].
    \end{aligned}
\end{equation}
For simplifying and aligning to \citep{neurallift}, we can optimize the upper bound in Eq. \ref{apxeq:min_spatial_2} according to ELBO and we can transform the optimization objective of Eq. \ref{apxeq:min_spatial_1} into
\begin{equation} \label{apxeq:min_spatial_3}
    \begin{aligned}
        \min_{\mu}\{&\frac{1 - \lambda_c}{\sigma^2}\mathbb{E}_{\theta\sim\mu}\left[\parallel\mathbf{x}_0 - \mathbf{y} \parallel_2^2 \right] + \\
        &\lambda_c\mathbb{E}_{\mathbf{c}}[D_{KL}(q_0^{\mu}(\mathbf{x}_0|\mathbf{y},\mathbf{c}) \parallel p_0^*(\mathbf{x}_0|\mathbf{y},\mathbf{c}))]\}.
    \end{aligned}
\end{equation}
Following Lemma \ref{lemma:VSD_02t} and the construction in VSD \\ \citep{vsd}, we can transform the optimization objective of Eq. \ref{apxeq:min_spatial_3} into
\begin{equation}\label{apxeq:optim_obj_spatial}
    \begin{aligned}
        \mu^* &= \arg\min_{\mu}\{ \frac{1 - \lambda_c}{\sigma^2}\mathbb{E}_{\theta \sim \mu(\theta|\mathbf{y})}[\Vert \mathbf{x}_0 - \mathbf{y}\Vert_2^2] \\
        &+ \lambda_c\mathbb{E}_{t, \mathbf{c}}[ \frac{\sigma_t}{\alpha_t}\omega(t) D_{KL}(q_t^{\mu}(\mathbf{x}_t|\mathbf{y},\mathbf{c}) \parallel p_t^*(\mathbf{x}_t|\mathbf{y},\mathbf{c}))]\}, \\
    \end{aligned}
\end{equation}
where $\alpha_t, \sigma_t$ are the degradation coefficients and $\omega(t)$ is a time-dependent weighting function of the prior $p_0^*$, and we denote the objective in Eq. \ref{apxeq:optim_obj_spatial} as $\mathcal{E}[\mu]$. For any distribution $\mu \in \mathbb{W}_2(\Theta)$, we optimize the distribution $\mu(\theta|\mathbf{y})$ in the 2-Wasserstein space $\mathbb{W}_2(\Theta)$, then we have the gradient flow of the optimization objective in Eq. \ref{apxeq:optim_obj_spatial} as
\begin{equation} \label{apxeq:prop31_flow1}
    \begin{aligned}
        \frac{\partial{\mu_{\tau}}}{\partial{\tau}} &= - \nabla_{\theta}\mathcal{E}[\mu] = \nabla_{\theta} \cdot (\mu_{\tau} \nabla_{\theta}\frac{\delta{\mathcal{E}[\mu_{\tau}]}}{\delta{\mu_{\tau}}}) \\
        &= \nabla_{\theta} \cdot (\mu_{\tau} \nabla_{\theta}\mathbb{E}_{t,\mathbf{c}}[\frac{\sigma_t}{\alpha_t}\omega(t)\lambda_c(\log{q_t^{\mu}(\mathbf{x}_t|\mathbf{y},\mathbf{c})} \\
        &- \log{p_t(\mathbf{x}_t|\mathbf{y},\mathbf{c})} + 1)]) \\
        &+ {\nabla_{\theta} \cdot (\mu_{\tau} \frac{1-\lambda_c}{\sigma^2} \cdot \nabla_{\theta} \Vert \mathbf{x}_0 - \mathbf{y}\Vert_2^2)} \\
        &= \nabla_{\theta} \cdot (\mu_{\tau} \mathbb{E}_{t,c,\epsilon}[\frac{\sigma_t}{\alpha_t}\omega(t)\lambda_c(\nabla_{\mathbf{x}_t}\log{q_t^{\mu}(\mathbf{x}_t|\mathbf{y},\mathbf{c})} \\
        & -\nabla_{\mathbf{x}_t}\log{p_t(\mathbf{x}_t|\mathbf{y},\mathbf{c})})\frac{\partial{\mathbf{x}_t}}{\partial{\theta}}]) \\
        &+ {\nabla_{\theta} \cdot (\mu_{\tau} \frac{1-\lambda_c}{\sigma^2} \cdot \nabla_{\theta} \Vert \mathbf{x}_0 - \mathbf{y}\Vert_2^2)}. \\
    \end{aligned}
\end{equation}

Following the definition of Fokker-Planck formulation, we can derive Eq. \ref{apxeq:prop31_flow1} to
\begin{equation} \label{apxeq:prop31_flow2}
    \begin{aligned}
        &\frac{\mathbf{d}{\theta_{\tau}}}{\mathbf{d}{\tau}} = -\{\mathbb{E}_{t,c,\epsilon}[\frac{\sigma_t}{\alpha_t}\omega(t)(\lambda_c(\nabla_{\mathbf{x}_t}\log{q_t^{\mu}(\mathbf{x}_t|\mathbf{y},\mathbf{c})} \\
        &- \nabla_{\mathbf{x}_t}\log{p_t(\mathbf{x}_t|\mathbf{y},\mathbf{c})}))\frac{\partial{\mathbf{x}_t}}{\partial{\theta}}] + \frac{1-\lambda_c}{\sigma^2} \cdot \nabla_{\theta} \Vert \mathbf{x}_0 - \mathbf{y}\Vert_2^2\}, \\
    \end{aligned}
\end{equation}
which is equivalent to
\begin{equation} \label{apxeq:flow_spatial}
    \begin{aligned}
        \frac{\mathbf{d}{\theta_{\tau}}}{\mathbf{d}{\tau}} &= - \{\lambda_c \mathbb{E}_{t,c,\epsilon}[\omega(t)(\boldsymbol{\epsilon}_*(\mathbf{x}_t,t,\mathbf{y},\mathbf{c}) -\\
        & \boldsymbol{\epsilon}_{\phi}(\mathbf{x}_t,t,\mathbf{y},\mathbf{c}))\frac{\partial{\mathbf{x}_0}}{\partial{\theta}}] + \frac{1-\lambda_c}{\sigma^2} \cdot \nabla_{\theta} \Vert \mathbf{x}_0 - \mathbf{y}\Vert_2^2 \}. \\
    \end{aligned}
\end{equation}
So we have the gradient flow of Eq. \ref{apxeq:min_spatial} in Eq. \ref{apxeq:flow_spatial}.

\subsection{Proof of Proposition 2}
\label{subsec:proof_proposition_2}

\textbf{Global optimum of VSD in frequency domain}. To prove Proposition 2 in the main manuscript, we need to introduce the additional corollary of Lemma \ref{lemma:VSD_02t}.
\begin{corollary}\label{corollary:VSD_02t_freq}
    Global optimum of VSD in frequency domain: For each $t > 0$, we have
    \begin{equation}\label{apxeq:VSD_02t_freq_coro1}
        \begin{aligned}
            &D_{KL}(q_t^{\mu}(\mathbf{u}_t|\mathbf{y}) \parallel p_t(\mathbf{u}_t | \mathbf{y})) = 0 \\
            \leftrightarrow &q_0^{\mu}(\mathbf{u}_0 | y) = p_0(\mathbf{u}_0 | \mathbf{y}),
        \end{aligned}
    \end{equation}
    where $\mathbf{u}_t = \mathbf{V}^{\rm{T}}\mathbf{x}_t$ and $\mathbf{u}_0 = \mathbf{V}^{\rm{T}}\mathbf{x}_0$. We also have
    \begin{equation}\label{apxeq:VSD_02t_freq_coro2}
        \begin{aligned}
            &D_{KL}(q_t^{\mu}(\mathbf{u}_t^i|\mathbf{y}) \parallel p_t(\mathbf{u}_t^i | \mathbf{y})) = 0 \\
            \leftrightarrow &q_0^{\mu}(\mathbf{u}_0^i | y) = p_0(\mathbf{u}_0^i | \mathbf{y}),
        \end{aligned}
    \end{equation}
    where $\mathbf{u}_t^i = \boldsymbol{\Lambda}_i\mathbf{u}_t$ and $\mathbf{u}_0^i = \boldsymbol{\Lambda}_i\mathbf{u}_0$.
\end{corollary}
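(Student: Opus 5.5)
We reduce both equivalences of Corollary \ref{corollary:VSD_02t_freq} to Lemma \ref{lemma:VSD_02t}. The key observation is that the orthogonal DCT $\mathbf{V}^{\rm{T}}$, and more generally any fixed linear map applied to the clean signal, commutes with the forward diffusion process.

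For Eq. \ref{apxeq:VSD_02t_freq_coro1}, we write the forward process as $\mathbf{x}_t = \alpha_t\mathbf{x}_0 + \sigma_t\boldsymbol{\epsilon}$ with $\boldsymbol{\epsilon}\sim\mathcal{N}(\mathbf{0},\mathbf{I})$. Then
\begin{equation*}
\mathbf{u}_t = \mathbf{V}^{\rm{T}}\mathbf{x}_t = \alpha_t\mathbf{u}_0 + \sigma_t\mathbf{V}^{\rm{T}}\boldsymbol{\epsilon}.
\end{equation*}
Since $\mathbf{V}$ is orthogonal, $\mathbf{V}^{\rm{T}}\boldsymbol{\epsilon}\sim\mathcal{N}(\mathbf{0},\mathbf{I})$, so $\{\mathbf{u}_t\}$ is again a Gaussian diffusion with the same schedule $(\alpha_t,\sigma_t)$, started from $q_0^{\mu}(\mathbf{u}_0|\mathbf{y})$ and $p_0(\mathbf{u}_0|\mathbf{y})$ respectively. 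We can then argue in either of two ways.

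The first way is to apply Lemma \ref{lemma:VSD_02t} verbatim to this diffusion in the $\mathbf{u}$ variables. The second is to use invariance of KL under bijections: $D_{KL}(q_t^{\mu}(\mathbf{u}_t)\Vert p_t(\mathbf{u}_t)) = D_{KL}(q_t^{\mu}(\mathbf{x}_t)\Vert p_t(\mathbf{x}_t))$, because the Jacobian $|\det\mathbf{V}|=1$ cancels. Likewise $q_0^{\mu}(\mathbf{u}_0)=p_0(\mathbf{u}_0)$ iff $q_0^{\mu}(\mathbf{x}_0)=p_0(\mathbf{x}_0)$, since pushforwards under a bijection agree iff the originals agree. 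Either route gives the first equivalence.

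For Eq. \ref{apxeq:VSD_02t_freq_coro2}, $\boldsymbol{\Lambda}_i$ is a diagonal binary matrix, so it is not invertible and the bijection argument fails. This is the main obstacle. We handle it by restricting to the coordinates selected by $\boldsymbol{\Lambda}_i$. Let $S=\{j:(\boldsymbol{\Lambda}_i)_{jj}=1\}$, and identify $\mathbf{u}^i$ with the subvector $\mathbf{u}_S$ (the remaining coordinates are identically zero, a degenerate factor shared by both distributions, so it contributes nothing). Projecting the forward process onto $S$ gives
\begin{equation*}
\mathbf{u}_{t,S} = \alpha_t\mathbf{u}_{0,S} + \sigma_t(\mathbf{V}^{\rm{T}}\boldsymbol{\epsilon})_S,
\end{equation*}
where $(\mathbf{V}^{\rm{T}}\boldsymbol{\epsilon})_S\sim\mathcal{N}(\mathbf{0},\mathbf{I}_{|S|})$ is independent of $\mathbf{u}_{0}$. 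Hence the marginals $q_t^{\mu}(\mathbf{u}_t^i|\mathbf{y})$ and $p_t(\mathbf{u}_t^i|\mathbf{y})$ are exactly the Gaussian-diffused versions of the marginals $q_0^{\mu}(\mathbf{u}_0^i|\mathbf{y})$ and $p_0(\mathbf{u}_0^i|\mathbf{y})$, with the same schedule.

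Applying Lemma \ref{lemma:VSD_02t} in $\mathbb{R}^{|S|}$ then yields the second equivalence. If one prefers not to invoke the lemma as a black box, both directions are direct. The direction "$\Leftarrow$" is immediate, since equal initial marginals diffuse to equal marginals. For "$\Rightarrow$", KL $=0$ gives $q_t=p_t$. Each of these is the law of $\alpha_t\mathbf{u}_{0,S}$ convolved with $\mathcal{N}(\mathbf{0},\sigma_t^2\mathbf{I})$. The Gaussian characteristic function never vanishes, so dividing by it shows the characteristic functions of $\alpha_t\mathbf{u}_{0,S}$ agree. This recovers equality at $t=0$ for $\alpha_t>0$.
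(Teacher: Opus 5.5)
Your proposal is correct and follows essentially the paper's approach. Like the paper, you rewrite the forward process as $\mathbf{u}_t=\alpha_t\mathbf{u}_0+\sigma_t\mathbf{V}^{\rm{T}}\boldsymbol{\epsilon}$, use orthogonality of $\mathbf{V}$ to keep the noise isotropic Gaussian, and deconvolve by dividing out the nonvanishing Gaussian characteristic function, as the paper does for $q_t^{\mu}$ and $p_t$. Restricting to the selected coordinates $S$ handles the rank-deficient $\boldsymbol{\Lambda}_i$ more cleanly than the paper's degenerate $\mathcal{N}(\mathbf{u}_t^i;\alpha_t\mathbf{u}_0^i,\sigma_t^2\boldsymbol{\Lambda}_i)$ with a generalized inverse, and your condition $\alpha_t>0$ is one the paper's ``for each $t>0$'' silently needs, since the equivalence fails at $t=1$ where $\mathbf{x}_1$ is pure noise.
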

\textit{Proof} of Corollary \ref{corollary:VSD_02t_freq}.

According to the proof of Lemma \ref{lemma:VSD_02t} in \citep{vsd}, we have
\begin{equation} \label{apxeq:vsd02t_freq_repara0}
    \mathbf{x}_t = \alpha_t \mathbf{x}_0 + \sigma_t \boldsymbol{\epsilon},
\end{equation}
where $\boldsymbol{\epsilon} \sim \mathcal{N}(\mathbf{0}, \mathbf{I})$. Based on Eq. \ref{apxeq:vsd02t_freq_repara0}, it can be derived that
\begin{equation} \label{apxeq:vsd02t_freq_0}
    \mathbf{u}_t = \alpha_t \mathbf{u}_0 + \sigma_t \mathbf{V}^{\rm{T}} \boldsymbol{\epsilon}.
\end{equation}
With $\sigma_t^2 \mathbf{I} = \sigma_t\mathbf{V}^{\rm{T}}(\sigma_t\mathbf{V}^{\rm{T}})^{\rm{T}}$ in Eq. \ref{apxeq:vsd02t_freq_0}, the characteristic functions of $q_t^{\mu}(\mathbf{u}_t|\mathbf{y}) \sim \mathcal{N}(\mathbf{u}_t; \alpha_t \mathbf{u}_0, \sigma_t^2 \mathbf{I})$ and $q_0^{\mu}(\mathbf{u}_0|\mathbf{y})$ satisfy
\begin{equation} \label{apxeq:vsd02t_freq_1}
    \begin{aligned}
        \varphi_{q_t^{\mu}(\mathbf{u}_t|\mathbf{y})}(s) &= \varphi_{q_0^{\mu}(\mathbf{u}_t|\mathbf{y})}(\alpha_t s) \cdot \varphi_{\mathcal{N}(\mathbf{0}, \mathbf{I})}(\sigma_t s) \\
        &= \exp{(-\frac{\sigma_t^2 s^2}{2})}\varphi_{q_0^{\mu}(\mathbf{u}_t|\mathbf{y})}(\alpha_t s).
    \end{aligned}
\end{equation}
Similar to Eq. \ref{apxeq:vsd02t_freq_1}, $p_t(\mathbf{u}_t|\mathbf{y})$ and $p_t(\mathbf{u}_0|\mathbf{y})$ satisfy
\begin{equation} \label{apxeq:vsd02t_freq_2}
    \varphi_{p_t(\mathbf{u}_t|\mathbf{y})}(s) = \exp{(-\frac{\sigma_t^2 s^2}{2})}\varphi_{p_0(\mathbf{u}_t|\mathbf{y})}(\alpha_t s).
\end{equation}
So Eq. \ref{apxeq:VSD_02t_freq_coro1} is proved. For $\mathbf{u}_0^i$ and $\mathbf{u}_t^i$, we have
\begin{equation} \label{apxeq:vsd02t_freq_3}
    \mathbf{u}_t^i = \alpha_t \mathbf{u}_0^i + \sigma_t \boldsymbol{\Lambda_i}\mathbf{V}^{\rm{T}} \boldsymbol{\epsilon}.
\end{equation}
Since $\boldsymbol{\Lambda}_i$ is diagonal and consists of 0 and 1, the powers $\boldsymbol{\Lambda}_i^n$ and generalized inverse $\boldsymbol{\Lambda}_i^+$ are both equivalent to $\boldsymbol{\Lambda}_i$. So we have $q_t^{\mu}(\mathbf{u}_t^i|\mathbf{y}) \sim \mathcal{N}(\mathbf{u}_t^i;\alpha_t\mathbf{u}_0^i, \sigma_t^2\boldsymbol{\Lambda}_i)$ and Eq. \ref{apxeq:VSD_02t_freq_coro2} can also be proved with similar process of Eq. \ref{apxeq:vsd02t_freq_1} and Eq. \ref{apxeq:vsd02t_freq_2}.

\textbf{Conversion of score function in spatial and frequency domain.} To prove Proposition 2 in the main \\ manuscript, we also need to introduce the additional proposition.

\begin{proposition}\label{proposition:conversion_score}
    For the score function of diffusion models $\nabla_{\mathbf{x}_t}\log{p_t(\mathbf{x}_t)}$ in spatial domain, we have the corresponding formulation in frequency domain that
    \begin{equation}
        \nabla_{\mathbf{x}_t}\log{p_t(\mathbf{u}_t)} = \nabla_{\mathbf{x}_t}\log{p_t(\mathbf{x}_t)}.
    \end{equation}
    For each frequency component $\mathbf{u}_t^i = \boldsymbol{\Lambda}_i\mathbf{u}_t$, where $\boldsymbol{\Lambda}_i$ is diagonal, we have
    \begin{equation}
        \nabla_{\mathbf{x}_t}\log{p_t(\mathbf{u}_t^i)} = \mathbf{V} \boldsymbol{\Lambda}_i\mathbf{V}^{\rm{T}} \nabla_{\mathbf{x}_t}\log{p_t(\mathbf{x}_t)}.
    \end{equation}
\end{proposition}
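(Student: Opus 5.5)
The plan is to treat $\mathbf{u}_t=\mathbf{V}^{\rm T}\mathbf{x}_t$ as a change of variables by an orthogonal matrix and apply the chain rule. We read $\log p_t(\mathbf{u}_t)$ as the log-density of the pushforward of $p_t$ under $\mathbf{V}^{\rm T}$, evaluated at $\mathbf{V}^{\rm T}\mathbf{x}_t$ and differentiated with respect to $\mathbf{x}_t$.

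\textbf{First claim.} Since $\mathbf{V}$ is orthogonal, $|\det\mathbf{V}^{\rm T}|=1$. The change-of-variables formula then gives $p_t^{\mathbf{u}}(\mathbf{u}_t)=p_t^{\mathbf{x}}(\mathbf{V}\mathbf{u}_t)=p_t^{\mathbf{x}}(\mathbf{x}_t)$ with no Jacobian correction. Taking $\log$ and $\nabla_{\mathbf{x}_t}$ yields the identity immediately. As a consistency check, the chain rule gives $\nabla_{\mathbf{x}_t}\log p_t(\mathbf{u}_t)=\mathbf{V}\nabla_{\mathbf{u}_t}\log p_t(\mathbf{u}_t)$. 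This is the $\boldsymbol{\Lambda}_i=\mathbf{I}$ case of the general formula, matching the Gaussian covariance computation already used in Corollary~\ref{corollary:VSD_02t_freq}.

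\textbf{Second claim.} We use the chain rule through $\mathbf{u}_t^i=\boldsymbol{\Lambda}_i\mathbf{V}^{\rm T}\mathbf{x}_t$. The Jacobian is $\partial\mathbf{u}_t^i/\partial\mathbf{x}_t=\boldsymbol{\Lambda}_i\mathbf{V}^{\rm T}$, so
\[
\nabla_{\mathbf{x}_t}\log p_t(\mathbf{u}_t^i)=\mathbf{V}\boldsymbol{\Lambda}_i\,\nabla_{\mathbf{u}_t^i}\log p_t(\mathbf{u}_t^i).
\]
It then remains to identify $\boldsymbol{\Lambda}_i\nabla_{\mathbf{u}_t^i}\log p_t(\mathbf{u}_t^i)$ with $\boldsymbol{\Lambda}_i\nabla_{\mathbf{u}_t}\log p_t(\mathbf{u}_t)=\boldsymbol{\Lambda}_i\mathbf{V}^{\rm T}\nabla_{\mathbf{x}_t}\log p_t(\mathbf{x}_t)$, the last equality coming from the first claim.

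For this step we work in the Tweedie/noise-prediction form. By the forward relation $\mathbf{u}_t^i=\alpha_t\mathbf{u}_0^i+\sigma_t\boldsymbol{\Lambda}_i\mathbf{V}^{\rm T}\boldsymbol{\epsilon}$ from Corollary~\ref{corollary:VSD_02t_freq}, the score of the filtered variable on its support is $-(\mathbf{u}_t^i-\alpha_t\mathbb{E}[\mathbf{u}_0^i|\cdot])/\sigma_t^2$. Its nonzero coordinates are exactly the $\boldsymbol{\Lambda}_i$-projection of $-(\mathbf{u}_t-\alpha_t\hat{\mathbf{u}}_0)/\sigma_t^2=-\mathbf{V}^{\rm T}\boldsymbol{\epsilon}_\phi/\sigma_t$. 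Here we use that $\boldsymbol{\Lambda}_i$ is diagonal and binary, so $\boldsymbol{\Lambda}_i^2=\boldsymbol{\Lambda}_i=\boldsymbol{\Lambda}_i^+$, exactly as in the corollary. Substituting gives $\mathbf{V}\boldsymbol{\Lambda}_i\mathbf{V}^{\rm T}\nabla_{\mathbf{x}_t}\log p_t(\mathbf{x}_t)$.

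\textbf{Main obstacle.} The hard part is this marginalization step. For a generic $p_t$, the score of a marginal of $\mathbf{u}_t$ is the conditional expectation of the projected full score, not the projected score itself. I would handle it in one of two ways: justify it via the denoiser-based approximation of the score that the paper already adopts (where $\hat{\mathbf{x}}_0$ is a point estimate), or assume approximate independence between the retained and discarded DCT coefficients given $\mathbf{x}_t$. Under either assumption the conditional expectation collapses to the projection.
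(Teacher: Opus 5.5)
Your argument is correct once you adopt the assumption you flag, which the paper's proof also makes implicitly, and it follows a different route.

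\textbf{The paper's route.} The paper computes directly with Gaussians. It takes $p_t(\mathbf{x}_t)=\mathcal{N}(\alpha_t\mathbf{x}_0,\sigma_t^2\mathbf{I})$ with $\mathbf{x}_0$ fixed. Hence $p_t(\mathbf{u}_t)=\mathcal{N}(\alpha_t\mathbf{u}_0,\sigma_t^2\mathbf{I})$ and $p_t(\mathbf{u}_t^i)=\mathcal{N}(\alpha_t\mathbf{u}_0^i,\sigma_t^2\boldsymbol{\Lambda}_i)$, with the singular covariance handled via $\boldsymbol{\Lambda}_i^+=\boldsymbol{\Lambda}_i$. It rewrites each quadratic form in $\mathbf{x}_t$ using $\mathbf{V}\mathbf{V}^{\rm T}=\mathbf{I}$ and $\boldsymbol{\Lambda}_i\boldsymbol{\Lambda}_i^+\boldsymbol{\Lambda}_i=\boldsymbol{\Lambda}_i$. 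Differentiating gives $-(\mathbf{x}_t-\alpha_t\mathbf{x}_0)/\sigma_t^2$ and $-\mathbf{V}\boldsymbol{\Lambda}_i\mathbf{V}^{\rm T}(\mathbf{x}_t-\alpha_t\mathbf{x}_0)/\sigma_t^2$.

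\textbf{First identity.} Your unit-Jacobian argument buys more here: it holds for any density, not just the Gaussian kernel.

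\textbf{Second identity.} The obstacle you name is genuine, and the paper sidesteps it by modeling choice. With $\mathbf{x}_0$ fixed, $\mathbf{u}_t$ is isotropic Gaussian, so retained and discarded coefficients are independent and the marginal score is exactly the projection. That is your first fallback, so commit to it. Your Tweedie paragraph already presupposes the identification when it says the coordinates agree ``exactly'': it equates $\mathbb{E}[\mathbf{u}_0^i\mid\mathbf{u}_t^i]$ with $\boldsymbol{\Lambda}_i\mathbb{E}[\mathbf{u}_0\mid\mathbf{u}_t]$. So it adds nothing beyond the assumption.

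\textbf{Your two fallbacks are one condition.} Split $\mathbf{u}_t=(\mathbf{w}_t,\mathbf{z}_t)$ into retained and discarded blocks. For $t>0$, the marginal score of $\mathbf{w}_t$ equals the $\mathbf{w}$-block of $\nabla_{\mathbf{u}_t}\log p_t(\mathbf{u}_t)$ everywhere if and only if $\mathbf{w}_t$ and $\mathbf{z}_t$ are independent. Since the noise $\sigma_t\mathbf{V}^{\rm T}\boldsymbol{\epsilon}$ is isotropic with nonvanishing characteristic function, this holds if and only if $\mathbf{w}_0$ and $\mathbf{z}_0$ are independent. A point mass at $\mathbf{x}_0$ is the trivial case. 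So phrase the assumption as independence of the blocks of $\mathbf{u}_0$, not as something conditional on $\mathbf{x}_t$. Without some such assumption the second identity fails for a general data distribution, so no proof can avoid it.
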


\textit{Proof} of Proposition \ref{proposition:conversion_score}. 

Following Eq. \ref{apxeq:vsd02t_freq_repara0}, we have $p_t(\mathbf{x}_t) \sim \mathcal{N}(\alpha_t\mathbf{x}_0, \sigma_t^2 \mathbf{I})$. So we have the score function in spatial domain
\begin{equation}
    \begin{aligned}
        \nabla_{\mathbf{x}_t}\log{p_t(\mathbf{x}_t)} &= - \nabla_{\mathbf{x}_t}\frac{(\mathbf{x}_t - \alpha_t\mathbf{x}_0)^{\rm{T}}(\mathbf{x}_t - \alpha_t\mathbf{x}_0)}{2\sigma_t^2} \\
        &= -\frac{1}{2\sigma_t^2}((\nabla_{\mathbf{x}_t}(\mathbf{x}_t - \alpha_t\mathbf{x}_0))(\mathbf{x}_t - \alpha_t\mathbf{x}_0) \\
        &+ (\nabla_{\mathbf{x}_t}(\mathbf{x}_t - \alpha_t\mathbf{x}_0))(\mathbf{x}_t - \alpha_t\mathbf{x}_0)) \\
        &= -\frac{(\mathbf{x}_t - \alpha_t\mathbf{x}_0)}{\sigma_t^2}.
    \end{aligned}
\end{equation}
Following Eq. \ref{apxeq:vsd02t_freq_0}, we have $p_t(\mathbf{u}_t) \sim \mathcal{N}(\mathbf{u}_t; \alpha_t \mathbf{u}_0, \sigma_t^2 \mathbf{I})$. So, we have the score function in frequency domain
\begin{equation}
    \begin{aligned}
        &\nabla_{\mathbf{x}_t}\log{p_t(\mathbf{u}_t)} \\
        = &- \nabla_{\mathbf{x}_t}\frac{(\mathbf{V}^{\rm{T}}\mathbf{x}_t - \mathbf{V}^{\rm{T}}\alpha_t\mathbf{x}_0)^{\rm{T}}(\mathbf{V}^{\rm{T}}\mathbf{x}_t - \mathbf{V}^{\rm{T}}\alpha_t\mathbf{x}_0)}{2\sigma_t^2} \\
        = &- \nabla_{\mathbf{x}_t}\frac{(\mathbf{x}_t - \alpha_t\mathbf{x}_0)^{\rm{T}}\mathbf{V}\mathbf{V}^{\rm{T}}(\mathbf{x}_t - \alpha_t\mathbf{x}_0)}{2\sigma_t^2} \\
        = &-\frac{1}{2\sigma_t^2}((\nabla_{\mathbf{x}_t}(\mathbf{x}_t - \alpha_t\mathbf{x}_0))(\mathbf{x}_t - \alpha_t\mathbf{x}_0) \\
        &+ (\nabla_{\mathbf{x}_t}(\mathbf{x}_t - \alpha_t\mathbf{x}_0))(\mathbf{x}_t - \alpha_t\mathbf{x}_0)) \\
        = &-\frac{\mathbf{x}_t - \alpha_t\mathbf{x}_0}{\sigma_t^2} = \nabla_{\mathbf{x}_t}\log{p_t(\mathbf{x}_t)}.
    \end{aligned}
\end{equation}
For each frequency component $\mathbf{u}_t^i = \boldsymbol{\Lambda}_i \mathbf{u}_t$, we have $p_t(\mathbf{u}_t^i) \sim \mathcal{N}(\mathbf{u}_t^i;\alpha_t\mathbf{u}_0^i, \sigma_t^2\boldsymbol{\Lambda}_i)$ following Eq. \ref{apxeq:vsd02t_freq_3}, since $\boldsymbol{\Lambda}_i$ is diagonal and consists of 0 and 1, the powers $\boldsymbol{\Lambda}_i^n$ and generalized inverse $\boldsymbol{\Lambda}_i^+$ are both equivalent to $\boldsymbol{\Lambda}_i$. So, we have
\begin{equation}
    \begin{aligned}
        &\nabla_{\mathbf{x}_t}\log{p_t(\mathbf{u}_t^i)} \\
        = &- \nabla_{\mathbf{x}_t}\frac{(\mathbf{x}_t - \alpha_t\mathbf{x}_0)^{\rm{T}}\mathbf{V}\boldsymbol{\Lambda}_i\boldsymbol{\Lambda}_i^+\boldsymbol{\Lambda}_i\mathbf{V}^{\rm{T}}(\mathbf{x}_t - \alpha_t\mathbf{x}_0)}{2\sigma_t^2} \\
        = &- \nabla_{\mathbf{x}_t}\frac{(\mathbf{x}_t - \alpha_t\mathbf{x}_0)^{\rm{T}}\mathbf{V}\boldsymbol{\Lambda}_i\mathbf{V}^{\rm{T}}(\mathbf{x}_t - \alpha_t\mathbf{x}_0)}{2\sigma_t^2} \\
        = &-\frac{1}{2\sigma_t^2}((\nabla_{\mathbf{x}_t}(\mathbf{x}_t - \alpha_t\mathbf{x}_0))\mathbf{V}\boldsymbol{\Lambda}_i\mathbf{V}^{\rm{T}}(\mathbf{x}_t - \alpha_t\mathbf{x}_0) \\
        &+ (\nabla_{\mathbf{x}_t}\mathbf{V}\boldsymbol{\Lambda}_i\mathbf{V}^{\rm{T}}(\mathbf{x}_t - \alpha_t\mathbf{x}_0))(\mathbf{x}_t - \alpha_t\mathbf{x}_0)) \\
        = &-\frac{\mathbf{V}\boldsymbol{\Lambda}_i\mathbf{V}^{\rm{T}}(\mathbf{x}_t - \alpha_t\mathbf{x}_0)}{\sigma_t^2} \\
        = &\mathbf{V}\boldsymbol{\Lambda}_i\mathbf{V}^{\rm{T}} \nabla_{\mathbf{x}_t}\log{p_t(\mathbf{x}_t)}.
    \end{aligned}
\end{equation}

\textbf{\textit{Proof} of Proposition 2 in the main manuscript.}

Given the optimization object in frequency domain
\begin{equation} \label{apxeq:min_frequency}
    \begin{aligned}
        \min_{\mu}\{& P(\zeta_c=0)\mathbb{E}_{\theta\sim\mu}[-\log{q_0^{\mu}(\mathbf{u}_0^i|\mathbf{y},\mathbf{c},\zeta_c=0)}] +\\
        & P(\zeta_c=1)\mathbb{E}_{\mathbf{c}}[D_{KL}(q_0^{\mu}(\mathbf{u}_0^i|\mathbf{y},\mathbf{c}) \parallel p_0^*(\mathbf{u}_0^i|\mathbf{y},\mathbf{c}))]\},
    \end{aligned}
\end{equation}
we have the equivalent form of Eq. \ref{apxeq:min_frequency} as
\begin{equation} \label{apxeq:min_frequency_0}
    \begin{aligned}
        \min_{\mu}\{&(1-\lambda_c)\mathbb{E}_{\theta\sim\mu}[-\log{q_0^{\mu}(\mathbf{u}_0^i|\mathbf{y},\mathbf{c},\zeta_c=0)}] + \\
        &\lambda_c\mathbb{E}_{\mathbf{c}}[D_{KL}(q_0^{\mu}(\mathbf{u}_0^i|\mathbf{y},\mathbf{c}) \parallel p_0^*(\mathbf{u}_0^i|\mathbf{y},\mathbf{c}))]\}.
    \end{aligned}
\end{equation}
Due to $q_0^{\mu}(\mathbf{x}_0 | \mathbf{y}, \mathbf{c}, \zeta_c=0) \sim \mathcal{N}(\mathbf{x}_0; \mathbf{y}, \sigma^2 \mathbf{I})$, we can reparameterize it to $\mathbf{x}_0=\mathbf{y}+\sigma\boldsymbol{\epsilon}$, where $\boldsymbol{\epsilon} \sim \mathcal{N}(\mathbf{0}, \mathbf{I})$. So we have $\mathbf{u}_0 = \mathbf{u}_y + \sigma \mathbf{V}^{\rm{T}}\boldsymbol{\epsilon}$,  $\mathbf{u}_0^i = \mathbf{u}_y^i + \sigma \boldsymbol{\Lambda}_i \mathbf{V}^{\rm{T}}\boldsymbol{\epsilon}$ and $\mathbf{u}_0^i \sim \mathcal{N}(\mathbf{u}_0^i; \mathbf{u}_y^i, \sigma^2\boldsymbol{\Lambda}_i^2)$. Since $\boldsymbol{\Lambda}_i$ is diagonal and consists of 0 and 1, the powers $\boldsymbol{\Lambda}_i^n$ and generalized inverse $\boldsymbol{\Lambda}_i^+$ are both equivalent to $\boldsymbol{\Lambda}_i$. So we have $\mathbf{u}_0^i \sim \mathcal{N}(\mathbf{u}_0^i; \mathbf{u}_y^i, \sigma^2\boldsymbol{\Lambda}_i)$. Thus, for the first term of Eq. \ref{apxeq:min_frequency_0}, we have
\begin{equation} \label{apxeq:min_frequency_1}
    \begin{aligned}
        &(1 - \lambda_c)\mathbb{E}_{\theta\sim\mu}[-\log{q_0^{\mu}(\mathbf{u}_0^i|\mathbf{y},\mathbf{c},\zeta_c=0)}] \\
        = &\frac{1 - \lambda_c}{2\sigma^2}\mathbb{E}_{\theta\sim\mu}\big[- \log{C} + (\mathbf{u}_0^i - \mathbf{u}_y^i)^{\rm{T}}\boldsymbol{\Lambda}_i^+(\mathbf{u}_0^i - \mathbf{u}_y^i)\big] \\
        = &\frac{1 - \lambda_c}{2\sigma^2}\mathbb{E}_{\theta\sim\mu}\big[- \log{C} + (\mathbf{x}_0 - \mathbf{y})^{\rm{T}}\mathbf{V}\boldsymbol{\Lambda}_i\boldsymbol{\Lambda}_i^+\boldsymbol{\Lambda}_i\mathbf{V}^{\rm{T}}(\mathbf{x}_0 - \mathbf{y})\big] \\
        = &\frac{1 - \lambda_c}{2\sigma^2}\mathbb{E}_{\theta\sim\mu}\big[- \log{C} + (\boldsymbol{\Lambda}_i\mathbf{V}^{\rm{T}}(\mathbf{x}_0 - \mathbf{y}))^{\rm{T}}\boldsymbol{\Lambda}_i\mathbf{V}^{\rm{T}}(\mathbf{x}_0 - \mathbf{y})\big] \\
        = &\frac{1 - \lambda_c}{2\sigma^2}\mathbb{E}_{\theta\sim\mu}\left[\parallel\boldsymbol{\Lambda}_i(\mathbf{x}_0 - \mathbf{y}) \parallel_2^2 - \log{C}\right],
    \end{aligned}
\end{equation}
where $C$ is constant related to $\sigma$. Similar to Eq. \ref{apxeq:min_spatial_2}, the Eq. \ref{apxeq:min_frequency_0} can be transformed into
\begin{equation} \label{apxeq:min_frequency_3}
    \begin{aligned}
        \min_{\mu}\{&\frac{1 - \lambda_c}{\sigma^2}\mathbb{E}_{\theta\sim\mu}\left[\parallel \boldsymbol{\Lambda}_i(\mathbf{u}_0 - \mathbf{u}_y) \parallel_2^2 \right] + \\
        &\lambda_c\mathbb{E}_{\mathbf{c}}[D_{KL}(q_0^{\mu}(\mathbf{u}_0^i|\mathbf{y},\mathbf{c}) \parallel p_0^*(\mathbf{u}_0^i|\mathbf{y},\mathbf{c}))]\}.
    \end{aligned}
\end{equation}
Following Corollary \ref{corollary:VSD_02t_freq} and the construction in \citep{vsd}, we can transform the objective of Eq. \ref{apxeq:min_frequency_3} into
\begin{equation}\label{apxeq:optim_obj_frequency}
    \begin{aligned}
        &\mu^* = \arg\min_{\mu}\{ \frac{1 - \lambda_c}{\sigma^2}\mathbb{E}_{\theta\sim\mu}\left[\parallel \boldsymbol{\Lambda}_i(\mathbf{u}_0 - \mathbf{u}_y) \parallel_2^2 \right] + \\
        & \lambda_c\mathbb{E}_{t, \mathbf{c}}[ \frac{\sigma_t}{\alpha_t}\omega(t) D_{KL}(q_t^{\mu}(\mathbf{u}_t^i|\mathbf{y},\mathbf{c}) \parallel p_t^*(\mathbf{u}_t^i|\mathbf{y},\mathbf{c}))]\}, \\
    \end{aligned}
\end{equation}
and we denote it as $\mathcal{E}^{'}[\mu]$. We have the gradient flow of the optimization objective Eq. \ref{apxeq:optim_obj_frequency} as
\begin{equation} \label{apxeq:prop32_flow0}
    \begin{aligned}
        \frac{\partial{\mu_{\tau}}}{\partial{\tau}} &= - \nabla_{\theta}\mathcal{E}^{'}[\mu] = \nabla_{\theta} \cdot (\mu_{\tau} \nabla_{\theta}\frac{\delta{\mathcal{E}^{'}[\mu_{\tau}]}}{\delta{\mu_{\tau}}}) \\
        &= \nabla_{\theta} \cdot (\mu_{\tau} \nabla_{\theta}\mathbb{E}_{t,\mathbf{c}}[\frac{\sigma_t}{\alpha_t}\omega(t)\lambda_c(\log{q_t^{\mu}(\mathbf{u}_t^i|\mathbf{y},\mathbf{c})} \\
        &- \log{p_t(\mathbf{u}_t^i|\mathbf{y},\mathbf{c})} + 1)]) \\
        &+ {\nabla_{\theta} \cdot (\mu_{\tau} \frac{1-\lambda_c}{\sigma^2} \cdot \nabla_{\theta} \Vert \boldsymbol{\Lambda}_i(\mathbf{u}_0 - \mathbf{u}_y)\Vert_2^2)} \\
        &= \nabla_{\theta} \cdot (\mu_{\tau} \mathbb{E}_{t,c,\epsilon}[\frac{\sigma_t}{\alpha_t}\omega(t)\lambda_c(\nabla_{\mathbf{x}_t}\log{q_t^{\mu}(\mathbf{u}_t^i|\mathbf{y},\mathbf{c})} \\
        & -\nabla_{\mathbf{x}_t}\log{p_t(\mathbf{u}_t^i|\mathbf{y},\mathbf{c})})\frac{\partial{\mathbf{x}_t}}{\partial{\theta}}]) \\
        &+ {\nabla_{\theta} \cdot (\mu_{\tau} \frac{1-\lambda_c}{\sigma^2} \cdot \nabla_{\theta} \Vert \boldsymbol{\Lambda}_i(\mathbf{u}_0 - \mathbf{u}_y)\Vert_2^2)}. \\
    \end{aligned}
\end{equation}
Following Proposition \ref{proposition:conversion_score}, Eq. \ref{apxeq:prop32_flow0} is equivalent to
\begin{equation} \label{apxeq:prop32_flow1}
    \begin{aligned}
        \frac{\partial{\mu_{\tau}}}{\partial{\tau}} = &{\nabla_{\theta} \cdot (\mu_{\tau} \frac{1-\lambda_c}{\sigma^2} \cdot \nabla_{\theta} \Vert \boldsymbol{\Lambda}_i(\mathbf{u}_0 - \mathbf{u}_y)\Vert_2^2)} \\
        &+ \nabla_{\theta} \cdot (\mu_{\tau} \mathbb{E}_{t,c,\epsilon}[\frac{\sigma_t}{\alpha_t}\omega(t)\lambda_c \mathbf{V}\boldsymbol{\Lambda}_i\mathbf{V}^{\rm{T}}\\
        &(\nabla_{\mathbf{x}_t}\log{q_t^{\mu}(\mathbf{x}_t|\mathbf{y},\mathbf{c})} -\nabla_{\mathbf{x}_t}\log{p_t(\mathbf{x}_t|\mathbf{y},\mathbf{c})})\frac{\partial{\mathbf{x}_t}}{\partial{\theta}}]).
    \end{aligned}
\end{equation}
Following the definition of Fokker-Planck formulation, we can derive Eq. \ref{apxeq:prop32_flow1} to
\begin{equation} \label{apxeq:prop32_flow2}
    \begin{aligned}
        \frac{\mathbf{d}{\theta_{\tau}}}{\mathbf{d}{\tau}} = &-\{\frac{1-\lambda_c}{\sigma^2} \cdot \nabla_{\theta} \Vert \boldsymbol{\Lambda}_i(\mathbf{u}_0 - \mathbf{u}_y)\Vert_2^2 \\
        &+ \lambda_c\mathbb{E}_{t,c,\epsilon}\big[\frac{\sigma_t}{\alpha_t}\omega(t) \mathbf{V}\boldsymbol{\Lambda}_i\mathbf{V}^{\rm{T}}( \nabla_{\mathbf{x}_t}\log{q_t^{\mu}(\mathbf{x}_t|\mathbf{y},\mathbf{c})}\\
        &-\nabla_{\mathbf{x}_t}\log{p_t(\mathbf{x}_t|\mathbf{y},\mathbf{c})})\frac{\partial{\mathbf{x}_t}}{\partial{\theta}}\big]\} \\
    \end{aligned}
\end{equation}
which is equivalent to
\begin{equation} \label{apxeq:flow_frequency}
    \begin{aligned}
        &\frac{\mathbf{d}{\theta_{\tau}}}{\mathbf{d}{\tau}} = -\{\lambda_c \mathbb{E}_{t,c,\epsilon}[\omega(t)\mathbf{V}\boldsymbol{\Lambda}_i\mathbf{V}^{\rm{T}}(\boldsymbol{\epsilon}_*(\mathbf{x}_t,t,\mathbf{y},\mathbf{c}) -\\
        & \boldsymbol{\epsilon}_{\phi}(\mathbf{x}_t,t,\mathbf{y},\mathbf{c}))\frac{\partial{\mathbf{x}_0}}{\partial{\theta}}] + \frac{1-\lambda_c}{\sigma^2} \cdot \nabla_{\theta} \Vert \boldsymbol{\Lambda}_i(\mathbf{u}_0 - \mathbf{u}_y)\Vert_2^2\}. \\
    \end{aligned}
\end{equation}
So we have the gradient flow of Eq. \ref{apxeq:min_frequency} in Eq. \ref{apxeq:flow_frequency}. 

When $\boldsymbol{\Lambda}_i = \mathbf{I}$, following Eq. \ref{apxeq:flow_frequency}, we have the gradient flow with full-passing as
\begin{equation} \label{apxeq:flow_frequency_fullpass}
    \begin{aligned}
        &\frac{\mathbf{d}{\theta_{\tau}}}{\mathbf{d}{\tau}} = -\{\lambda_c \mathbb{E}_{t,c,\epsilon}[\omega(t)(\boldsymbol{\epsilon}_*(\mathbf{x}_t,t,\mathbf{y},\mathbf{c}) -\\
        & \boldsymbol{\epsilon}_{\phi}(\mathbf{x}_t,t,\mathbf{y},\mathbf{c}))\frac{\partial{\mathbf{x}_0}}{\partial{\theta}}] + \frac{1-\lambda_c}{\sigma^2} \cdot \nabla_{\theta} \Vert \mathbf{u}_0 - \mathbf{u}_y \Vert_2^2\}. \\
    \end{aligned}
\end{equation}
Following Lemma \ref{lemma:parseval}, we have
\begin{equation} \label{apxeq:freq_to_spatial}
    \Vert \mathbf{u}_0 - \mathbf{u}_y \Vert_2^2 = \Vert \mathbf{x}_0 - \mathbf{y} \Vert_2^2.
\end{equation}
Submitting Eq. \ref{apxeq:freq_to_spatial} into Eq. \ref{apxeq:flow_frequency_fullpass}, we have
\begin{equation}\label{apxeq:flow_frequency_fullpass_0}
    \begin{aligned}
        &\frac{\mathbf{d}{\theta_{\tau}}}{\mathbf{d}{\tau}} = - \{\lambda_c \mathbb{E}_{t,c,\epsilon}[\omega(t)(\boldsymbol{\epsilon}_*(\mathbf{x}_t,t,\mathbf{y},\mathbf{c}) -\\
        & \boldsymbol{\epsilon}_{\phi}(\mathbf{x}_t,t,\mathbf{y},\mathbf{c}))\frac{\partial{\mathbf{x}_0}}{\partial{\theta}}] + \frac{1-\lambda_c}{\sigma^2} \cdot \nabla_{\theta} \Vert \mathbf{x}_0 - \mathbf{y} \Vert_2^2 \}, \\
    \end{aligned}
\end{equation}
which is equivalent to Eq. \ref{apxeq:flow_spatial} in spatial domain when $\boldsymbol{\Lambda}_i = \mathbf{I}$.

\subsection{Proof of Proposition 3}
\label{subsec:proof_proposition_3}

\textbf{\textit{Proof} of Proposition 3 in the main manuscript.}

For each frequency component $\mathbf{u}_0^{(n,m)}$, the optimization objective is equivalent to that of $\mathbf{u}_0^i$ in the Proposition 2 in the main manuscript. So, following the proof process of Proposition 2 in Sec. \ref{subsec:proof_proposition_2}, we have the objective for each frequency component as
\begin{equation}\label{eq:optim_obj_multidiffusion_single}
    \begin{aligned}
        &\mathcal{T}_n^m = \frac{1 - \lambda_c}{\sigma^2}\mathbb{E}_{\theta\sim\mu}\left[\parallel \boldsymbol{\Lambda}_n^m(\mathbf{u}_0 - \mathbf{u}_y) \parallel_2^2 \right] \\
        &+ \lambda_c\mathbb{E}_{t, \mathbf{c}}[ \frac{\sigma_t}{\alpha_t}\omega(t) D_{KL}(q_t^{\mu}(\mathbf{u}_t^{(n,m)}|\mathbf{y},\mathbf{c}) \parallel p_t^*(\mathbf{u}_t^{(n,m)}|\mathbf{y},\mathbf{c}))], 
    \end{aligned}
\end{equation}
where $\mathbf{u}_t^{(n,m)} = \boldsymbol{\Lambda}_n^m\mathbf{u}_t$ and we denote it as $\mathcal{T}_n^m$. So the weighted optimization objective of Eq. \ref{eq:optim_obj_multidiffusion_single} can be represented as
\begin{equation}\label{eq:optim_obj_multidiffusion_0}
    \min_{\mu}\sum_{n=1}^{N}\sum_{m=1}^{M}k_n^m \mathcal{T}_n^m.
\end{equation}
Following Proposition 2 in the main manuscript, the guidance of Eq. \ref{eq:optim_obj_multidiffusion_single} for each frequency component is
\begin{equation} \label{eq:flow_multidiffusion_single}
    \begin{aligned}
        \mathcal{G}_n^m &= \lambda_c \mathbb{E}_{t,c,\epsilon}[\omega(t)\mathbf{V}\boldsymbol{\Lambda}_n^m\mathbf{V}^{\rm{T}}(\boldsymbol{\epsilon}_n -\boldsymbol{\epsilon}_{\phi_n})\frac{\partial{\mathbf{x}_0}}{\partial{\theta}}] \\
        &+ \frac{1-\lambda_c}{\sigma^2} \cdot \nabla_{\theta} \Vert \boldsymbol{\Lambda}_n^m(\mathbf{u}_0 - \mathbf{u}_y)\Vert_2^2, 
    \end{aligned}
\end{equation}
where $\boldsymbol{\epsilon}_{n}, \boldsymbol{\epsilon}_{\phi_{n}}$ are predicted noise of $p_0^n$ and its fine-tuned model. So the gradient flow of the weighted optimization objective Eq. \ref{eq:optim_obj_multidiffusion_0} is
\begin{equation} \label{eq:flow_multidiffusion_0}
    \frac{\mathbf{d}\theta_{\tau}}{\mathbf{d}\tau} = -\sum_{n=1}^{N}\sum_{m=1}^{M}k_n^m \mathcal{G}_n^m, 
\end{equation}
namely
\begin{equation} \label{eq:flow_multidiffusion_1}
    \begin{aligned}
        \frac{\mathbf{d}\theta_{\tau}}{\mathbf{d}\tau} = &-\sum_{n=1}^{N}\sum_{m=1}^{M}k_n^m \{\frac{1-\lambda_c}{\sigma^2} \cdot \nabla_{\theta} \Vert \boldsymbol{\Lambda}_n^m(\mathbf{u}_0 - \mathbf{u}_y)\Vert_2^2 \\
        &+ \lambda_c \mathbb{E}_{t,c,\epsilon}[\omega(t)\mathbf{V}\boldsymbol{\Lambda}_n^m\mathbf{V}^{\rm{T}}(\boldsymbol{\epsilon}_n - \boldsymbol{\epsilon}_{\phi_n})\frac{\partial{\mathbf{x}_0}}{\partial{\theta}}]\}.
    \end{aligned}
\end{equation}

It should be noted that we consider the hyper-parameters $\alpha_t, \sigma_t, \omega(t)$ of diffusion priors as the same since most of the current diffusion priors \citep{sd, zero123, pfd} are inherited from the Stable Diffusion \citep{sd}. So we merged these hyper-parameters in Eq. \ref{eq:flow_multidiffusion_1} for a more concise form. In practical utilization, if we need to use priors with different hyper-parameters, we can construct the personalized optimization objective replacing Eq. \ref{eq:optim_obj_multidiffusion_single}.

\section{Pseudo-code}
\label{sec:algo_code}

\subsection{Pseudo-code of Proposed Framework}
\label{subsec:code_framework}

We provide the pseudo-code of our theoretical framework in Sec. \textcolor{red}{4.2} of the main manuscript. \textbf{\textit{Our method is quite easy to implement and can be easily added to existing mainstream 3D generation frameworks}}, since we only need to make a few changes to the code of VSD \citep{vsd}. 
\begin{python}
params = generator.init()
diffusion_list = diffusion.load_models()
lora_list = add_lora(diffusion_list)
y = get_reference_image()
while not converged:
    t = sample_t(), c = sample_c()
    eps = sample_noise()
    x0 = generator(params, 
            <other arguments>...)
    u0 = dct(x0), uy = dct(y)
    xt = add_noise(x0, eps, t)
    g_theta = init_zero()
    #####################################
    # VSD: N = 1, k_n = 1 
    # (using single prior)
    #####################################
    g_phi_list = [init_zero()]
    for n = 1 to N:
        epshat_t = diffusion_list[i](xt)
        epslora_t = lora_list[i](xt)
        g_theta_n = grad(w_t * dot(
            sg[epshat_t - epslora_t], 
            x0,
        ), params)
        #################################
        # VSD: spatial domain 
        # g_theta_n += grad(
        #     MSE(x0, y),
        #     params,
        # )
        # g_theta += k_n * g_theta_n   
        #################################
        #################################
        # Ours: frequency domain              
        for m = 1 to M:
            ug_theta_n = dct(g_theta_n)
            ug_theta_nm = filtering(
                ug_theta_n, n, m)
            g_theta_nm = idct(ug_theta_nm)
            filter_u0 = filtering(u0, n, m)
            filter_uy = filtering(uy, n, m)
            g_theta_nm += grad(
                MSE(filter_u0, filter_uy), 
                params,
            )
            g_theta += k_nm * g_theta_nm
        #################################
        g_phi_list[n] = grad(
            MSE(epslora_t, eps), 
            lora_list[n],
        )
    params = update(params, g_theta)
    for n = 1 to N:
        lora_list[n] = update(
            lora_list[n], 
            g_phi_list[i],
        )
return params
\end{python}

\subsection{Pseudo-code of Proposed Pipeline}
\label{subsec:code_pipeline}

We provide the pseudo-code of our pipeline in Sec. \textcolor{red}{4.3} of the submitted main manuscript. The pseudo-code of the first stage of our pipeline is
\begin{python}
params = generator.init()
3d_diffusion = diffusion.load_models(3d)
y = get_reference_image()
while not converged:
    t = sample_t(), c = sample_c()
    eps = sample_noise()
    x0 = generator(params, 
            <other arguments>...)
    xt = add_noise(x0, eps, t)
    g_theta = 0.
    epshat_t_3d = 3d_diffusion(xt)
    g_theta_3d = grad(w_t * dot(
        sg[epshat_t_3d - eps],
        x0,
    ), params)
    g_theta_3d += grad(
        MSE(x0, y), 
        params,
    )
    g_theta += k_3D * g_theta_3d
    params = update(params, g_theta)
return params
\end{python}
The pseudo-code of the second stage of our pipeline is
\begin{python}
params = generator.geo_texture_from(
    first_stage
)
2d_diffusion = diffusion.load_models(2d)
3d_diffusion = diffusion.load_models(3d)
2d_lora = add_lora(2d_diffusion)
y = get_reference_image()
while not converged:
    t = sample_t(), c = sample_c()
    eps = sample_noise()
    x0 = generator(params, 
            <other arguments>...)
    u0 = dct(x0), uy = dct(y)
    xt = add_noise(x0, eps, t)
    g_theta = 0.
    epshat_t_2d = 2d_diffusion(xt)
    epshat_t_3d = 3d_diffusion(xt)
    epslora_t_2d = 2d_lora(xt)
    g_theta_3d = grad(w_t * dot(
        sg[epshat_t_3d - eps], 
        x0,
    ), params)
    g_theta_3d += grad(MSE(x0, y), params)
    g_theta += k_3d * g_theta_3d
    g_theta_2d = grad(w_t * dot(
        sg[epshat_t_2d - epslora_t_2d], 
        x0,
    ), params)
    ug_theta_2d = dct(g_theta_2d)
    ug_theta_2d_h = high_pass(ug_theta_2d)
    g_theta_2d_h = idct(ug_theta_2d_h)
    highpass_u0 = high_pass(u0)
    highpass_uy = high_pass(uy)
    g_theta_2d_h += grad(
        MSE(highpass_u0, highpass_uy), 
        params,
    )
    g_theta += k_2d * g_theta_2d_h
    g_phi_2d = grad(
        MSE(epslora_t_2d, eps), 
        2d_lora,
    )
    params = update(params, g_theta)
    2d_lora = update(2d_lora, g_phi_2d)
return params
\end{python}

\section{More Details and Hyper-parameters}
\label{sec:imp_details}

We use exactly the same settings for all cases and do not perform any object-centric optimization in our work.

\textbf{Camera settings}. Following \citep{vsd, magic123}, since the reference image is unposed, we set the camera parameters as follows. The sampling elevation range is $[-45^{\circ}, 45^{\circ}]$ with setting the elevation of the front view is $0^{\circ}$. The sampling azimuth range is $[-180^{\circ}, 180^{\circ}]$ with setting the azimuth of the front view is $0^{\circ}$. The FOV of the camera is $40^{\circ}$ and the camera is placed $2.5$ meters from the coordinate origin. 

\textbf{Hyper-parameters and experimental details}. We use Zero-1-to-3-xl \citep{zero123} model (Diffusers~\citep{diffusers} version) for the 3D prior and Prompt Free Diffusion \citep{pfd} for the 2D prior. Our implementation is based on the Threestudio repo \citep{threestudio}. The CFG of 3D prior $\boldsymbol{\epsilon}_{3D}$ is $3.0$. The CFG of 2D prior $\boldsymbol{\epsilon}_{2D}$ is $2.0$, and the CFG of its LoRA model $\boldsymbol{\epsilon}_{\phi_{2D}}$ is $1.0$. The weight of 3D prior is $k_{3D} = 1.0$ and the weight of 2D prior is $k_{2D} = 3.0$. We set $\lambda_c = 0.0005, \sigma=1.0$ in both two stages. During the NeRF training of the first stage, the rendering resolution is $128 \times 128$. The learning rate of hash grid encoder is $0.01$. Following \citep{vsd}, the time schedule of 3D prior follows a time-annealed strategy from $[0.02, 0.98]$ in the first 5K training steps to $[0.02, 0.50]$ with 10K training steps in total. Following \citep{magic123}, we also use the mask loss between the rendered front view and the reference image, and the normal smoothness regularization. During the DMTet training of the first stage, the rendering resolution is $512 \times 512$ and the time schedule is $[0.02, 0.50]$ with 5K training steps in total. We transfer the normal smoothness to the Laplacian smoothness and normal consistency regularization during the DMTet training of the first stage. In the second stage, the rendering resolution is still $512 \times 512$ and we optimize the textures with fixed geometry following \citep{vsd, z_fantasia}. The time schedule is $[0.02, 0.50]$ with 5K training steps in total and the frequency bound rate of 2D prior in the second stage is $0.6$. The learning rate of the fine-tuned LoRA $\boldsymbol{\epsilon}_{\phi_{2D}}$ for 2D prior is $0.0001$ following \citep{vsd}. We use the Adam \citep{adam} optimizer for both stages. For some qualitative experiments, we show the cropped images for better visualization without useless white background.

\textbf{Reproduction of other methods}. We reproduce other methods as closely as possible with the original settings in the other method papers. However, for a fairer comparison, there are slight differences from the original settings, and the details are as follows.  For experimental results of the generation of different diffusion priors, Zero-1-to-3 \citep{zero123} can only generate images of 256 resolution, while PFD \citep{pfd} can only generate images of 512 resolution due to the bias in the prior training process. Thus, we resize the generated images of PFD from 512 to 256 resolution for the pre-experiments with Zero-1-to-3 in Sec. 4.1. We resize the generated images of Zero-1-to-3 from 256 to 512 resolution for further comparisons. We use the half-precision setting (fp16) for all methods (\eg RealFusion \citep{realfusion} uses fp16 while Magic123 \citep{magic123} uses fp32. However, the results of Magic123 using fp16 show little difference with fp32, shown in Tab. \textcolor{red}{1} in the submitted main manuscript and the results reported by Magic123). For Shap-E \citep{shape}, we fill the background of generated novel views from black to white. 
For NeuralLift \citep{neurallift}, we do not render the background of 3D representations during training and testing, and use the same monocular depth as our preprocessed dataset for fair comparisons with other methods. Besides, although we evaluate the generation quality of NeuralLift on RTX 3090, to maintain a consistent environment when testing computational costs, we also test NeuralLift's training time and peak VRAM on a 48GB RTX4090. We found that the VRAM usage increases significantly on the 48GB RTX4090, likely due to the method's implementation adaptively occupies allocatable memory.
For Magic123 \citep{magic123}, we convert the batch size from 16 to 2 with 8 gradient accumulation steps when performing textual inversion.
We also keep this setting when evaluating computational costs.
The reproduction details of feed-forward methods are shown below.
We reproduce these feed-forward methods basically unchanged, but there are a few caveats worth noting.
Most methods support adjusting the number of output views and background color, so we make necessary adjustments to adopt the test settings of this work. 
However, some methods have loose constraints on the input image and prioritize overall generation quality. Therefore, we do not report front-view reconstruction quality metrics for these methods. 
And, in qualitative comparisons, we manually select images with similar perspectives for these methods to facilitate visual quality assessment.
For Wonder3D~\citep{wonder3d}, we evaluate it using images directly generated from 6 viewpoints. Other similar methods that can only generate images from a fixed number of views are also directly evaluated using the generated images of a fixed number.
Since ImageDream~\citep{imagedream} is a step in LGM~\citep{lgm}, the result of ImageDream is aligned with the intermediate result of LGM.
SyncDreamer~\citep{syncdreamer} provides NeuS~\citep{neus} and NeRF~\citep{nerf} for post-training. We use NeuS for post-training by default, and use NeRF in scenarios where NeuS optimization fails.
Note that SyncDreamer~\citep{syncdreamer} performs post-training of neural differential optimization after obtaining multi-view images, so the generation time and VRAM are affected by post-training and are relatively high.
Besides, some methods have a sudden increase in VRAM, \eg~3DTopia-XL~\citep{3dtopia}, resulting in a relatively high corresponding peak VRAM.

\bibliographystyle{spbasic}      
\bibliography{main}   

\section*{Data Availability Statement}

The datasets generated during and/or analysed during the current study are available from the corresponding author upon reasonable request. 

\balance
\end{document}